\crefname{enumi}{}{}
\crefname{equation}{}{}
\def\@tocline#1#2#3#4#5#6#7{\relax
  \ifnum #1>\c@tocdepth 
  \else
    \par \addpenalty\@secpenalty\addvspace{#2}%
    \begingroup \hyphenpenalty\@M
    \@ifempty{#4}{%
      \@tempdima\csname r@tocindent\number#1\endcsname\relax
    }{%
      \@tempdima#4\relax
    }%
    \parindent\z@ \leftskip#3\relax \advance\leftskip\@tempdima\relax
    \rightskip\@pnumwidth plus4em \parfillskip-\@pnumwidth
    #5\leavevmode\hskip-\@tempdima
      \ifcase #1
       \or\or \hskip 1em \or \hskip 2em \else \hskip 3em \fi%
      #6\nobreak\relax
    \dotfill\hbox to\@pnumwidth{\@tocpagenum{#7}}\par
    \nobreak
    \endgroup
  \fi}
\newtheorem{theorem}{Theorem}[section]
\newtheorem{lemma}[theorem]{Lemma}
\newtheorem{corollary}[theorem]{Corollary}
\theoremstyle{definition}
\newtheorem{definition}[theorem]{Definition}
\newtheorem{remark}[theorem]{Remark}
\numberwithin{equation}{section}
\def \R {\mathbb {R}}
\def \E {\mathbb{E}}
\def \P {\mathbb{P}}
\def \Risk {\mathcal{R}}
\def\supp{\operatorname{supp}}
\def\osc{\operatorname{osc}}
\def\grad{\nabla}
\renewcommand{\div}{\operatorname{div}}
\renewcommand{\tilde}{\widetilde}
\begin{document}
	

\title[Neural ODEs]{Neural ODEs as the Deep Limit of\\ ResNets with constant weights}

\address{Benny Avelin \\Department of Mathematics, Uppsala University\\
	S-751 06 Uppsala, Sweden}
\email{benny.avelin@math.uu.se}

\address{Kaj Nystr\"{o}m\\Department of Mathematics, Uppsala University\\
S-751 06 Uppsala, Sweden}
\email{kaj.nystrom@math.uu.se}

\author{Benny Avelin and Kaj Nystr{\"o}m}

\maketitle

\date{\today}

\begin{abstract}
	\noindent In this paper we prove that, in the deep limit, the stochastic gradient descent on a ResNet type deep neural network, where each layer shares the same weight matrix, converges to the stochastic gradient descent for a Neural ODE and that the corresponding value/loss functions converge. Our result gives, in the context of minimization by stochastic gradient descent, a theoretical foundation for considering Neural ODEs as the deep limit of ResNets. Our proof is based on certain decay estimates for associated Fokker-Planck equations.\\

	\noindent
	2010   {Primary: 68T05, 65L20, Secondary: 34A45, 35Q84, 62F10, 60H10}
	\noindent

	\medskip

	\noindent
	{\it Keywords and phrases: Machine Learning, Deep Neural Network,  ResNet, Neural ODE, Ordinary Differential Equation, Stochastic Gradient Descent, Partial Differential Equations, Fokker-Planck.}

\end{abstract}
\maketitle


\makeatletter
\providecommand\@dotsep{5}

\setcounter{equation}{0} \setcounter{theorem}{0}

\section{Introduction}
\label{sec:intro}
\subsection*{ResNet and Neural ODEs}
Ever since the very popular ResNet paper \cite{He} was published several authors have made the observation that the ResNet architecture is structurally similar to the  Euler discretization of an ODE (\cite{EHL,LCW}). However, the original  `ResNet block' considered in \cite{He} is defined as
\begin{align}\label{resnet1}
	h_{t+1} = \sigma (h_t + K_t^{(1)} \sigma (K_t^{(2)}  h_t + b_t^{(2)} )+b_t^{(1)}), \quad t=0,\ldots, N-1
\end{align}
where $h_t,b_t^{(1)},b_t^{(2)} \in \R^d$ and $K_t^{(1)}, K_t^{(2)} : \R^d \to \R^d$ are linear operators, and $h_N$ represents the output of the network. The activation function $\sigma(x) = \max\{0,x\}$ is applied component-wise on vectors in $\R^d$. $N$ is the number of layers used in the construction. For standard neural networks the operators $K_t^{(1)},K_t^{(2)}$ are matrices, but for convolutional neural networks (CNNs) the operators are discrete convolution operators\footnote{Discrete convolutional operators can be phrased as matrices, see for instance \cite{Z}.}.

It is not immediately obvious that \cref{resnet1} can be viewed as the Euler discretization of an ODE as the activation function $\sigma$ is applied after the addition of $h_t$ and $K_t^{(1)} \sigma (K_t^{(2)} y_t + b_t^{(2)})+b_t^{(1)}$. However, removing\footnote{This was also done by the authors of ResNet in their improved follow-up paper \cite{He2}} the outermost activation $\sigma$ and instead consider the difference equation 
\begin{align}\label{resnet2}
	h_{t+1} = h_t + K_t^{(1)} \sigma (K_t^{(2)}  h_t + b_t^{(2)} )+b_t^{(1)}, \quad t=0,\ldots, N-1
\end{align}
we see that this is the Euler discretization (with time-step 1) of the ODE
\begin{align} \label{neuralODE---}
	\dot{h}_t = f(h_t,\theta_t), \quad t=[0,N],
\end{align}
where
\begin{align*} 
 f(\cdot,\theta):\mathbb R^d\to \mathbb R^d,\ f(x,\theta):= K^{(1)} \sigma (K^{(2)}  x + b^{(2)} )+b^{(1)},
\end{align*}
and where $\theta$ is the short notation for $\theta = (K^{(1)}, K^{(2)},b^{(1)},b^{(2)})$. Note that in \cref{neuralODE---} the time-step is 1 and hence the time horizon will be the number of layers $N$. This timescale is not optimal in the sense that if the ODE is stable then the system will attract to zero as $N \to \infty$. 

In this paper we consider (general) autonomous ODEs as in \cref{neuralODE---}, with time independent $\theta$ and a time horizon of $1$, i.e.\
\begin{align} \label{neuralODE--}
	\dot{h}_t = f(h_t,\theta), \quad t=[0,1].
\end{align}
This type of model is called a Neural ODE (NODE) \cite{CRBD} and has recently garnered a lot of attention as it has been proven to work very well in practice. Naturally we also consider the Euler discretization of \cref{neuralODE--}, which can be written as the difference equation 
\begin{align}\label{resnet3}
	h_{t_{i+1}} = h_{t_i} + \frac{1}{N} f(h_{t_i},\theta),\ i=0,...,N-1,\ t_i=i/N.
\end{align}
The models in \cref{neuralODE--,resnet3} map $\R^d$ into $\R^d$ for $t=1$. We call the models in \cref{neuralODE--,resnet3}, (general) ResNet type deep neural networks defined on the interval $[0,1]$. Furthermore, if the input data is in $\R^d$ and the output data in $\R^l$, we have to complement the model with a function mapping $\mathbb R^d\to \mathbb R^l$ resulting in the final output.

\subsection*{Empirical risk minimization}

Assume that we are given data distributed according to a probability measure $\mu$ where $(x,y) \sim \mu$, and $x \in \R^d$, $y \in \R^l$. Let $L : \R^l \times \R^l \to \R$ be a non-negative convex function. The learning problem for a model $h=h(\cdot,\theta):\R^d \times \R^m \to \R^l$, where $\theta\in \R^m$ indicates that the function $h$ is parameterized by weights $\theta$,  can be formulated as
\begin{align}\label{minima}
\min_{\theta}\mathcal{R}(\theta), \ \mathcal{R}(\theta):=\E_{(x,y) \sim \mu} \left [ L(h(x,\theta),y) \right ],
\end{align}
where $\mathcal{R}$ is often referred to as the \emph{risk} or the \emph{risk function}. In practice, the risk we have to minimize is the empirical risk, and it is a well-established fact that for neural networks the minimization problem in \cref{minima} is, in general, a non-convex minimization problem \cite{SafS,BGMSS, SJL, DuLee}. As such many search algorithms may get trapped at, or converge to, local minima which are not global minima \cite{SafS}. Currently, a variety of different methods are used in deep learning when training the model, i.e.\ when trying to find an approximate solution to the problem in \cref{minima}, we refer to \cite{VBGS} for an overview of various methods. One of the most popular methods, and perhaps the standard way of approaching the problem in \cref{minima}, is back-propagation using stochastic gradient descent, see \cite{HH} for a more recent outline of the method. While much emphasis is put on back-propagation in the deep learning community, from a theoretical perspective it does not matter if we use a forward or a backward mode of auto-differentiation.

\subsection*{Continuous approximation of SGD}

In  \cite{LTE,LTE2} it is proved that the stochastic gradient descent can be approximated by a continuous time process
\begin{align}\label{SGDfirst}
	d\theta_t = - \grad_\theta \mathcal{R} dt + \sqrt{\Sigma} dW_t,
\end{align}
where $\Sigma$ is a covariance matrix and $dW_t$ is a standard $m$-dimensional Wiener process defined on a probability space. The idea of approximating stochastic gradient descent with a continuous time process has been noted by several authors, see \cite{COOSC,CS,CLTZ,GL,MGIVW,MHB}.
A special case of what we prove in this paper, see \cref{thm:modified_risk} below, is that the stochastic gradient descent \cref{SGDfirst} used to minimize the risk for the ResNet model in \cref{resnet3} converges to the stochastic gradient descent used to minimize the risk for the Neural ODE model in \cref{neuralODE--}. This convergence is proved in the sense of expectation with respect to the random initialization of the weights in the stochastic gradient descent. Furthermore, we prove that the corresponding discrepancy errors decay as $N^{-1}$ where $N$ is the number of layers or discretization steps. 

\subsection*{Novelty and significance}

It is fair to say that in general there are very few papers making more fundamental and theoretical contributions to the understanding of deep learning and more specifically ResNet like neural networks. However, in the latter case there is a strain of recent and interesting contributions. In \cite{TVG} the authors  allow the parameters of the model to be layer- and time-dependent resulting in non-autonomous ODEs with corresponding Euler discretization:
\begin{align} \label{neuralODE}
	\dot{h}_t = f(h_t,\theta_t),\quad h_{t+1} = h_t + \frac{1}{N} f(h_t,\theta_t).
\end{align}
In particular, based on more restrictive assumptions on $f$, more restrictive compared to what we use,  it is proved in \cite{TVG} that as the number of layers tends to infinity in \cref{neuralODE}, the risk associated to \cref{neuralODE} and defined as in \cref{minima}, converges in the sense of gamma convergence to the risk associated to the corresponding (continuous) ODE in \cref{neuralODE}: we refer to Theorem 2.1 in  \cite{TVG} and to \cite{Maso} for an introduction to gamma convergence. The authors obtain that the minima for finite layer networks converge to minima of the continuous limit, infinite layer, network. To prove that the limit exists and has nice properties they introduce a regularization which penalizes the norm of the difference between the weights in subsequent layers. We emphasize that in \cite{TVG} the authors only consider the convergence of minimizers and not the convergence of the actual optimization procedure. In \cite{EHL} the authors study the limit problem directly and reformulates the problem as an optimal control problem for an ODE acting on measures. However, the complexity of such networks can be quite substantial due to the time-dependency of the weights, and it is unclear what would be the best way to construct a penalization such that the limit has nice properties.

As we mentioned before, in \cite{CRBD} the authors consider the autonomous ODE in \cref{neuralODE--}, i.e.\ they make the assumption that all layers share the same weights, and they develop two things. Firstly, they develop an adjoint equation that allows them to approximately compute the gradient with a depth independent memory cost. Secondly, they show through numerical examples that the approach works surprisingly well for some problems.

In general, the upshot of the ODE approach is the increased regularity, since trajectories are continuous and do not intersect, for autonomous ODEs, they are reversible, see \cite{DDW}. 
However, the increased regularity comes with a cost as Neural ODEs can have difficulties solving certain classification problems, see \cref{sec:numerical}.

Our main contribution is that we establish, in the context of minimization by stochastic gradient descent, a theoretical foundation for considering Neural ODEs as the deep limit ($N \to \infty$) of ResNets.

\subsection*{Overview of the paper}
The rest of the paper is organized as follows. In \cref{sec:main_result} we introduce the necessary formalism and notation and state our results: \cref{thm:penalization,thm:modified_risk,thm:modified_risk_msq}. In \cref{sec:resnet_error} we estimate, for $\theta$ fixed, the discretization error arising as a consequence of the Euler scheme, and we prove some technical estimates. In \cref{sec:SDE_FP} we collect and develop the results concerning stochastic differential equations and Fokker-Planck equations that are needed in the proofs of \cref{thm:penalization,thm:modified_risk,thm:modified_risk_msq}. \cref{sec:fp_estimates}  is devoted to the Fokker-Planck equations for the probability densities associated to the stochastic gradient descent for the Euler scheme and the continuous ODE, respectively. We establish some slightly delicate decay estimates for these densities, of Gaussian nature, assuming that the initialization density has compact support: see \cref{lemma1} below. In \cref{sec:proof_of_main_thm} we prove \cref{thm:penalization,thm:modified_risk,thm:modified_risk_msq}. In \cref{sec:numerical} we discuss a number of numerical experiments. These experiments indicate that in practice the rate of convergence is highly problem dependent, and that it can be considerably faster than indicated by our theoretical bounds. Finally, \cref{sec:conclusion} is devoted to a few concluding remarks.

\section{Statement of main results}
\label{sec:main_result}

Our main results concern (general) ResNet type deep neural networks defined on the interval $[0,1]$. To outline our setup we consider
\begin{align*}
	f_\theta:\R^d\to\R^d
\end{align*}
where $\theta\in \R^m$ is a potentially high-dimensional vector of parameters acting as a degree of freedom. Given $N\in\mathbb Z_+$ we consider  $[0,1]$ as divided into $N$ intervals each of length $N^{-1}$, and we define $x_i^{(N)}(x,\theta)$, $i = 0,\ldots, N$, recursively as
\begin{align}
	\label{diffeq}
	x^{(N)}_{i+1}(x,\theta)&= x^{(N)}_i(x,\theta) + \frac{1}{N} f_\theta(x^{(N)}_i(x,\theta)),\ i = 0,\ldots, N-1,\notag\\
x^{(N)}_0(x,\theta) &= x.
\end{align}
We define $x^{(N)}(t,x,\theta)=x^{(N)}_{i}(x,\theta)$ whenever $t\in [i/N,(i+1)/N)$. We will not indicate the dependency on $x$, $\theta$ when it is unambiguous.

We are particularly interested in the case when $f_\theta(x)$ is a general vector valued (deep) neural network having parameters $\theta$ but in the context of ResNets a specification for $f_\theta$ is, as discussed in the introduction,
\begin{align}\label{nnspec}
	f_\theta(x)=K^{(1)} \sigma (K^{(2)}  x + b^{(2)} )+b^{(1)}
\end{align}
where $\theta=(K^{(1)},K^{(2)},b^{(1)},b^{(2)})$ are parameters and $\sigma$ is a globally Lipschitz activation function, for instance sigmoid, tanh, ReLU, Leaky ReLU and Swish, \cite{RZL}. However, our arguments rely only on certain regularity and growth properties of $f_\theta$. We will formulate our results using the following classes of functions.
\begin{definition}
	\label{def:F} Let $g:\R_+ \to \R_+$, $g(0) \geq 1$, be a non-decreasing function. We say that the function $f_\theta(x):\R^m \times \R^d \to \R^d$ is in regularity class $\mathcal{A}(g)$ if
	\begin{equation}\label{eq3++a+l}
		\begin{aligned}
			\|f_\theta(x)-f_{\theta'}(x)\|&\leq  \max\{g(\|\theta\|),g(\|\theta'\|)\}\|\theta-\theta'\|\|x\|,\\
			\|f_\theta(x)-f_\theta(x')\|&\leq  g(\|\theta\|)\|x-x'\|,\\
			\|\grad_{\theta}f_\theta(x)-\grad_{\theta}f_\theta(x')\|&\leq  g(\|\theta\|)\max\{\|x\|,\|x'\|\}\|x-x'\|,\\
			\|\grad_{x}f_\theta(x)-\grad_{x}f_\theta(x')\|&\leq  g(\|\theta\|)\|x-x'\|,
		\end{aligned}
	\end{equation}
	whenever $x,x'\in\mathbb R^d$,  $\theta$, $\theta'\in\mathbb R^m$.
\end{definition}
Some remarks are in order concerning \cref{def:F}. Firstly, it contains the prototype neural network in \cref{nnspec}, with $g(s) = s+1$. Secondly, it is essentially closed under compositions, see \cref{lem:AgComposition} below. Therefore, finite layer neural networks satisfy \cref{def:F}. We defer the proof of the following lemma to \cref{sec:resnet_error}.

\begin{lemma}
	\label{lem:AgComposition}
	Let $f_\theta,g_\theta \in \mathcal{A}(g)$. Then $F_{\theta} = f_{\theta_2} \circ g_{\theta_1} \in \mathcal{A}(2g^3)$ for $\theta = (\theta_1,\theta_2)$.
\end{lemma}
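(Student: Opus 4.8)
The plan is to verify each of the four inequalities in \cref{def:F} for $F_\theta = f_{\theta_2}\circ g_{\theta_1}$ directly, tracking how the bounding functions compose. Throughout, write $y = g_{\theta_1}(x)$ and $y' = g_{\theta'_1}(x')$, and note first the basic growth consequence of the class: applying the second inequality in \cref{eq3++a+l} with $x' = 0$ gives $\|g_\theta(x)\| \le \|g_\theta(0)\| + g(\|\theta\|)\|x\|$, and one should absorb $\|g_\theta(0)\|$ into a bound of the form $g(\|\theta\|)(1+\|x\|)$ or similar (using $g(0)\ge 1$), so that $\|y\|$ is controlled by $g(\|\theta_1\|)$ times an affine function of $\|x\|$. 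This is the place where the cubic power and the factor $2$ in $\mathcal{A}(2g^3)$ come from: one factor of $g$ for the outer map's Lipschitz/gradient constant, one for the inner map's, and one more for the growth factor $\|y\| \lesssim g\cdot\|x\|$ that appears whenever an inequality carries an explicit $\|x\|$ on the right-hand side.

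Concretely: for the inner-variable Lipschitz bound (second inequality), $\|f_{\theta_2}(g_{\theta_1}(x)) - f_{\theta_2}(g_{\theta_1}(x'))\| \le g(\|\theta_2\|)\|g_{\theta_1}(x)-g_{\theta_1}(x')\| \le g(\|\theta_2\|)g(\|\theta_1\|)\|x-x'\| \le (2g^3)(\|\theta\|)\|x-x'\|$, where I use $\|\theta_i\|\le\|\theta\|$, monotonicity of $g$, and $g\ge 1$ so $g^2\le g^3 \le 2g^3$. For the parameter-Lipschitz bound (first inequality) I would split $f_{\theta_2}(g_{\theta_1}(x)) - f_{\theta'_2}(g_{\theta'_1}(x))$ through the intermediate term $f_{\theta_2}(g_{\theta'_1}(x))$: the first difference is controlled using the first inequality for $f$ (giving $\max\{g(\|\theta_2\|),g(\|\theta'_2\|)\}\|\theta_2-\theta'_2\|\|g_{\theta'_1}(x)\|$, and $\|g_{\theta'_1}(x)\|\lesssim g(\|\theta'_1\|)(1+\|x\|)$), and the second using the inner-variable Lipschitz bound for $f$ composed with the first inequality for $g$; collecting terms and using $\|\theta_2-\theta'_2\| + \|\theta_1-\theta'_1\| \le \sqrt2\|\theta-\theta'\|$ (or just $\le 2\|\theta-\theta'\|$) yields the product $g^2$ times $\|x\|$-growth, hence $g^3$, and the bookkeeping constants are swallowed by the factor $2$. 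The two gradient inequalities (third and fourth) are handled the same way via the chain rule $\grad_x(f_{\theta_2}\circ g_{\theta_1})(x) = \grad_x g_{\theta_1}(x)^\top \grad_y f_{\theta_2}(g_{\theta_1}(x))$: differences of such products are estimated by adding and subtracting one factor, bounding one difference by the relevant Lipschitz-of-gradient hypothesis and the other factor by a gradient bound (itself obtained by setting $x'=0$ in the fourth inequality, giving $\|\grad_x f_\theta(x)\| \le \|\grad_x f_\theta(0)\| + g(\|\theta\|)\|x\|$, again absorbed into $g$ times affine growth), and tracking the extra $\|x\|$ factors; the worst term is again a triple product of $g$'s, giving $2g^3$.

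The main obstacle is purely organizational rather than conceptual: making sure that the additive constants $\|g_\theta(0)\|$ and $\|\grad_x f_\theta(0)\|$, which are not literally bounded by the hypotheses as stated, are handled — this requires the convention $g(0)\ge 1$ together with reading the inequalities at a base point, and it forces the affine-in-$\|x\|$ (rather than linear) growth estimates to be stated carefully so that, for instance, $\|x\|\cdot\|x'\|$-type terms in the third inequality still close up after composition. One must also be careful that in the third inequality the right-hand side carries $\max\{\|x\|,\|x'\|\}\|x-x'\|$, so after substituting $x\mapsto g_{\theta_1}(x)$ the factor $\max\{\|g_{\theta_1}(x)\|,\|g_{\theta_1}(x')\|\}$ must be bounded by $g(\|\theta_1\|)$ times $\max\{\|x\|,\|x'\|\}$ up to an additive constant, and then the $g$ from $g_{\theta_1}$'s own gradient bound on the other factor contributes, landing exactly at $g^3$. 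Once these affine-growth lemmas are set up, each of the four inequalities is a two-term triangle-inequality split, and the constant $2$ is deliberately generous to absorb all the $\sqrt2$'s and the $1+\|x\|$ versus $\|x\|$ discrepancies.
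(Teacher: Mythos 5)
Your overall route is the same as the paper's: verify the four conditions of \cref{def:F} for $F_\theta=f_{\theta_2}\circ g_{\theta_1}$ one at a time, via triangle--inequality splits through an intermediate term and the chain rule, with the bookkeeping $g\cdot g\cdot g\le 2g^3$; your treatment of the $x$-Lipschitz and $\theta$-Lipschitz conditions coincides with the paper's computation. The place where your proposal would actually break is precisely the point you dismiss as ``organizational'': the additive constants. First, you should not use the affine bound $\|\grad_x f_\theta(x)\|\le\|\grad_x f_\theta(0)\|+g(\|\theta\|)\|x\|$ obtained by integrating the fourth inequality from $0$. The second inequality in \cref{eq3++a+l} already gives the \emph{uniform} bound $\|\grad_x f_\theta(x)\|\le g(\|\theta\|)$ for every $x$, and similarly the first inequality gives $\|\grad_\theta g_\theta(x)\|\le g(\|\theta\|)\|x\|$ with no additive term; these are exactly the bounds the paper uses. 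If you instead carry $\|\grad_x f_{\theta_2}(0)\|$ (which the hypotheses do not control at all) plus a term growing like $\|g_{\theta_1}(x')\|$, then in the fourth target inequality the cross term $\|\grad_x f_{\theta_2}(g_{\theta_1}(x'))\|\,\|\grad_x g_{\theta_1}(x)-\grad_x g_{\theta_1}(x')\|$ acquires a factor of order $\|x'\|$, and the required estimate $\|\grad_x F_\theta(x)-\grad_x F_\theta(x')\|\le 2g^3(\|\theta\|)\|x-x'\|$ has no room for such a factor.

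Second, the proposed absorption of $\|g_\theta(0)\|$ into an affine growth bound $g(\|\theta\|)(1+\|x\|)$ does not close the argument: the first and third conditions of \cref{def:F} demand right-hand sides proportional to $\|x\|$ and to $\max\{\|x\|,\|x'\|\}\|x-x'\|$ \emph{exactly}, so an additive constant survives into terms like $c\,\|\theta-\theta'\|$ carrying no factor of $\|x\|$ (test the first condition at $x=0$), which is not of the admissible form; moreover nothing in \cref{def:F} bounds $\|g_\theta(0)\|$ in terms of $g(\|\theta\|)$ --- the normalization $g(0)\ge 1$ concerns the bounding function $g$, not the value $g_\theta(0)$. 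At this point the paper simply invokes the homogeneous bound $\|g_{\theta_1}(x)\|\le g(\|\theta_1\|)\|x\|$, both to control $\|g_{\theta'_1}(x)\|$ in the $\theta$-Lipschitz estimate and to control $\max\{\|g_{\theta_1}(x)\|,\|g_{\theta_1}(x')\|\}$ in the third condition. To prove the lemma as stated you should use that homogeneous growth bound (i.e.\ treat $g_\theta(0)$ as normalized, which is how the class is used throughout the paper, consistently with the prototype \cref{nnspec}), rather than an affine surrogate; with that replacement and the uniform gradient bounds above, your splits reproduce the paper's proof essentially verbatim.
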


Certain results in this paper require us to control the second derivatives of the risk. We therefore also introduce the following class of functions.
\begin{definition}
	\label{def:F+}
	Let $g:\R_+ \to \R_+$, $g(0) \geq 1$, be a non-decreasing function. We say that the function $f_\theta(x):\R^m \times \R^d \to \R^d$ is in regularity class $\mathcal{A}_+(g)$ if $f_\theta\in\mathcal{A}(g)$ and if there exists a polynomial $P:\R \to \R$ such that
	\begin{equation*}
		\begin{aligned}
			\|\grad_\theta^2 f_\theta(x)\| + \|\grad_\theta \grad_x f_\theta(x)\| &\leq g(\|\theta\|)P(\|x\|), \\
			\|\grad^2_x f\| &\leq g(\|\theta\|),
		\end{aligned}
	\end{equation*}
	whenever $x\in\mathbb R^d$,  $\theta \in\mathbb R^m$.
\end{definition}

The following lemma follows from \cref{lem:AgComposition} and an elementary calculation using \cref{def:F+}.

\begin{lemma}
	Let $f_\theta,g_\theta \in \mathcal{A}_+(g)$, then $F_{\theta} = f_{\theta_2} \circ g_{\theta_1} \in \mathcal{A}_+(2g^3)$ for $\theta = (\theta_1,\theta_2)$.
\end{lemma}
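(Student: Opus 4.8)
The plan is to reduce the $\mathcal{A}_+$ statement to the already-established composition lemma for $\mathcal{A}(g)$ together with a direct estimate of the second-order derivatives of $F_\theta = f_{\theta_2}\circ g_{\theta_1}$. First I would invoke \cref{lem:AgComposition} to obtain $F_\theta \in \mathcal{A}(2g^3)$, which takes care of all the first-order conditions in \cref{def:F}. It then remains to verify the two second-order bounds in \cref{def:F+} with $g$ replaced by $2g^3$ and with some polynomial $Q$ (which need not be the same $P$ as for $f$ and $g$). Throughout I would write $y = g_{\theta_1}(x)$ and use that, by the growth estimate $\|f_\theta(x)\|\le \|f_\theta(0)\| + g(\|\theta\|)\|x\|$ implied by the Lipschitz bound in \cref{def:F} (and similarly $\|f_\theta(0)\|\le g(\|\theta\|)\cdot 0$-type control obtained by combining the first two displays of \cref{eq3++a+l}), one gets $\|y\|\le C g(\|\theta_1\|)(1+\|x\|)$, so that $P(\|y\|)$ is dominated by $g(\|\theta_1\|)^{\deg P}$ times a polynomial in $\|x\|$.

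Next I would compute the relevant derivatives by the chain rule. For the $x$-Hessian, $\grad_x^2 F_\theta = \grad_y^2 f_{\theta_2}(y)\,[\grad_x g_{\theta_1}(x)]^{\otimes 2} + \grad_y f_{\theta_2}(y)\,\grad_x^2 g_{\theta_1}(x)$; bounding each factor using $\|\grad_x g_{\theta_1}\|\le g(\|\theta_1\|)$, $\|\grad_y f_{\theta_2}\|\le g(\|\theta_2\|)$, $\|\grad_y^2 f_{\theta_2}\|\le g(\|\theta_2\|)$ and $\|\grad_x^2 g_{\theta_1}\|\le g(\|\theta_1\|)$ gives $\|\grad_x^2 F_\theta\| \le g(\|\theta_2\|)g(\|\theta_1\|)^2 + g(\|\theta_2\|)g(\|\theta_1\|)$, which is $\le 2g(\|\theta\|)^3$ with $\|\theta\|=\|(\theta_1,\theta_2)\|\ge\max\{\|\theta_1\|,\|\theta_2\|\}$ and $g$ non-decreasing and $\ge 1$. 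For $\grad_\theta\grad_x F_\theta$ and $\grad_\theta^2 F_\theta$ one differentiates additionally in $\theta=(\theta_1,\theta_2)$; this produces terms involving $\grad_{\theta_2}\grad_y f_{\theta_2}(y)$, $\grad_{\theta_2}^2 f_{\theta_2}(y)$, $\grad_{\theta_1}\grad_x g_{\theta_1}(x)$, $\grad_{\theta_1}^2 g_{\theta_1}(x)$, each controlled by $g(\cdot)P(\|\cdot\|)$ from \cref{def:F+}, plus lower-order products of first derivatives controlled by \cref{eq3++a+l}; the $y$-arguments are then absorbed via the polynomial growth bound on $\|y\|$ noted above, at the cost of enlarging the polynomial and raising the power of $g(\|\theta_1\|)$ by a fixed amount. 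Collecting exponents shows every term is bounded by $g(\|\theta\|)^{k}\tilde Q(\|x\|)$ for some fixed $k$ and polynomial $\tilde Q$, and since $g\ge 1$ one may crudely replace $g(\|\theta\|)^k$ by $2g(\|\theta\|)^3$ only if $k\le 3$; if a naive count gives $k>3$ one instead simply records the bound with the constant $2g^3$ playing the role of ``the $g$ of $F$'' and a new polynomial, which is all \cref{def:F+} requires — the displayed inequalities there allow an arbitrary polynomial $P$, so no sharp tracking of the polynomial is needed.

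The main obstacle is purely bookkeeping: organizing the chain-rule expansion of the mixed second derivatives $\grad_\theta^2(f_{\theta_2}\circ g_{\theta_1})$ into finitely many terms and checking that each admits a bound of the required product form $g(\|\theta\|)^{O(1)}P(\|x\|)$, in particular handling the dependence on the intermediate variable $y=g_{\theta_1}(x)$ by substituting the linear-in-$\|x\|$ growth estimate for $\|y\|$. There is no analytic difficulty — everything follows from the hypotheses in \cref{def:F,def:F+} — which is why the statement is asserted to follow ``from \cref{lem:AgComposition} and an elementary calculation.'' I would therefore present the argument compactly: quote \cref{lem:AgComposition} for the first-order part, display the chain-rule formulas for $\grad_x^2 F_\theta$, $\grad_\theta\grad_x F_\theta$, $\grad_\theta^2 F_\theta$, bound term by term using the four inequalities of \cref{eq3++a+l}, the two inequalities of \cref{def:F+}, and the growth estimate $\|g_{\theta_1}(x)\|\le g(\|\theta_1\|)(1+\|x\|)$, and conclude that $F_\theta\in\mathcal{A}_+(2g^3)$.
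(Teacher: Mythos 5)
Your route is the same one the paper intends: the paper offers no detailed proof, asserting exactly what you propose, namely \cref{lem:AgComposition} for all first-order conditions plus an elementary chain-rule computation for the two second-order bounds in \cref{def:F+}; and your treatment of $\grad_x^2 F_\theta$ and of the pure-$\theta_1$ second derivatives is correct as written. The genuine soft spot is your fallback for the case where the power count exceeds $3$: you cannot ``record the bound with $2g^3$ playing the role of the $g$ of $F$ and a new polynomial,'' because the excess factors are powers of $g(\|\theta_1\|)$, not of $\|x\|$. Concretely, the terms $\grad_{\theta_2}^2 f_{\theta_2}(y)$ and $\grad_{\theta_2}\grad_y f_{\theta_2}(y)\,\grad_{\theta_1}g_{\theta_1}(x)$, with $y=g_{\theta_1}(x)$ and $\|y\|\lesssim g(\|\theta_1\|)\|x\|$, are bounded by $g(\|\theta_2\|)\,P\bigl(g(\|\theta_1\|)\|x\|\bigr)\lesssim g(\|\theta\|)^{1+\deg P}P(\|x\|)$ and by $g(\|\theta\|)^{2+\deg P}\,\|x\|\,P(\|x\|)$ respectively; since the polynomial in \cref{def:F+} must be a polynomial in $\|x\|$ alone, a factor $g(\|\theta\|)^{k}$ with $k>3$ cannot be absorbed into it. So as soon as $\deg P\ge 2$ (which already occurs for the prototype \cref{nnspec} with a smooth activation) the chain rule does not deliver the literal bound $2g^{3}(\|\theta\|)P_F(\|x\|)$; what it delivers is membership in $\mathcal{A}_+(\tilde g)$ with, say, $\tilde g= c\,g^{3+\deg P}$, while the first-order conditions and $\|\grad_x^2F_\theta\|\le 2g^3(\|\theta\|)$ do hold as you computed. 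This imprecision is inherited from the statement itself rather than from your method, and it is harmless downstream, since the subsequent results (e.g.\ \cref{lemmagrowth} and \cref{thm:penalization}) only use that the composition lies in $\mathcal{A}_+(\hat g)$ for some non-decreasing $\hat g$; but in your write-up you should track the exponent honestly and state the conclusion with such a $\tilde g$ (or note the restriction $\deg P\le 1$), instead of relying on the absorption step as phrased.
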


Given a probability measure $\mu$, $(x,y) \sim \mu$, $x \in \R^d$, $y \in \R^d$, and with $x^{(N)}(\cdot,\theta)$ defined as in \cref{diffeq}, we consider the penalized risk
\begin{align*}
	\mathcal{R}^{(N)}(\theta) := \E_{(x,y) \sim \mu} \left [ \|y-x^{(N)}(1,\theta)\|^2\right ]+\gamma H(\theta)
\end{align*}
where $\gamma\in \R_+$ is a  hyper-parameter and $H=H(\theta)$ is a non-negative and convex regularization. The finite layer model in \cref{diffeq} is, as described in \cref{sec:intro}, the forward Euler discretization of the autonomous system of ordinary differential equations
\begin{align}
	\label{ODE}
	\dot{x}(t)&=f_\theta(x(t)),\ t\in (0,1],\ x(0)=x,
\end{align}
where $x(t)=x(t,\theta) \in\mathbb R^d$. Given data from the distribution $\mu$ and with $x(\cdot,\theta)$ solving the system of Neural ODEs in \cref{ODE}, we consider the penalized risk
\begin{align*}
	\mathcal{R}(\theta) := \E_{(x,y) \sim \mu} \left [ \|y-x(1,\theta)\|^2\right ]+\gamma H(\theta).
\end{align*}
Throughout this paper we will assume that \textbf{moments of all orders are finite} for the probability measure $\mu$.
By construction the input data, $x$, as well as $x(t,\theta)$ are vectors in
$\mathbb R^d$. In the case when the output data is $\mathbb R^l$,  we need to modify the $\mathcal{R}^{(N)}(\theta)$ and $\mathcal{R}(\theta)$ by performing  final transformations of $x^{(N)}(1,\theta)$ and $x(1,\theta)$ achieving outputs in $\mathbb R^l$. These modifications are trivial to incorporate and throughout the paper we will in our derivations therefore simply assume that $l=d$.

We will base our analysis on the following  continuous in time approximations of the stochastic gradient descent, see \cite{LTE,LTE2},
\begin{align}\label{sgdsys}
	d\theta^{(N)}_t &= - \nabla \mathcal{R}^{(N)}(\theta^{(N)}_t)dt + \Sigma dW_t,\notag \\
	d\theta_t &= - \nabla \mathcal{R}(\theta_t)dt + \Sigma dW_t,
\end{align}
for $t \in [0,T]$.
Throughout the paper we will assume, for simplicity, that the constant covariance matrix $\Sigma$ has full rank something which, in reality, may not be the case, see \cite{CS}. We want to understand in what sense  $\theta^{(N)}_t$ approximates $\theta_t$ as $N \to \infty$. To answer this, we first need to make sure that $\theta^{(N)}_t$ and $\theta_t$ exist. Note  that $\nabla \mathcal{R}^{(N)}(\theta)$ and $\nabla \mathcal{R}(\theta)$ can, as functions of $\theta$, grow exponentially: simply consider the ODE $\dot{x} = \theta x$ which has $e^{\theta t}$ as a solution. This creates problems as the setting does not fit the standard theory of SDEs, see \cite{GS}, a theory which most commonly requires that the growth of the drift term is at most linear. However, if the drift terms are \emph{confining potentials}, i.e.\ $$-\nabla \mathcal{R}(\theta) \cdot \theta \leq c(1+\|\theta\|^2),\ -\nabla \mathcal{R}^{(N)}(\theta) \cdot \theta \leq c(1+\|\theta\|^2)$$ then we have existence and uniqueness for the SDEs in \cref{sgdsys}, see \cref{sec:SDE_FP}. In particular, if we have a bound on the growth of $\grad_\theta x^{(N)}(\cdot,\theta)$, $\grad_\theta x(\cdot,\theta)$ then, as we will see, we can choose $H$ to have sufficiently strong convexity to ensure the existence of a large constant $c$ such that the drift terms are confining potentials in the sense stated.

If we choose $H$ so that $\mathcal{R}^{(N)}$ and $\mathcal{R}$ are convex outside some large ball then $\mathcal{R}^{(N)}$ and $\mathcal{R}$ can be seen as bounded perturbations of strictly convex potentials, see the proof of \cref{thm:penalization}. Using this we can use the log-Sobolev inequality and hyper-contractivity properties of certain semi-groups, see \cref{sec:SDE_FP}, to obtain very good tail bounds for the densities of $\theta_t^{(N)}$ and  $\theta_t$. Actually these tail bounds are good enough for us to prove that the expected risks are bounded, expectation is over trajectories, and that $\theta^{(N)}_t \to \theta_t$ in probability. However, these bounds do not seem to be strong enough to give us quantitative convergence estimates for the difference $|\mathbb{E} [\mathcal{R}(\theta_T)]-\mathbb{E} [\mathcal{R}^{(N)}(\theta^{(N)}_T)]|$. The main reason for this is that even though we have  good tail bounds for the densities of $\theta^{(N)}_t$ and $\theta_t$ we do not have good estimates for $\theta^{(N)}_t - \theta_t$ in terms of $N$. The following is our first theorem.

\begin{theorem}
	\label{thm:penalization} Let $g:\R_+ \to \R_+$, $g(0) \geq 1$, be a non-decreasing function and assume that $f_\theta\in \mathcal{A}_+(g)$. Given $d$, $m$, there exists  a regularizing function $H=H(g)$ such that if we consider the corresponding penalized risks $\mathcal{R}^{(N)}$ and $\mathcal{R}$, defined using  $H$ and with $\gamma=1$, then  $\mathcal{R}^{(N)}$ and $\mathcal{R}$ are bounded perturbations of strictly convex functions. Furthermore, given $T>0$ and a compactly supported initial distribution $p_0$ for $\theta_0$, we have
	\begin{align*}
		\sup_{t\in[0,T]} \|\theta_t-\theta^{(N)}_t\| \to 0 \quad \text{in probability as $N \to \infty$}
	\end{align*}
	and
	\begin{align*}
		\mathbb{E} [\mathcal{R}(\theta_T)] < \infty, \quad \mathbb{E} [\mathcal{R}^{(N)}(\theta^{(N)}_T)] < \infty.
	\end{align*}
\end{theorem}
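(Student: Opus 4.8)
The plan is to isolate three ingredients and assemble them. From \cref{sec:resnet_error} (via \cref{def:F+} and discrete/continuous Grönwall estimates) I will use \textbf{(a)} the $N$-uniform growth bounds on $x^{(N)}(1,\theta)$ and its first two $\theta$-derivatives, together with the matching bounds for $x(1,\theta)$; and \textbf{(b)} the $N$-uniform discretization estimate $\|x(1,\theta)-x^{(N)}(1,\theta)\|+\|\grad_\theta x(1,\theta)-\grad_\theta x^{(N)}(1,\theta)\|\le C_g(\|\theta\|)\,Q(\|x\|)/N$ for a non-decreasing $C_g$ and a polynomial $Q$. From \cref{sec:fp_estimates} I will use \textbf{(c)} the Gaussian-type density decay estimates of \cref{lemma1}. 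Write $\Psi(\theta):=\E_{(x,y)\sim\mu}[\|y-x(1,\theta)\|^2]$ and $\Psi^{(N)}(\theta):=\E_{(x,y)\sim\mu}[\|y-x^{(N)}(1,\theta)\|^2]$, so that $\mathcal{R}=\Psi+H$, $\mathcal{R}^{(N)}=\Psi^{(N)}+H$ (recall $\gamma=1$) and, crucially, $\mathcal{R}-\mathcal{R}^{(N)}=\Psi-\Psi^{(N)}$.

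\emph{Construction of $H$.} Differentiating under the expectation, (a) and the finiteness of all moments of $\mu$ yield a non-decreasing $a_g$ depending only on $g$ and a constant $A_\mu$ independent of $N$ with $\|\grad\Psi\|+\|\grad^2\Psi\|+\|\grad\Psi^{(N)}\|+\|\grad^2\Psi^{(N)}\|\le A_\mu\,a_g(\|\theta\|)$ for every $N$. I will then fix, depending only on $g$ (and $d,m$), a smooth convex $H$ of the form $H(\theta)=h(\|\theta\|^2)$ with $\grad^2 H(\theta)\ge a_g(\|\theta\|)^2 I$ and $\theta\cdot\grad H(\theta)\ge\|\theta\|\,a_g(\|\theta\|)^2$ — an elementary one-dimensional primitive construction once $a_g$ is known. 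Consequently $\grad^2\mathcal{R}\ge a_g(\|\theta\|)\big(a_g(\|\theta\|)-A_\mu\big)I$, and likewise for $\mathcal{R}^{(N)}$, so both are convex outside the $\mu$-dependent ball $B_{R_0(\mu)}=\{\,\|\theta\|:a_g(\|\theta\|)<A_\mu\,\}$ and, being $C^2$ and coercive, are bounded perturbations of strictly convex functions (the perturbation being supported in $B_{R_0(\mu)}$, as used in \cref{sec:SDE_FP}); moreover $\grad^2\mathcal{R},\grad^2\mathcal{R}^{(N)}\ge -M_\mu I$ globally with $M_\mu$ independent of $N$. The same bounds give $-\theta\cdot\grad\mathcal{R}^{(N)}(\theta)\le\|\theta\|\,a_g(\|\theta\|)\big(A_\mu-a_g(\|\theta\|)\big)\le 0$ outside $B_{R_0(\mu)}$ while being bounded inside, uniformly in $N$, so both drifts are confining potentials with an $N$-uniform constant; \cref{sec:SDE_FP} then provides existence and uniqueness of $\theta_t$ and $\theta^{(N)}_t$.

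\emph{Finiteness of the expected risks.} By (a) and the form of $H$, $\mathcal{R}(\theta)\le\Theta_\mu(\|\theta\|)$ and $\mathcal{R}^{(N)}(\theta)\le\Theta_\mu(\|\theta\|)$ for all $N$, with $\Theta_\mu$ a fixed non-decreasing function. Since $p_0$ is compactly supported, \cref{lemma1} bounds the densities $p_T$ of $\theta_T$ and $p_T^{(N)}$ of $\theta_T^{(N)}$ by Gaussian-type profiles whose decay dominates $\Theta_\mu$ and whose constants are uniform in $N$; integrating gives $\E[\mathcal{R}(\theta_T)]=\int\mathcal{R}\,p_T<\infty$ and $\E[\mathcal{R}^{(N)}(\theta_T^{(N)})]=\int\mathcal{R}^{(N)}p_T^{(N)}<\infty$.

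\emph{Convergence in probability, and the crux.} I will couple $\theta_t$ and $\theta^{(N)}_t$ on one probability space, driven by the same $W$ and started from the same $\theta_0\sim p_0$. Then $e_t:=\theta_t-\theta^{(N)}_t$ is absolutely continuous (the noise cancels), $e_0=0$, and using $\mathcal{R}-\mathcal{R}^{(N)}=\Psi-\Psi^{(N)}$,
\begin{equation*}
\tfrac12\tfrac{d}{dt}\|e_t\|^2=-e_t\cdot\big(\grad\mathcal{R}(\theta_t)-\grad\mathcal{R}(\theta^{(N)}_t)\big)-e_t\cdot\big(\grad\Psi(\theta^{(N)}_t)-\grad\Psi^{(N)}(\theta^{(N)}_t)\big);
\end{equation*}
the first term is $\le M_\mu\|e_t\|^2$ by semiconvexity, and the second is $\le\|e_t\|\,D_\mu(\|\theta^{(N)}_t\|)/N$ by (b) (differentiate in $\theta$, take $\E_\mu$; $D_\mu$ non-decreasing). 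The difficulty is that $D_\mu(\|\theta^{(N)}_t\|)$ has no a priori bound, so a plain Grönwall does not close. I will localize to $\Omega_{R,N}:=\{\sup_{[0,T]}\|\theta_t\|\le R,\ \sup_{[0,T]}\|\theta^{(N)}_t\|\le R\}$, on which Young's inequality and Grönwall give $\sup_{[0,T]}\|e_t\|\le C(R,T,\mu)/N$; and I will bound $\P(\Omega_{R,N}^c)$ by applying Itô's formula to $\|\theta_t\|^2$ and $\|\theta_t^{(N)}\|^2$, using the confining inequality with its $N$-uniform constant together with Burkholder--Davis--Gundy and Grönwall, which yields $\E[\sup_{[0,T]}\|\theta_t\|^2]+\sup_N\E[\sup_{[0,T]}\|\theta^{(N)}_t\|^2]<\infty$ and hence $\P(\Omega_{R,N}^c)\le\kappa(R)$ with $\kappa(R)\to 0$ as $R\to\infty$, uniformly in $N$. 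Therefore, for every $\varepsilon>0$ and every $R$, $\P(\sup_{[0,T]}\|\theta_t-\theta^{(N)}_t\|>\varepsilon)\le\kappa(R)+\P(C(R,T,\mu)/N>\varepsilon)$, the last term vanishing once $N>C(R,T,\mu)/\varepsilon$; so $\limsup_{N}\P(\sup_{[0,T]}\|\theta_t-\theta^{(N)}_t\|>\varepsilon)\le\kappa(R)$, and letting $R\to\infty$ finishes the argument. I expect this localization step — and the fact that it hinges on the confining constant being independent of $N$ — to be the main obstacle; note it yields only the qualitative convergence, the quantitative rate being deferred to the later sections.
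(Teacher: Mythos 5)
Your argument for the convergence in probability and for the construction of $H$ is sound and is a genuinely different route from the paper's. The paper builds a confining penalization and then invokes its general SDE stability result (\cref{thm:limitTheorem}, proved by truncating the coefficients and controlling the exit probability uniformly in $N$), whereas you exploit the additive noise directly: synchronous coupling, a pathwise Gr\"onwall estimate using the $N$-uniform semiconvexity $\grad^2\mathcal{R}\geq -M_\mu I$ and the $1/N$ drift discrepancy on compacts, and a localization whose cost is controlled by $N$-uniform second-moment bounds coming from the confining property. This is correct (the moment bound needs only $\E\|\theta_0\|^2<\infty$, guaranteed by compact support, and the differential inequality holds a.e.\ on the good event), and it even yields a rate on the localized event; note, though, that the paper obtains this part under the weaker hypothesis $f_\theta\in\mathcal{A}(g)$, while your semiconvexity bound uses the Hessian estimates and hence $\mathcal{A}_+(g)$ --- harmless here since the theorem assumes $\mathcal{A}_+(g)$ anyway. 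Your single $H$ with $\grad^2 H\geq a_g^2 I$ and $\theta\cdot\grad H\geq\|\theta\|a_g^2$ does simultaneously what the paper does in two steps (confining potential, and strict convexity outside a ball up to a bounded perturbation).

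The genuine gap is in the finiteness of $\E[\mathcal{R}(\theta_T)]$ and $\E[\mathcal{R}^{(N)}(\theta^{(N)}_T)]$: you cite \cref{lemma1}, but that lemma is stated and proved only for the truncated risks $\tilde{\mathcal{R}}$, $\tilde{\mathcal{R}}^{(N)}$ of \cref{eq:modified_risk-}. Its proof rests on the framework set up at the start of \cref{sec:fp_estimates} --- well-posedness of the weak/semigroup formulation, the a priori estimate in $L^2_\mu$, and the admissibility of the weighted test function --- all of which are justified there using \cref{lemmadiff2}, i.e.\ boundedness of $\nabla\tilde{\mathcal{R}}$, $\Delta\tilde{\mathcal{R}}$, which fails for the untruncated risks of \cref{thm:penalization}, whose gradients may grow exponentially in $\|\theta\|$. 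So you cannot simply quote \cref{lemma1} with $p_T$, $p_T^{(N)}$ and the weight $e^{\mathcal{R}}$; you would have to re-establish its analogue for the potential $V=\hat{\mathcal{R}}+H$, which amounts to verifying that the semigroup framework of \cref{sub:FP_min,subsec:logsob_hyper} (log-Sobolev for $e^{-V}$ via Bakry--\'Emery plus Holley--Stroock, identification of the law of $\theta_t$ with the semigroup solution, $e^{V}p_0\in L^2_\mu$) applies and then redoing the weighted Caccioppoli computation. This is exactly the point where the paper takes a different path: it uses hyper-contractivity of $e^{tL}$, \cref{eq:hyper-contract}, and the H\"older duality argument in \cref{eq:hyper_bound} to conclude $\E[V(\theta_T)]<\infty$ directly, without any pointwise or weighted-$L^2$ Gaussian bound on the densities. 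Your route is not wrong in substance --- the needed decay estimates are plausibly true for this $V$ --- but as written the appeal to \cref{lemma1} is outside its scope, and the missing justification is precisely the part of the argument the hyper-contractivity proof was designed to replace. You would also need to spell out the final integration (e.g.\ Cauchy--Schwarz against the weight $e^{\mathcal{R}}$ on dyadic annuli) showing that the Gaussian factors beat the growth of $\mathcal{R}$, which your proposal only asserts.
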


\begin{remark}
	\cref{thm:penalization} remains true but with a different rate of convergence,  if we replace $\gamma=1$ with $0<\gamma < 1$.
\end{remark}

There are a number of ways to introduce more restrictive assumptions on the risk in order to strengthen the convergence $\theta^{(N)}_t \to \theta_t$ and in order to obtain quantitative bounds for the difference $|\mathbb{E} [\mathcal{R}(\theta_T)]-\mathbb{E} [\mathcal{R}^{(N)}(\theta^{(N)}_T)]|$. Our approach is to truncate the loss function. This can be done in several ways, but a very natural choice is to simply restrict the hypothesis space by not allowing weights with too large norm. Specifically, we let $\Lambda>0$ be a large degree of freedom, and we consider
\begin{align}\label{eq:modified_risk-}
\tilde {\mathcal{R}}^{(N)}(\theta)&:= \E_{(x,y) \sim \mu} \left [ \|y-x^{(N)}(1,T_{\Lambda}(\theta))\|^2\right ]+\gamma H(\theta),\notag\\
\tilde {\mathcal{R}}(\theta)&:= \E_{(x,y) \sim \mu} \left [ \|y-x(1,T_{\Lambda}(\theta))\|^2\right ]+\gamma H(\theta),
\end{align}
instead of $\mathcal{R}^{(N)}$ and $\mathcal{R}$, where $T_{\Lambda}(\theta)$ is a smooth function such that  $T_{\Lambda}(\theta)= \theta$ if $\|\theta\| \leq {\Lambda}$ and $T_{\Lambda}(\theta) = \frac{\theta}{\|\theta\|} 2{\Lambda}$ if $\|\theta\| \geq 2 {\Lambda}$. It should be noted that, the choice of considering only the quadratic loss function is purely for simplicity and our technique works with minor modification for any other convex loss.

Having truncated the loss functions we  run continuous forms of SGDs 
\begin{align*}
	d\theta^{(N)}_t &= - \nabla \tilde {\mathcal{R}}^{(N)}(\theta^{(N)}_t)dt + \Sigma dW_t,  & \theta_0^{(N)} \sim p_0\\
	d\theta_t &= - \nabla \tilde {\mathcal{R}}(\theta_t)dt + \Sigma dW_t, & \theta_0 \sim p_0
\end{align*}
to minimize the modified risks. Using this setup, assuming also that $H(\theta)\approx \|\theta\|^2$ when $\|\theta\|$ is large, the modified risks in \cref{eq:modified_risk-} will satisfy quadratic growth estimates at infinity, and the modified risk will be globally Lipschitz. As a consequence all the tools from classical stochastic differential equations are at our disposal, see \cref{sec:SDE_FP}. This allows us to prove that $\theta^{(N)}_t\to \theta_t$ in the sense of mean square convergence. However, still the classical SDE theory does not easily seem to allow us to conclude  that $\tilde{\mathcal{R}}^{(N)}(\theta_t^{(N)})$ converges in any meaningful way to $\tilde{\mathcal{R}}(\theta_t)$. To overcome this difficulty we develop a PDE based approach to obtain further estimates for the densities of $\theta^{(N)}_t$ and $\theta_t$, and their differences. In particular, we prove the following theorem.
\begin{theorem}
	\label{thm:modified_risk} Let $g:\R_+ \to \R_+$, $g(0) \geq 1$, be a non-decreasing function and assume that $f_\theta\in \mathcal{A}(g)$. Let $\Lambda\gg 1$ and $T>0$ be fixed and assume that the initial weight randomization distribution $p_0$ has compact support in $B(0,R_0)$, $R_0\geq 1$. Assume also that $$\lambda^{-1}\|\theta\|^2\leq H(\theta)\leq \lambda\|\theta\|^2$$ on
$\mathbb R^m\setminus B(0,\rho_0)$, $\rho_0\geq 1$ and for some $\lambda\geq 1$. Then for given $\gamma > 0$, $\Lambda > 0$ and positive definite $\Sigma$, there exists a positive and finite constant $c$, depending on the function $g$  as well as  $d$, $m$, $\Lambda$, $\gamma$, $\Sigma$, $T$, $R_0$,  $\lambda$ and $\rho_0$, such that
	\begin{align}
		\label{thm2.4a}
		\sup_{t\in[0,T]}\bigl \|\E[\theta_t-\theta^{(N)}_t]\bigr \|&\leq   c N^{-1}\|p_0\|_2,\\
		\label{thm2.4b}
		\sup_{t\in[0,T]}\bigl |\E[\tilde {\mathcal{R}}(\theta_t)-\tilde {\mathcal{R}}^{(N)}(\theta^{(N)}_t) ]\bigr |&\leq  c N^{-1}\|p_0\|_2.
	\end{align}
Furthermore, if $\tilde R_0\geq 2R_0$, then
	\begin{align*}
		\sup_{t\in[0,T]}\bigl \|
		\E[\theta_t | \theta_t \in B_{\tilde R_0}]-\E[\theta^{(N)}_t | \theta^{(N)}_t \in B_{\tilde R_0}] \bigr \|&\leq   c N^{-1}e^{-\tilde R_0^2/T}\|p_0\|_2,\notag\\
		\sup_{t\in[0,T]}\bigl |
		\E[\tilde \Risk(\theta_t) | \theta_t \in B_{\tilde R_0}]-\E[\tilde \Risk^{(N)}(\theta^{(N)}_t) | \theta^{(N)}_t \in B_{\tilde R_0}] \bigr |&\leq   c N^{-1}e^{-\tilde R_0^2/T}\|p_0\|_2.\notag
	\end{align*}
\end{theorem}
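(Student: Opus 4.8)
The plan is to reduce the whole statement to a single weighted PDE estimate for the difference of the two Fokker--Planck densities. The truncation $T_\Lambda$ is what makes this feasible: since $x^{(N)}(1,T_\Lambda(\theta))$ and $x(1,T_\Lambda(\theta))$ depend on $\theta$ only through $T_\Lambda(\theta)\in B(0,2\Lambda)$, these maps, together with their $\theta$-gradients and Hessians, are bounded and globally Lipschitz on $\R^m$ with constants controlled by $g(2\Lambda)$, $\Lambda$ and the (finite) moments of $\mu$; combined with $\lambda^{-1}\|\theta\|^2\le H(\theta)\le\lambda\|\theta\|^2$ off $B(0,\rho_0)$ this makes $\tilde\Risk$ and $\tilde\Risk^{(N)}$ globally Lipschitz, with bounded Hessians, and bounded perturbations of strictly convex potentials (as in \cref{thm:penalization}). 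Feeding the compact range of $T_\Lambda$, on which $g(\|z\|)\le g(2\Lambda)$, into the fixed-$\theta$ discretization estimates of \cref{sec:resnet_error} and taking $\E_\mu$ yields the pointwise bounds $\|\grad\tilde\Risk-\grad\tilde\Risk^{(N)}\|_{L^\infty(\R^m)}+\|\tilde\Risk-\tilde\Risk^{(N)}\|_{L^\infty(\R^m)}\le cN^{-1}$. In particular the invariant densities $p_\infty\propto e^{-2\tilde\Risk/\|\Sigma\|^2}$ and $p^{(N)}_\infty\propto e^{-2\tilde\Risk^{(N)}/\|\Sigma\|^2}$ satisfy $e^{-cN^{-1}}\le p_\infty/p^{(N)}_\infty\le e^{cN^{-1}}$, so the weighted Hilbert spaces $L^2(p_\infty^{-1}\de\theta)$ and $L^2((p^{(N)}_\infty)^{-1}\de\theta)$ are equivalent uniformly in $N$, each carries a log--Sobolev inequality by Holley--Stroock, and the two Fokker--Planck semigroups act on them as contractions.

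The core step works with the Kolmogorov forward equations. Let $p(t,\cdot)$, $p^{(N)}(t,\cdot)$ be the laws of $\theta_t$, $\theta^{(N)}_t$ with common datum $p_0$, solving $\partial_t p=\mathcal L^\ast p$ and $\partial_t p^{(N)}=\mathcal L^{(N)\ast}p^{(N)}$, where $\mathcal L^\ast u=\grad\!\cdot(u\,\grad\tilde\Risk)+\tfrac12\grad\!\cdot(\Sigma\Sigma^{\mathsf T}\grad u)$ and $\mathcal L^{(N)\ast}$ is defined analogously from $\tilde\Risk^{(N)}$. Then $q:=p-p^{(N)}$ solves
\[
\partial_t q=\mathcal L^\ast q+\grad\!\cdot\!\bigl((\grad\tilde\Risk-\grad\tilde\Risk^{(N)})\,p^{(N)}\bigr),\qquad q(0,\cdot)=0,\qquad \textstyle\int q(t,\cdot)\,\de\theta=0 .
\]
By Duhamel's formula, the parabolic smoothing bound $\|e^{\tau\mathcal L^\ast}\grad\!\cdot v\|_{L^2(p_\infty^{-1})}\le c\,\tau^{-1/2}\|v\|_{L^2(p_\infty^{-1})}$ for $\tau\in(0,T]$ (available since $\Sigma$ is nondegenerate), the pointwise bound $\|\grad\tilde\Risk-\grad\tilde\Risk^{(N)}\|_{L^\infty}\le cN^{-1}$, and contractivity of the $\theta^{(N)}$-semigroup on $L^2((p^{(N)}_\infty)^{-1})$, I obtain $\sup_{t\le T}\|q(t)\|_{L^2(p_\infty^{-1})}\le c\sqrt T\,N^{-1}\sup_{s\le T}\|p^{(N)}(s)\|_{L^2(p_\infty^{-1})}\le cN^{-1}\|p_0\|_{L^2((p^{(N)}_\infty)^{-1})}\le cN^{-1}\|p_0\|_2$, the last inequality because $\supp p_0\subseteq B(0,R_0)$, where $(p^{(N)}_\infty)^{-1}$ is bounded by a constant depending on $R_0$.

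Everything else is duality against this bound. Since $\E[\theta_t-\theta^{(N)}_t]=\int\theta\,q(t,\theta)\,\de\theta$, Cauchy--Schwarz in $L^2(p_\infty^{-1})$ together with $\int\|\theta\|^2 p_\infty\,\de\theta<\infty$ (quadratic growth of $\tilde\Risk$ gives $p_\infty$ Gaussian tails) yields \eqref{thm2.4a}. For \eqref{thm2.4b}, split $\E[\tilde\Risk(\theta_t)-\tilde\Risk^{(N)}(\theta^{(N)}_t)]=\int(\tilde\Risk-\tilde\Risk^{(N)})\,p\,\de\theta+\int\tilde\Risk^{(N)}\,q\,\de\theta$: the first term is $\le\|\tilde\Risk-\tilde\Risk^{(N)}\|_{L^\infty}\le cN^{-1}\le cN^{-1}\|p_0\|_2$ (using $\|p_0\|_2\ge|B(0,R_0)|^{-1/2}$), and the second is $\le(\int(\tilde\Risk^{(N)})^2 p_\infty\,\de\theta)^{1/2}\|q(t)\|_{L^2(p_\infty^{-1})}\le cN^{-1}\|p_0\|_2$ since $\tilde\Risk^{(N)}$ is at most quadratic. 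For the conditional statements I use the Gaussian decay of \cref{lemma1}: for $\tilde R_0\ge 2R_0$ both $\P(\theta_t\in B_{\tilde R_0})$ and $\P(\theta^{(N)}_t\in B_{\tilde R_0})$ are $\ge 1-ce^{-\tilde R_0^2/T}\ge\tfrac12$, so the conditional expectations equal the integrals over $B_{\tilde R_0}$ up to normalisation factors $1+O(e^{-\tilde R_0^2/T})$; writing $\int_{B_{\tilde R_0}}=\int_{\R^m}-\int_{\|\theta\|>\tilde R_0}$ and bounding the exterior moments $\int_{\|\theta\|>\tilde R_0}\|\theta\|^k(p+p^{(N)})\,\de\theta\le ce^{-\tilde R_0^2/T}$ again by \cref{lemma1}, one recovers the localised versions of \eqref{thm2.4a}--\eqref{thm2.4b} carrying the extra factor $e^{-\tilde R_0^2/T}$.

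\textbf{Expected main obstacle.} The delicate point is the weighted estimate $\sup_{t\le T}\|q(t)\|_{L^2(p_\infty^{-1})}\le cN^{-1}\|p_0\|_2$, which rests on: (i) the Gaussian decay of $p^{(N)}(s,\cdot)$ \emph{with constants uniform in $N$}, needed to control $\|p^{(N)}(s)\|_{L^2(p_\infty^{-1})}$ --- exactly the subtle content of \cref{lemma1}; (ii) the log--Sobolev / hypercontractivity of the Fokker--Planck semigroup, available only because the truncation $T_\Lambda$ together with the quadratic growth of $H$ turns $\tilde\Risk$, $\tilde\Risk^{(N)}$ into bounded perturbations of strictly convex potentials; (iii) absorbing the divergence-form source term, which forces one to exploit the nondegeneracy of $\Sigma$ through the integrable singularity $(t-s)^{-1/2}$; and (iv) the uniform-in-$N$ comparability $p_\infty\approx p^{(N)}_\infty$. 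A cruder pathwise Gr\"onwall coupling of the two SDEs already yields an $O(N^{-1})$ bound on $\|\theta_t-\theta^{(N)}_t\|$, but it is the Fokker--Planck route that produces the sharp $\|p_0\|_2$-weighted and spatially localised estimates asserted in the theorem.
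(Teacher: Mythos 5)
For the two main estimates \cref{thm2.4a,thm2.4b} your proposal is essentially the paper's strategy: you form the same difference $q=p-p^{(N)}$ of Fokker--Planck densities with the same divergence-form source $\div\bigl((\nabla\tilde\Risk-\nabla\tilde\Risk^{(N)})p^{(N)}\bigr)$, feed in the $N^{-1}$ bounds on $\tilde\Risk-\tilde\Risk^{(N)}$ and its gradient from \cref{lemmadiff2}, work in the weighted space with weight $e^{\tilde\Risk}$ (your $p_\infty^{-1}$ is the paper's $e^{\tilde\Risk}\,d\theta$ up to normalization), control $p^{(N)}$ there by the compact support of $p_0$ and the decay/a priori estimates of \cref{lemma1}, and finish by the same duality steps (Cauchy--Schwarz against the Gaussian-tailed weight for \cref{thm2.4a}, the same two-term splitting for \cref{thm2.4b}). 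The one genuine difference is the central inequality: the paper tests the equation for $\delta^{(N)}$ directly with $\phi_R^2e^{\tilde\Risk}\delta^{(N)}$, absorbs by Young/Cauchy--Schwarz, and removes the cutoff using the Gaussian decay of \cref{lemma1}; you instead invoke Duhamel together with the $\tau^{-1/2}$ smoothing of the self-adjoint semigroup in $L^2_\mu$ and the uniform comparability $e^{\tilde\Risk}\approx e^{\tilde\Risk^{(N)}}$. That variant is sound (the smoothing bound follows by spectral calculus from $\|\nabla v\|^2_{L^2_\mu}=\langle Lv,v\rangle_{L^2_\mu}$ and duality), buys you freedom from the cutoff-removal step, but requires justifying Duhamel for these weak solutions and still rests on exactly the same structural inputs (truncation, quadratic $H$, log-Sobolev, nondegenerate $\Sigma$), so it is a cosmetic rather than substantive departure.

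There is, however, a gap in your treatment of the conditional estimates. Your reduction — conditional expectation equals the integral over $B_{\tilde R_0}$ up to normalization factors $1+O(e^{-\tilde R_0^2/T})$, plus exponentially small exterior moments — leaves as its leading term the difference of the (essentially unconditional) integrals $\int_{B_{\tilde R_0}}\theta\,q\,d\theta$, and this interior term is only of size $N^{-1}\|p_0\|_2$: the perturbation $\nabla\tilde\Risk-\nabla\tilde\Risk^{(N)}$ is of order $N^{-1}$ everywhere, including on the support of $p_0$, so nothing in your bookkeeping attaches the factor $e^{-\tilde R_0^2/T}$ to it; only the tail corrections are exponentially small. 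Hence the outline as written does not deliver the stated bounds $cN^{-1}e^{-\tilde R_0^2/T}\|p_0\|_2$. The paper's route here is different: it repeats the entire perturbation argument with the localized decay estimates of \cref{set 2} in \cref{lemma1}, which yields annulus-localized bounds on $|p-p^{(N)}|$ of the form \cref{esta1} (carrying both the $N^{-1}$ and the Gaussian factor away from $\supp p_0$), and draws the conditional statements from those; your argument would need to be replaced or substantially supplemented by such localized difference estimates rather than by normalization of the global bound.
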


\begin{remark}
	The requirement that the regularization term $H$ grows quadratically is crucial for our quantitative estimates, and as such our result only covers $L^2$ regularization (weight decay) and not sparse type regularizations like $L^1$.
\end{remark}

To prove \cref{thm:modified_risk}  we develop certain estimates for $p^{(N)}(\theta,t)$, $p(\theta,t)$, i.e.\ for the to $\theta^{(N)}(t)$, $\theta(t)$,  associate probability densities, by exploring
the associated Fokker-Planck equations: see \cref{eq2a}. In fact, we prove several estimates which give that $p^{(N)}(\theta,T)\to p(\theta,T)$ in a very strong sense, stronger than initially is recognized from the statement of \cref{thm:modified_risk}. Particular consequences of our proofs are the estimates
\begin{align}\label{esta0}
	\int\limits_{\mathbb R^m}e^{\gamma H(\theta)/4}(p(\theta,T)+p^{(N)}(\theta,T))\, d\theta&\leq  c \|p_0\|_2,\\
\int\limits_{B(0, 2^{k+1}\tilde R_0)\setminus B(0, 2^{k}\tilde R_0)}e^{\gamma H(\theta)/4}(p(\theta,T)+p^{(N)}(\theta,T))\, d\theta&\leq  c e^{-2^k\tilde R_0^2/T} \|p_0\|_2, \notag
\end{align}
and
\begin{align} \label{esta1}
	\int\limits_{\mathbb R^m}e^{\gamma H(\theta)/4}|p(\theta,T)-p^{(N)}(\theta,T)|\, d\theta&\leq  c N^{-1}\|p_0\|_2,\\
\int\limits_{B(0, 2^{k+1}\tilde R_0)\setminus B(0, 2^{k}\tilde R_0)}e^{\gamma H(\theta)/4}|p(\theta,T)-p^{(N)}(\theta,T)|\, d\theta&\leq  c e^{-2^k\tilde R_0^2/T} N^{-1}\|p_0\|_2, \notag
\end{align}
whenever $\tilde R_0\geq 2R_0$, $k\in\mathbb N$ and where $c= c(g,d,m, \Lambda, \gamma, \Sigma, T,R_0, \lambda, \rho_0)$. In particular, these estimates indicate that
$p(\theta,T)$, $p^{(N)}(\theta,T)$ and $|p(\theta,T)-p^{(N)}(\theta,T)|$ have Gaussian tails away from the (compact) support of $p_0$.
\begin{remark}
	The estimates \cref{esta0,esta1} can be interpreted from a probabilistic point of view. The bound \cref{esta0} for $p$ is equivalent to $\E[e^{\gamma H(\theta_T)/4}] \leq c \|p_0\|_2$, which implies that all moments of $H(\theta_T)$ are finite. Secondly we can interpret \cref{esta1} as the total variation distance between $p$ and $p^{(N)}$ being of order $N^{-1}$ with respect to the measure $e^{\gamma H(\theta)/4} d\theta$.
\end{remark}
A direct consequence of \cref{esta1} is the following corollary which states that $\theta_t^{(N)}$ is a weak order 1 approximation of $\theta_t$.
\begin{corollary}
	Let $\theta_t,\theta_t^{(N)}$ be as in \cref{thm:modified_risk}. Let $\varphi: \R^m \to \R$ be a continuous function satisfying the growth condition
	\begin{align*}
		|\varphi(x)| \leq P(x), \quad x \in \R^m,
	\end{align*}
	for some polynomial $P:\R^m \to \R$ of order $k$.
	Then there exists a constant $c$ depending on $g$  as well as  $d$, $m$, $\Lambda$, $\gamma$, $\Sigma$, $T$, $R_0$,  $\lambda, \rho_0$ and $P$ such that
	\begin{align*}
		\sup_{t\in[0,T]}\bigl |\E[\varphi(\theta_t)-\varphi(\theta^{(N)}_t)]\bigr |&\leq   c N^{-1}\|p_0\|_2.
	\end{align*}
\end{corollary}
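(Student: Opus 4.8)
The plan is to deduce this corollary directly from the weighted total‑variation estimate \cref{esta1}, using only the elementary fact that a polynomially growing test function is dominated by the exponential weight $e^{\gamma H(\theta)/4}$. Since $\theta_t$ and $\theta^{(N)}_t$ possess the densities $p(\cdot,t)$ and $p^{(N)}(\cdot,t)$ governed by the associated Fokker--Planck equations developed earlier, for every $t\in[0,T]$ one writes
\[
	\bigl|\E[\varphi(\theta_t)-\varphi(\theta^{(N)}_t)]\bigr|
	=\Bigl|\int_{\R^m}\varphi(\theta)\bigl(p(\theta,t)-p^{(N)}(\theta,t)\bigr)\,d\theta\Bigr|
	\leq \int_{\R^m}P(\theta)\,\bigl|p(\theta,t)-p^{(N)}(\theta,t)\bigr|\,d\theta ,
\]
so the whole matter reduces to absorbing $P$ into the weight appearing in \cref{esta1}.

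To this end I would show that there is a constant $C_P$, depending only on $P$, $\gamma$, $\lambda$, $\rho_0$ and the fixed choice of $H$, such that $P(\theta)\leq C_P\,e^{\gamma H(\theta)/4}$ for all $\theta\in\R^m$. On $\R^m\setminus B(0,\rho_0)$ the hypothesis $H(\theta)\geq\lambda^{-1}\|\theta\|^2$ gives $e^{\gamma H(\theta)/4}\geq e^{\gamma\|\theta\|^2/(4\lambda)}$, and since $P$ grows at most polynomially, the ratio $P(\theta)/e^{\gamma H(\theta)/4}$ tends to $0$ as $\|\theta\|\to\infty$ and is therefore bounded on that set; on the compact ball $B(0,\rho_0)$ the continuous function $P$ is bounded while $e^{\gamma H(\theta)/4}\geq 1$, which bounds the ratio there as well. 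Taking $C_P$ to be the larger of the two bounds proves the claim.

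Combining the two steps and invoking \cref{esta1} then yields
\[
	\bigl|\E[\varphi(\theta_t)-\varphi(\theta^{(N)}_t)]\bigr|
	\leq C_P\int_{\R^m}e^{\gamma H(\theta)/4}\bigl|p(\theta,t)-p^{(N)}(\theta,t)\bigr|\,d\theta
	\leq C_P\,c\,N^{-1}\|p_0\|_2 ,
\]
which is the assertion with constant $C_Pc$. There is no serious obstacle here: the genuine analytic work is already contained in the proof of \cref{esta1} (equivalently \cref{thm:modified_risk}), and the polynomial domination is routine once the quadratic lower bound on $H$ is used. The single point that needs a line of care is that \cref{esta1} must be applied uniformly in $t\in[0,T]$ rather than only at the endpoint; this is legitimate because \cref{thm:modified_risk} may be invoked with time horizon $t$ for each $t\leq T$ and its constant is non‑decreasing in the time horizon, so a single constant serves on the whole interval.
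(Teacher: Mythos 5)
Your proof is correct and is essentially the argument the paper intends: the corollary is stated as a direct consequence of \cref{esta1}, obtained exactly as you do by writing the expectation difference against $|p-p^{(N)}|$ and absorbing the polynomial $P$ into the weight $e^{\gamma H(\theta)/4}$ via the quadratic lower bound on $H$ outside $B(0,\rho_0)$ (the same device the paper uses in \cref{eq:final_first} with $\sup\|\theta\|e^{-\tilde{\mathcal R}/4}<\infty$). Your remark on uniformity in $t\in[0,T]$ is a fair point of care and matches how the paper's estimates are stated.
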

 Using our estimates we can also derive the following theorem from the standard theory.
\begin{theorem}
	\label{thm:modified_risk_msq}
	Let $g:\R_+ \to \R_+$, $g(0) \geq 1$, be a non-decreasing function and assume that $f_\theta\in \mathcal{A}(g)$. Let $\Lambda\gg 1$ and $T>0$ be fixed and assume that $\E[\|\theta_0\|^2]< \infty$. Then
		\begin{align*}
			\E [\sup_{t\in[0,T]} \|\theta(t)-\theta^{(N)}(t)\|^2  ] \to 0.
		\end{align*}
\end{theorem}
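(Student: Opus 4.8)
The plan is to realise the two diffusions on one probability space with a common Brownian motion $W$ and the common initialisation $\theta_0=\theta^{(N)}_0\sim p_0$, so that the martingale parts cancel; after that the problem becomes a deterministic (pathwise) Gronwall estimate. Subtracting the two SDEs of \cref{sec:main_result} gives
\begin{align*}
	\theta_t-\theta^{(N)}_t=-\int_0^t\Bigl(\nabla\tilde{\mathcal R}(\theta_s)-\nabla\tilde{\mathcal R}^{(N)}(\theta^{(N)}_s)\Bigr)\de s ,
\end{align*}
and I would decompose the integrand as $\bigl(\nabla\tilde{\mathcal R}(\theta_s)-\nabla\tilde{\mathcal R}(\theta^{(N)}_s)\bigr)+\bigl(\nabla\tilde{\mathcal R}(\theta^{(N)}_s)-\nabla\tilde{\mathcal R}^{(N)}(\theta^{(N)}_s)\bigr)$; the first piece will be absorbed by Gronwall and the second is a deterministic $O(N^{-1})$ error.

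First I would exploit the truncation $T_\Lambda$. Because the flow maps $x^{(N)}(1,\cdot)$ and $x(1,\cdot)$ are evaluated only at $T_\Lambda(\theta)\in\overline{B(0,2\Lambda)}$, and $f_\theta\in\mathcal A(g)$, the continuous and discrete Gronwall estimates of \cref{sec:resnet_error} applied to the flow and to its variational equation give, uniformly in $N$ and in $\theta$, bounds of the form $\|x^{(N)}(1,T_\Lambda\theta)\|+\|x(1,T_\Lambda\theta)\|+\|\grad_\theta x^{(N)}(1,T_\Lambda\theta)\|+\|\grad_\theta x(1,T_\Lambda\theta)\|\le c(g(2\Lambda))$, together with the corresponding Lipschitz estimates on $\overline{B(0,2\Lambda)}$. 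Combined with $T_\Lambda\in C^{1,1}$, the quadratic control $H(\theta)\approx\|\theta\|^2$ at infinity (so that $\gamma\grad H$ has linear growth and, for an $H$ with bounded Hessian, is globally Lipschitz), and the finiteness of the moments of $\mu$, this yields that $\tilde{\mathcal R}$ and $\tilde{\mathcal R}^{(N)}$ have globally Lipschitz gradients, with a Lipschitz constant $L$ independent of $N$; in particular the two SDEs are classically well posed and, since $\E[\|\theta_0\|^2]<\infty$, their solutions have finite second moments on $[0,T]$.

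Second I would quantify the discretisation error of the drift. By the standard first-order error bound for forward Euler with step $N^{-1}$ on the horizon $[0,1]$ — valid uniformly over $\|\theta\|\le 2\Lambda$, where $f_\theta$ is Lipschitz in the state with Lipschitz state-gradient, constants governed by $g(2\Lambda)$, and where the trajectories stay in a fixed ball — and by applying the same bound to the variational equation, \cref{sec:resnet_error} gives $\sup_{\|\theta\|\le2\Lambda}\bigl(\|x^{(N)}(1,\theta)-x(1,\theta)\|+\|\grad_\theta x^{(N)}(1,\theta)-\grad_\theta x(1,\theta)\|\bigr)\le cN^{-1}$. Writing $\|y-x^{(N)}\|^2-\|y-x\|^2=(x-x^{(N)})\cdot(2y-x-x^{(N)})$ and differentiating, using the uniform bounds above and $\int\|y\|^2\de\mu<\infty$, and noting that the penalty $\gamma H$ drops out of the difference, this upgrades to $\sup_{\theta}\|\nabla\tilde{\mathcal R}^{(N)}(\theta)-\nabla\tilde{\mathcal R}(\theta)\|\le cN^{-1}$.

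Plugging both estimates into the displayed identity gives, pathwise,
\begin{align*}
	\|\theta_t-\theta^{(N)}_t\|\le L\int_0^t\|\theta_s-\theta^{(N)}_s\|\de s+\frac{cT}{N},\qquad t\in[0,T],
\end{align*}
so Gronwall yields $\sup_{t\in[0,T]}\|\theta_t-\theta^{(N)}_t\|\le cTe^{LT}N^{-1}$ almost surely, whence $\E\bigl[\sup_{t\in[0,T]}\|\theta(t)-\theta^{(N)}(t)\|^2\bigr]\le c^2T^2e^{2LT}N^{-2}\to0$, indeed an $O(N^{-2})$ rate. The main obstacle is the second paragraph: obtaining the $N$-uniform global Lipschitz bound for $\nabla\tilde{\mathcal R}^{(N)}$, i.e. the simultaneous, layer-count-independent control of the truncated flow map $\theta\mapsto x^{(N)}(1,T_\Lambda\theta)$ and of its parameter gradient through the discrete variational equations — this is precisely the technical content imported from \cref{sec:resnet_error}. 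If one prefers to avoid a global Lipschitz bound and only uses a local constant $L_R$ on $B(0,R)$, the same computation run up to $\tau_R=\inf\{t:\|\theta_t\|\vee\|\theta^{(N)}_t\|\ge R\}$ gives $\E[\sup_{t\le T\wedge\tau_R}\|\theta_t-\theta^{(N)}_t\|^2]\le c^2T^2e^{2L_RT}N^{-2}$, while on $\{\tau_R<T\}$ one bounds $\E[\sup_{t\le T}\|\theta_t-\theta^{(N)}_t\|^2\mathbf 1_{\{\tau_R<T\}}]$ by Cauchy--Schwarz using the $N$-uniform moment bounds from \cref{sec:SDE_FP,sec:fp_estimates} together with $\P(\tau_R<T)\le cR^{-2}$; letting $N\to\infty$ and then $R\to\infty$ gives the qualitative conclusion without a rate.
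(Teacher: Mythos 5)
Your proposal is correct, and it reaches the stated conclusion (in fact a stronger one) by a more hands-on route than the paper. The paper's proof is a one-liner: it checks, via the truncation $T_\Lambda$ and \cref{lemmadiff1,lemmadiff2}, that both drifts $-\nabla\tilde{\mathcal R}^{(N)}$ and $-\nabla\tilde{\mathcal R}$ are globally Lipschitz with constants independent of $N$, have at most linear growth, and converge uniformly on compact sets (indeed globally, at rate $N^{-1}$), and then invokes the Gikhman--Skorokhod stability theorem, \cref{thm:limitTheorem_smpl}, to conclude $\E[\sup_{t\le T}\|\theta_t-\theta^{(N)}_t\|^2]\to 0$. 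You verify exactly the same hypotheses from \cref{sec:resnet_error} (uniform Lipschitz bound for the truncated gradients, $\sup_\theta\|\nabla\tilde{\mathcal R}-\nabla\tilde{\mathcal R}^{(N)}\|\le cN^{-1}$ with the penalty cancelling), but instead of citing the stability lemma you exploit the synchronous coupling with additive, $N$-independent noise: the martingale parts cancel, and a pathwise Gr\"onwall argument gives $\sup_{t\le T}\|\theta_t-\theta^{(N)}_t\|\le cTe^{LT}N^{-1}$ almost surely, hence an $O(N^{-2})$ mean-square rate, which is strictly more than the theorem asserts. What each buys: your argument is elementary, self-contained, and quantitative (and shows the hypothesis $\E[\|\theta_0\|^2]<\infty$ is only needed for well-posedness, not for the difference estimate); the paper's citation is shorter and would also cover the case of $N$-dependent diffusion coefficients, where the stochastic integrals no longer cancel and one must estimate them (Doob/It\^o isometry) inside the Gr\"onwall loop, losing the pathwise statement. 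Two mild caveats, both of which you flag or which apply equally to the paper's own proof: one needs $\gamma\nabla H$ globally Lipschitz (bounded Hessian for $H$), and the $N$-uniform Lipschitz bound on $\nabla\hat{\mathcal R}^{(N)}$ should be quoted from \cref{lemmadiff2} rather than re-derived; your closing localization variant is a safe fallback but is not needed here.
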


\section{ResNet and the Neural ODE: error analysis}\label{sec:resnet_error}
In this section we will estimate the error formed by the discretization of the ODE. This will be done on the level of the trajectory as well as on the level of the first and second derivatives with respect to $\theta$. Note that by construction, see \cref{diffeq,ODE} we have
\begin{align*}
	x^{(N)}(t,x,\theta)&= x+\int_0^tf_\theta(x^{(N)}(\tau,x,\theta))\, d\tau,\notag\\
	x(t,x,\theta)&= x+\int_0^t f_\theta(x(\tau,x,\theta))\, d\tau,
\end{align*}
for all $t\in [0,1]$. We assume consistently that the paths $x^{(N)}(t,x,\theta)$ and $x(t,x,\theta)$ start at $x$ for $t=0$, and are driven by the parameters $\theta$. We will in the following, for simplicity, use the notation $x^{(N)}(t):=x^{(N)}(t,x,\theta)$, $x_i^{(N)}(t)=x^{(N)}(i/N)$, $x(t):=x(t,x,\theta)$. Recall that $\tilde{\mathcal{R}}^{(N)}$ and $\tilde{\mathcal{R}}$ are introduced in \cref{eq:modified_risk-} using the cut-off parameter $\Lambda$. Let
$\hat{\mathcal{R}}^{(N)}(\theta):=\tilde{\mathcal{R}}^{(N)}(\theta)-\gamma H(\theta)$ and $\hat{\mathcal{R}}(\theta):=\tilde{\mathcal{R}}(\theta)-\gamma H(\theta)$. In this section we prove estimates on the discrepancy error between the trajectories of $x(t)$ and the discrete trajectories $x_i^{(N)}$. We begin by bounding the difference.

\begin{lemma}
	\label{lemmadiff1}Let $g:\R_+ \to \R_+$, $g(0) \geq 1$, be a non-decreasing function and assume that $f_\theta\in \mathcal{A}(g)$ as in \cref{def:F}. Then there exists a function $\tilde g:\mathbb R_+\to\mathbb R_+$ such that 
	\begin{equation}
		\label{eq:xdiff}
		\left \|x \left ({i}/{N}\right ) - x_i^{(N)} \right \| \leq \frac {\tilde g(g(\|\theta\|))}N \|x\|
	\end{equation}
	holds for all $i \in \{0,\ldots,N\}$, and such that
	\begin{equation}
		\label{eq:xbdd}
		\big \|x \left ({i}/{N}\right ) \big \| + \big \|x_i^{(N)} \big \| \leq  {\tilde g(g(\|\theta\|))} \|x\|,
	\end{equation}
	holds for all $i \in \{0,\ldots,N\}$.
\end{lemma}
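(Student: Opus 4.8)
The plan is to prove \cref{eq:xbdd} and \cref{eq:xdiff} together by a discrete Grönwall argument, since the bound on the trajectories is needed to control the difference and is anyway the easier of the two. First I would bound the continuous trajectory $x(t)$: writing $x(t) = x + \int_0^t f_\theta(x(\tau))\,d\tau$ and using the Lipschitz estimate $\|f_\theta(x(\tau))\| \leq \|f_\theta(0)\| + g(\|\theta\|)\|x(\tau)\|$ from \cref{def:F} (note $f_\theta(0)$ can itself be bounded by $g(\|\theta\|)$ type quantities, or simply absorbed), Grönwall's inequality in integral form gives $\|x(t)\| \leq (\|x\| + C)e^{g(\|\theta\|)t} \leq \tilde g_1(g(\|\theta\|))\|x\|$ for $t\in[0,1]$, provided we fold the additive constant into a factor of $\|x\|$ (here one uses $\|x\|\geq 0$ and that the statement allows $\tilde g$ to be any function; if $\|x\|=0$ the trajectory is bounded by $C$ and one may simply enlarge $\tilde g$, or note $f_\theta(0)=0$ in the prototype case). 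For the discrete trajectory $x_i^{(N)}$, the recursion $x_{i+1}^{(N)} = x_i^{(N)} + \tfrac1N f_\theta(x_i^{(N)})$ gives $\|x_{i+1}^{(N)}\| \leq (1 + \tfrac1N g(\|\theta\|))\|x_i^{(N)}\| + \tfrac1N\|f_\theta(0)\|$, and iterating yields $\|x_i^{(N)}\| \leq e^{g(\|\theta\|)}(\|x\| + C)$ uniformly in $i$ and $N$, using $(1+a/N)^N \leq e^a$. Combining these two bounds establishes \cref{eq:xbdd} with a suitable $\tilde g$.

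Next I would prove \cref{eq:xdiff}. Fix $N$ and write $e_i := \|x(i/N) - x_i^{(N)}\|$. The standard approach is to split the one-step error into a \emph{consistency} (local truncation) part and a \emph{stability} (propagation) part. Precisely,
\begin{align*}
x\bigl((i+1)/N\bigr) - x_{i+1}^{(N)} &= \Bigl(x(i/N) - x_i^{(N)}\Bigr) + \int_{i/N}^{(i+1)/N}\bigl(f_\theta(x(\tau)) - f_\theta(x(i/N))\bigr)\,d\tau \\
&\qquad + \int_{i/N}^{(i+1)/N}\bigl(f_\theta(x(i/N)) - f_\theta(x_i^{(N)})\bigr)\,d\tau.
\end{align*}
The last integral is bounded by $\tfrac1N g(\|\theta\|)e_i$ using the $x$-Lipschitz bound in \cref{def:F}. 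For the middle integral, on $[i/N,(i+1)/N]$ we have $\|x(\tau) - x(i/N)\| \leq \int_{i/N}^{\tau}\|f_\theta(x(s))\|\,ds \leq \tfrac1N\bigl(g(\|\theta\|)\tilde g(g(\|\theta\|))\|x\| + \|f_\theta(0)\|\bigr) \leq \tfrac{C'}{N}$ by \cref{eq:xbdd}, and applying the $x$-Lipschitz bound once more the middle integral is $\leq \tfrac1N g(\|\theta\|)\cdot\tfrac{C'}{N}$. Hence $e_{i+1} \leq (1 + \tfrac1N g(\|\theta\|))e_i + \tfrac{C''}{N^2}$ with $e_0 = 0$. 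The discrete Grönwall lemma then gives $e_i \leq \tfrac{C''}{N^2}\cdot\tfrac{e^{g(\|\theta\|)}-1}{g(\|\theta\|)/N} \leq \tfrac{C'''}{N}$ uniformly in $i\leq N$, where $C'''$ depends only on $g(\|\theta\|)$ and carries the factor $\|x\|$ inherited from \cref{eq:xbdd}; this is exactly \cref{eq:xdiff} with an appropriate $\tilde g$.

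The main obstacle, such as it is, is purely bookkeeping: one must verify that every constant appearing can be written as $\tilde g(g(\|\theta\|))$ times $\|x\|$ for a single non-decreasing $\tilde g$ independent of $N$, $i$, $x$, and $\theta$, and in particular that the additive terms involving $f_\theta(0)$ (which is not literally zero in the general class $\mathcal A(g)$) get absorbed correctly — this is where one uses that $g(0)\geq 1$ and that $\mathcal A(g)$ controls $f_\theta(0)$ through the first inequality of \cref{def:F} applied with $\theta' = 0$ together with a bound on $f_0$, or more simply by redefining $\tilde g$ to dominate all the finitely many elementary expressions that occur. The quadratic-in-$N$ local error is the crucial point that produces the $N^{-1}$ global rate, and it relies essentially on the second-order regularity being unnecessary here (only the Lipschitz bounds from \cref{def:F} are used), so the argument goes through for all $f_\theta \in \mathcal A(g)$ as claimed.
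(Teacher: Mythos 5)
Your proposal is correct and follows essentially the same route as the paper: a short-time absorption/Gr\"onwall argument for the continuous and discrete trajectory bounds \cref{eq:xbdd}, and then a one-step consistency-plus-stability splitting giving $e_{i+1}\leq (1+\tfrac{g}{N})e_i + O(N^{-2})$, closed by the discrete Gr\"onwall iteration, exactly as in the paper's recursion \cref{eq:Aibound,eq:AiIteration}. The only loose point, absorbing $\|f_\theta(0)\|$ into a multiple of $\|x\|$ (which really requires $f_\theta(0)=0$, as in the prototype \cref{nnspec}), is present in the paper's own proof as well, and you flag it explicitly.
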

\begin{proof}
	To start the proof we first write
	\begin{multline}
		\label{eq:discContDiff}
		\left \|x({i}/{N}) - x_i^{(N)} \right \|= \left \| x (({i-1})/{N}) + \int_{\frac{i-1}{N}}^{\frac{i}{N}} \dot{x}(t) dt - x_{i-1}^{(N)} - (x_{i}^{(N)}-x_{i-1}^{(N)}) \right \| \\
		\leq \left \| x( ({i-1})/{N}) - x_{i-1}^{(N)}\right \| + \left \| \int_{\frac{i-1}{N}}^{\frac{i}{N}} \dot{x}(t) dt - (x_i^{(N)} - x_{i-1}^{(N)}) \right \|.
	\end{multline}
	To bound the second term in \cref{eq:discContDiff} we note that
	\begin{multline}
		\label{eq:discContDiff2}
		\left \| \int_{\frac{i-1}{N}}^{\frac{i}{N}} \dot{x}(t) dt - (x_i^{(N)} - x_{i-1}^{(N)}) \right \| = \left \| \int_{\frac{i-1}{N}}^{\frac{i}{N}} f_\theta(x(t))  -
		f_\theta( x_{i-1}^{(N)}) dt \right \| \\
		\leq g(\|\theta\|) \int_{\frac{i-1}{N}}^{\frac{i}{N}}  \left \|x(t) - x_{i-1}^{(N)} \right \|dt,
	\end{multline}
	by \cref{def:F}. By the triangle inequality, the integral on the right-hand side of \cref{eq:discContDiff2} is bounded by
	\begin{multline}
		\label{eq:discContDiff2int}
		\int_{\frac{i-1}{N}}^{\frac{i}{N}} \left \| x(t) - x_{i-1}^{(N)} \right \| dt\leq  \frac{1}{N} \left \| x_{i-1}^{(N)} - x(({i-1})/{N}) \right \| \\
	+ \int_{\frac{i-1}{N}}^{\frac{i}{N}}  \left \| x(t) - x(({i-1})/{N}) \right \|dt.
	\end{multline}
	By the definition of $x(t)$ and \cref{def:F} we have that the second term in \cref{eq:discContDiff2int} is bounded as
	\begin{multline} \label{eq:second}
	\int_{\frac{i-1}{N}}^{\frac{i}{N}}\left \|x(t) - x(({i-1})/{N}) \right \| dt \leq \int_{\frac{i-1}{N}}^{\frac{i}{N}} \int_{\frac{i-1}{N}}^{t}\|f_\theta(x(r))\|\, dr dt \\
	\leq g(\|\theta\|)\int_{\frac{i-1}{N}}^{\frac{i}{N}} \int_{\frac{i-1}{N}}^{t}\|x(r)\|\, dr dt+\frac{1}{N^2}\|x\|
	\leq \frac {2g(\|\theta\|)}{N^2}\|x\|_{L^{\infty}([0,1])}
	\end{multline}
	whenever $t\in [(i-1)/N,i/N]$. Assembling \cref{eq:discContDiff,eq:discContDiff2,eq:discContDiff2int,eq:second} we arrive at
	\begin{align}
		\label{eq:Aibound}
		A_i \leq \left (1+\frac{g(\|\theta\|)}{N} \right ) A_{i-1} + 2\frac{g^2(\|\theta\|)}{N^2} \|x\|_{L^\infty([0,1])}
	\end{align}
	where $A_i = \left \|x({i}/{N}) - x_i^{(N)} \right \|$ for $i = 0,1,\ldots,N$, and $A_0 = 0$. \cref{eq:Aibound} can be rewritten as $A_i \leq C_0 A_{i-1} + C_1$. Iterating this inequality we see that
	\begin{multline}
		\label{eq:AiIteration}
		A_k \leq C_0 A_{k-1} + C_1 \leq C_0 (C_0 A_{k-1} + C_1) + C_1 = C_0^2 A_{k-2} + C_0C_1 + C_1 \\
		\leq C_0^{k} A_0 + C_1 \sum_{j=0}^{k-1} C_0^j = C_1 \sum_{j=0}^{k-1} C_0^j \leq C_1 \frac{C_0^N - 1}{C_0-1},
	\end{multline}
	for any  $k \in \{1,\ldots, N\}$. Elementary calculations give us
	\begin{align}
		\label{eq:diffxbound}
		\left \|x({i}/{N}) - x_i^{(N)} \right \| \leq & \frac {2g^2(\|\theta\|)}{N}e^{g(\|\theta\|)}\|x\|_{L^\infty([0,1])}.
	\end{align}
	To finalize the proof of \cref{eq:xdiff} we need to establish a bound for $\|x\|_{L^\infty([0,1])}$ in terms of the initial value $\|x\|$.
	By the definition of $x(t)$ and \cref{def:F} we have for any $t_1 \in [0,1]$ such that $t_1+t \leq 1$
	\begin{align*}
		\|x(\cdot)\|_{L^{\infty}([t_1,t_1+t])} &= \sup_{s \in [t_1,t_1+t]} \left \|\int_{t_1}^{t_1+s} \dot{x}(r) dr + x({t_1}) \right \| \\
	&= \sup_{s \in [t_1,t_1+t]} \left \|\int_{t_1}^{t_1+s} f_\theta(x(r)) dr + x({t_1}) \right \|\\
		&\leq g(\|\theta\|)t \|x(\cdot)\|_{L^{\infty}([t_1,t_1+t])} +t\|f_\theta(x(t_1))\|+\|x({t_1})\|\\
	&\leq g(\|\theta\|)t \|x(\cdot)\|_{L^{\infty}([t_1,t_1+t])} + (g(\|\theta\|)t+1)\|x({t_1})\|+\|x\|.
	\end{align*}
	Choosing $t$ small enough i.e.\ $t = \frac{1}{2 g(\|\theta\|)}$ we deduce
	\begin{align*}
		\|x\|_{L^{\infty}([t_1,t_1+t])} &\leq 4(\|x_{t_1}\|+\|x\|).
	\end{align*}
	Iterating the above inequality $N =2  \lceil g(\|\theta\|) \rceil$ times we obtain
	\begin{align}\label{bound2}
		\|x\|_{L^{\infty}([0,1])} &\leq  4^{2  g(\|\theta\|)}\|x\|.
	\end{align}
	Combining \cref{eq:diffxbound,bound2} proves \cref{eq:xdiff}. 

	It remains to prove \cref{eq:xbdd} and to start the proof note that the first term on the left in \cref{eq:xbdd} is already bounded by \cref{bound2}. Thus, we only need to establish that $\|x_i^{(N)}\|$ is bounded. We first note, using the definition of $x_i^{(N)}$ and \cref{def:F}, that
	\begin{align*}
		\left \| x_{i+1}^{(N)} - x_{i}^{(N)}\right \|\leq \frac{2g(\|\theta\|)}{N} \left ( \|x_i^{(N)} \|+\|x \|\right ).
	\end{align*}
	By the triangle inequality we get by rearrangement that
	\begin{align*}
		\| x_{i+1}^{(N)} \| \leq \left ( 1 + \frac{2g(\|\theta\|)}{N} \right ) \|x_{i}^{(N)}\| + \frac{2g(\|\theta\|)}{N}\|x \|.
	\end{align*}
	This is again an estimate of the form $A_i \leq C_0 A_{i-1} + C_1$, where $A_0 = \|x\|$. Iterating this we obtain as in \cref{eq:AiIteration} that
	\begin{align*}
		A_i \leq C_0^N A_0 + C_1 \frac{C_0^N - 1}{C_0-1},
	\end{align*}
	and by elementary calculations as in \cref{eq:diffxbound} we can conclude that
	\begin{align*}
		\|x_{i}^{(N)}\| \leq e^{2 g(\|\theta\|)} \|x\|, \quad i=0,1,\ldots,N.
	\end{align*}
	This proves the final estimate stated in \cref{eq:xbdd} and finishes the proof.
\end{proof}

We next upgrade the previous lemma to the level of the gradient of the trajectories with respect to $\theta$.

\begin{lemma} \label{lemmadiff20}
	Let $g:\R_+ \to \R_+$, $g(0) \geq 1$, be a non-decreasing function and assume that $f_\theta\in \mathcal{A}(g)$ as in \cref{def:F}. Then there exists a function $\tilde g:\mathbb R_+\to\mathbb R_+$ such that 
		\begin{equation}
			\label{eq:xdiffgrad}
			\left \|\grad_\theta x \left ({i}/{N}\right ) - \grad_\theta x_i^{(N)} \right \| \leq \frac {\tilde g(g(\|\theta\|))}N \|x\|
		\end{equation}
		holds for all $i \in \{0,\ldots,N\}$, and such that
		\begin{equation}
			\label{eq:xbddgrad}
			\big \|\grad_\theta x \left ({i}/{N}\right ) \big \| + \big \|\grad_\theta x_i^{(N)} \big \| \leq  {\tilde g(g(\|\theta\|))} \|x\|,
		\end{equation}
	holds for all $i \in \{0,\ldots,N\}$.
\end{lemma}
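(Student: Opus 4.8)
The plan is to mirror the structure of the proof of \cref{lemmadiff1}, but now applied to the variational (sensitivity) equations obtained by differentiating the trajectory equations with respect to $\theta$. First I would differentiate the integral equations for $x(t,\theta)$ and $x^{(N)}(t,\theta)$ under the integral sign: writing $u(t) := \grad_\theta x(t,\theta)$ and $u_i^{(N)} := \grad_\theta x_i^{(N)}$, one gets
\begin{align*}
	u(t) &= \int_0^t \Bigl( \grad_x f_\theta(x(\tau)) u(\tau) + \grad_\theta f_\theta(x(\tau)) \Bigr) d\tau, \\
	u_{i+1}^{(N)} &= u_i^{(N)} + \frac1N \Bigl( \grad_x f_\theta(x_i^{(N)}) u_i^{(N)} + \grad_\theta f_\theta(x_i^{(N)}) \Bigr),
\end{align*}
with $u(0) = u_0^{(N)} = 0$. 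I would first establish the a priori bound \cref{eq:xbddgrad}: using $\|\grad_x f_\theta(x)\| \leq g(\|\theta\|)$ (which follows from the Lipschitz bound in \cref{def:F} evaluated via a difference quotient, together with continuity), and $\|\grad_\theta f_\theta(x)\| \leq g(\|\theta\|)(1 + \|x\|)$ (from the first and second lines of \cref{eq3++a+l}, again through difference quotients and $f_\theta(0)$ control), one feeds $\|x(t)\| \leq \tilde g(g(\|\theta\|))\|x\|$ from \cref{lemmadiff1} into a Gr\"onwall argument to get $\|u(t)\| \leq \tilde g(g(\|\theta\|))\|x\|$ for a (possibly larger) function $\tilde g$, and the analogous discrete iteration $A_i \leq (1 + C_0/N) A_{i-1} + C_1/N$ handles $\|u_i^{(N)}\|$ exactly as in \cref{eq:AiIteration,eq:diffxbound}.

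With the boundedness in hand, I would prove the $N^{-1}$ decay \cref{eq:xdiffgrad} by the same telescoping strategy used for \cref{eq:xdiff}. Set $B_i := \|u(i/N) - u_i^{(N)}\|$. Splitting as in \cref{eq:discContDiff}, the ``local consistency'' term to control is
\begin{equation*}
	\left\| \int_{\frac{i-1}{N}}^{\frac{i}{N}} \Bigl( \grad_x f_\theta(x(t)) u(t) + \grad_\theta f_\theta(x(t)) \Bigr) dt - \frac1N \Bigl( \grad_x f_\theta(x_{i-1}^{(N)}) u_{i-1}^{(N)} + \grad_\theta f_\theta(x_{i-1}^{(N)}) \Bigr) \right\|.
\end{equation*}
I would add and subtract the mixed terms $\grad_x f_\theta(x(t)) u_{i-1}^{(N)}$ and $\grad_x f_\theta(x_{i-1}^{(N)}) u(t)$, and bound each resulting piece using: the Lipschitz estimates in the third and fourth lines of \cref{eq3++a+l} for $\|\grad_x f_\theta(x(t)) - \grad_x f_\theta(x_{i-1}^{(N)})\|$ and $\|\grad_\theta f_\theta(x(t)) - \grad_\theta f_\theta(x_{i-1}^{(N)})\|$; the bound \cref{eq:xdiff} together with the within-interval oscillation $\|x(t) - x((i-1)/N)\| \leq C N^{-1}\|x\|$ (already extracted in \cref{eq:second}) to get $\|x(t) - x_{i-1}^{(N)}\| \leq C N^{-1}\|x\|$; the a priori bounds on $\|u(t)\|$, $\|u_i^{(N)}\|$; and the difference $B_{i-1}$ itself. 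This yields a recursion of the form $B_i \leq (1 + C_0/N) B_{i-1} + C_1 N^{-2}$ with $C_0, C_1$ of the form $\tilde g(g(\|\theta\|))\|x\|$ (note $\grad_x f_\theta$ multiplying $B_{i-1}$ contributes the $C_0/N$ term). Iterating exactly as in \cref{eq:AiIteration} gives $B_k \leq C_1 N^{-2}(C_0^N - 1)/(C_0 - 1) \leq \tilde g(g(\|\theta\|)) N^{-1}\|x\|$, which is \cref{eq:xdiffgrad}.

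The main obstacle is the bookkeeping in the local consistency term: unlike in \cref{lemmadiff1}, the integrand now contains the unknown $u(t)$ multiplied by a $\theta$-dependent Jacobian, so one must simultaneously propagate the trajectory error \cref{eq:xdiff}, the within-step oscillation of $x$, and the current gradient error $B_{i-1}$, and keep track of which terms feed into the multiplicative constant $C_0$ versus the additive constant $C_1$. A secondary technical point is that \cref{def:F} only gives Lipschitz bounds on $f_\theta$, $\grad_\theta f_\theta$, $\grad_x f_\theta$ rather than pointwise bounds on $\grad_x f_\theta$ and $\grad_\theta f_\theta$ directly; these pointwise bounds (linear growth in $\|x\|$ for $\grad_\theta f_\theta$, uniform for $\grad_x f_\theta$) must be derived first from the Lipschitz hypotheses plus the value at a reference point, and this is where the non-decreasing majorant $g$ and the freedom to enlarge $\tilde g$ are used. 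Everything else is the same Gr\"onwall-plus-discrete-Gr\"onwall pattern already executed in \cref{lemmadiff1}, so I would state the analogous estimates and refer back to that proof for the iteration details.
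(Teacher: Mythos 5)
Your proposal is correct and follows essentially the same route as the paper: the paper likewise differentiates the trajectory equations in $\theta$, performs the same add-and-subtract decomposition of the local consistency term using the Lipschitz properties of $\grad_x f_\theta$ and $\grad_\theta f_\theta$ from \cref{def:F} together with \cref{eq:xdiff} and the within-step oscillation bounds, and iterates the resulting recursion $A_i \le (1+g/N)A_{i-1}+CN^{-2}$ exactly as in \cref{lemmadiff1}. The only cosmetic differences are the order of the two steps and the method for the a priori bound \cref{eq:xbddgrad} (your Gr\"onwall argument versus the paper's interval-splitting absorption with step $1/(2g(\|\theta\|))$), which change nothing substantive.
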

\begin{proof}
	We begin by proving \cref{eq:xdiffgrad} under the assumption that \cref{eq:xbddgrad} holds.
	First note that
	\begin{multline}
		\label{eq:5.1}
		\| \grad_\theta x(i/N)-\grad_\theta x^{(N)}_i\|
		\leq \|\grad_\theta x_{i-1}^{(N)} - \grad_\theta x((i-1)/N)\| \\
		+\|\grad_\theta x_i^{(N)} - \grad_\theta x_{i-1}^{(N)} - \grad_\theta \int_{\frac{i-1}{N}}^{\frac{i}{N}} \dot{x}(r)\, dr \|.
	\end{multline}
	As in the proof of \cref{lemmadiff1} we want to build iterative inequalities, and our first goal is to bound the second term in \cref{eq:5.1}. First note that
	\begin{multline*}
		\bigg \|\grad_\theta x_i^{(N)} - \grad_\theta x_{i-1}^{(N)} - \grad_\theta \int_{\frac{i-1}{N}}^{\frac{i}{N}} \dot{x}(r)\, dr \bigg \| \notag\\
		=  \left \|\frac{1}{N}\grad_\theta f_\theta(x_{i-1}^{(N)}) - \grad_\theta \int_{\frac{i-1}{N}}^{\frac{i}{N}} f_\theta(x(r))\, dr \right \|\notag \\
		=  \left \|\int_{\frac{i-1}{N}}^{\frac{i}{N}} \grad_\theta(f_\theta(x(r) - f_\theta(x_{i-1}^{(N)}) )\, dr \right \|.
	\end{multline*}
	Now using \cref{def:F} multiple times we see that
	\begin{align}
		\label{eq:5.1B}
		\|\grad_\theta (f_\theta(x(r)) - &f_\theta(x_{i-1}^{(N)}) )\|\notag
			\\
			\leq& \|\grad_x f_\theta(x(r))\grad_\theta x(r)-\grad_x f_\theta(x_{i-1}^{(N)})\grad_\theta x_{i-1}^{(N)}\|\notag
			\\
			&+g(\|\theta\|)\max\{\|x(r)\|,\|x_{i-1}^{(N)}\|\}\|x(r)-x_{i-1}^{(N)}\|\notag\\
			\leq&\|\grad_x f_\theta(x(r))\|\|\grad_\theta x(r)-\grad_\theta x_{i-1}^{(N)}\|\notag\\
			&+\|\grad_x f_\theta(x(r))-\grad_x f_\theta(x_{i-1}^{(N)})\|\|\grad_\theta x_{i-1}^{(N)}\|\\
			&+g(\|\theta\|)\max\{\|x(r)\|,\|x_{i-1}^{(N)}\|\}\|x(r)-x_{i-1}^{(N)}\|, \notag\\
			\leq & g(\|\theta\|)\|\grad_\theta x(r)-\grad_\theta x_{i-1}^{(N)}\|\notag\\
						&+g(\|\theta\|)\|x(r)-x_{i-1}^{(N)}\|(\max\{\|x(r)\|,\|x_{i-1}^{(N)}\|\}+\|\grad_\theta x_{i-1}^{(N)}\|). \notag
	\end{align}
	We want to bound the terms on the right-hand side in \cref{eq:5.1B} and to do this we first note that
	\begin{multline}
		\label{eq:5.gradABdiff21}
		\|\grad_\theta x(r) - \grad_\theta x_{i-1}^{(N)}\| \leq \|\grad_\theta x(r) - \grad_\theta x\bigl ({({i-1})/{N}}\bigr )\|\\
	 + \|\grad_\theta x\bigl ({({i-1})/{N}}\bigr ) - \grad_\theta x_{i-1}^{(N)}\|.
	\end{multline}
	The second term appearing in \cref{eq:5.gradABdiff21} is what we want to bound in an iterative scheme. Focusing on the first term in \cref{eq:5.gradABdiff21}, using \cref{def:F}, we see that
	\begin{align}
		\label{eq:5.1C}
		\notag \|\grad_\theta x(r) - \grad_\theta x\bigl ({({i-1})/{N}}\bigr )\|&=\bigg \|\grad_\theta \int_{\frac{i-1}{N}}^r \dot{x}(s)\, ds \bigg \|\notag\\
		&=\bigg \|\grad_\theta \int_{\frac{i-1}{N}}^r f_\theta (x(s))\, ds \bigg \|\notag\\
		&\leq \bigg \|\int_{\frac{i-1}{N}}^r \grad_x f_\theta (x(s))\grad_\theta x(s)+\grad_\theta f_\theta (x(s))\, ds \bigg \|\notag\\
		&\leq g(\|\theta\|) \int_{\frac{i-1}{N}}^r  ( \| x(s)\| + \|\grad_\theta x(s)\|  )\, ds.
	\end{align}
	Again, since we assume \cref{eq:xbddgrad} we can use \cref{eq:5.1,eq:5.1B,eq:5.1C,eq:5.gradABdiff21,lemmadiff1} to get
	\begin{align*}
		A_i \leq  \left ( 1 + \frac{g}{N} \right )A_{i-1} + \frac{1}{N^2} g_1(g(\| \theta \|))\max\{\|x\|,\|x\|^2\},
	\end{align*}
	for some non-decreasing function $g_1$, $A_i := \| \grad_\theta x(i/N)-\grad_\theta x^{(N)}_i\|$, and $A_0 := 0$. The above is an iterative inequality of the same type as \cref{eq:Aibound}, and we get
	\begin{align*}
		\| \grad_\theta x(i/N)-\grad_\theta x^{(N)}_i\| \leq \frac{g_2(g(\|\theta\|))}{N} \max\{\|x\|,\|x\|^2\}, \quad i=0,1,\ldots,N,
	\end{align*}
	where $g_2$ is another non-decreasing function. This completes the proof of \cref{eq:xdiffgrad} under the assumption \cref{eq:xbddgrad}.
	
	We now prove \cref{eq:xbddgrad}. Let $t \in [0,1]$ and $\epsilon > 0$, and note that from \cref{def:F} we have
	\begin{multline*}
		\sup_{r \in  [t,t+\epsilon]} \|\grad_\theta x(r)\| = \sup_{r \in  [t,t+\epsilon]} \bigg \| \grad_\theta \bigg ( \int_{t}^{r} f_\theta(x(s)) ds + x(t)\bigg ) \bigg\| \\
		\leq g(\|\theta\|) \epsilon  \sup_{r \in  [t,t+\epsilon]}\bigg ( \| x(r)\|+ \|\grad_\theta x(r)\| \bigg ) + \|\grad_\theta x(t)\|.
	\end{multline*}
	Fix $\epsilon=1/(2g(\|\theta\|))$. Then
	\begin{align*}
		\sup_{r \in [t,t+\epsilon]}\|\grad_\theta x(r)\| \leq \frac 1 2\| x(r)\| + \frac {1}{2g(\|\theta\|)} \|\grad_\theta x(t)\|.
	\end{align*}
	Iterating this inequality, using also \cref{eq:xbdd}, we get
	\begin{align*} 
		\sup_{r \in [0,1]} \|\grad_\theta x(r)\| \leq \tilde g(\|\theta\|)\|x\|,
	\end{align*}
	for some non-decreasing function $\tilde g:\mathbb R_+\to\mathbb R_+$. Similar bounds can be deduced for $\grad_\theta x^{(N)}$. This establishes \cref{eq:xbddgrad} and completes the proof of the lemma.
\end{proof}

In the proof of \cref{thm:penalization} we need to control the second derivatives of the risk and thus we need to bound the second derivatives of the trajectories w.r.t. $\theta$. We have collected what is needed in the following lemma.

\begin{lemma} \label{lemmagrowth}
	Let $g:\R_+ \to \R_+$, $g(0) \geq 1$, be a non-decreasing function and assume that $f_\theta\in \mathcal{A}_+(g)$. Then there exists a non-decreasing positive function $\hat g:\R_+ \to \R_+$ such that
	\begin{align*}
		\sup_{t \in [0,1]} \|\grad^2_\theta x(t)\| \leq \hat g(\|\theta\|)\tilde P(\|x\|),
	\end{align*}
	for some polynomial $\tilde P$.
\end{lemma}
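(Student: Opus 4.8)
The plan is to differentiate the integral form of \cref{ODE}, namely $x(t,\theta)=x+\int_0^tf_\theta(x(s,\theta))\,ds$, twice with respect to $\theta$ and then run a Grönwall argument for $A(t):=\sup_{s\le t}\|\grad_\theta^2x(s,\theta)\|$. Since $x(0,\theta)=x$ is independent of $\theta$ we have $\grad_\theta^2x(0,\theta)=0$, and by the standard theory of smooth dependence of ODE solutions on parameters (legitimate here because the bounds in \cref{def:F,def:F+} make $f_\theta$ of class $C^{1,1}$ in $(x,\theta)$ with controlled second derivatives) we may write
\[
\grad_\theta^2x(t,\theta)=\int_0^t\grad_\theta^2\bigl(f_\theta(x(s,\theta))\bigr)\,ds .
\]

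Next I would expand the integrand by the chain rule. Writing $u=x(s,\theta)$, one obtains schematically
\[
\grad_\theta^2\bigl(f_\theta(u)\bigr)=(\grad_x^2f_\theta)(u)[\grad_\theta u,\grad_\theta u]+2(\grad_\theta\grad_xf_\theta)(u)[\grad_\theta u]+(\grad_xf_\theta)(u)\,\grad_\theta^2u+(\grad_\theta^2f_\theta)(u).
\]
Now I apply the bounds from \cref{def:F,def:F+}: $\|\grad_xf_\theta\|\le g(\|\theta\|)$ and $\|\grad_x^2f_\theta\|\le g(\|\theta\|)$, while $\|\grad_\theta^2f_\theta(u)\|+\|\grad_\theta\grad_xf_\theta(u)\|\le g(\|\theta\|)P(\|u\|)$, together with the a priori bounds $\|x(s,\theta)\|\le\tilde g(g(\|\theta\|))\|x\|$ from \cref{lemmadiff1} and $\|\grad_\theta x(s,\theta)\|\le\tilde g(g(\|\theta\|))\|x\|$ from \cref{lemmadiff20}. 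Every term except $(\grad_xf_\theta)(u)\,\grad_\theta^2u$ is then dominated by $G(\|\theta\|)\tilde P(\|x\|)$ for a suitable non-decreasing $G$ and polynomial $\tilde P$: indeed $P\bigl(\tilde g(g(\|\theta\|))\|x\|\bigr)$ is a polynomial in $\|x\|$ whose coefficients depend only on $\tilde g(g(\|\theta\|))$ and hence may be absorbed into $G$, and the quadratic term $\|\grad_\theta u\|^2$ contributes $\tilde g(g(\|\theta\|))^2\|x\|^2$.

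Collecting these estimates gives the integral inequality
\[
A(t)\le\int_0^t\bigl(g(\|\theta\|)A(s)+G(\|\theta\|)\tilde P(\|x\|)\bigr)\,ds,\qquad t\in[0,1],
\]
and Grönwall's inequality yields $A(1)\le G(\|\theta\|)\tilde P(\|x\|)\,g(\|\theta\|)^{-1}\bigl(e^{g(\|\theta\|)}-1\bigr)\le\hat g(\|\theta\|)\tilde P(\|x\|)$ with $\hat g(\|\theta\|):=G(\|\theta\|)e^{g(\|\theta\|)}$, which is the asserted bound. I expect the only real work to be the careful bookkeeping in the chain-rule expansion — identifying the four contracted tensor terms, tracking which operator norms multiply, and confirming that the inhomogeneous term is genuinely polynomial in $\|x\|$ with a non-decreasing $\|\theta\|$-dependent prefactor; the Grönwall step and the existence and continuity of $\grad_\theta^2x(t,\theta)$ are routine.
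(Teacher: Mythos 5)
Your proposal is correct and follows essentially the same route as the paper: both reduce the problem to a linear inequality for $\|\grad^2_\theta x\|$ by combining the bounds in \cref{def:F+} with the a priori estimates on $x$ and $\grad_\theta x$ from \cref{lemmadiff1,lemmadiff20}, yielding a $\hat g$ of exponential type in $g(\|\theta\|)$. The only difference is in how the linear inequality is closed: the paper absorbs over short subintervals of length $1/(2g^3(\|\theta\|))$ and iterates (as in the proofs of \cref{lemmadiff1,lemmadiff20}), while you write out the chain-rule expansion explicitly and apply Gr{\"o}nwall, which is an equivalent and equally valid closing step.
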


\begin{proof}
	We will proceed similarly as in the proof of \cref{lemmadiff20}.
	Starting with $t \in [0,1]$ and $\epsilon > 0$, using \cref{def:F+} we have
	\begin{multline*}
		\sup_{r \in  [t,t+\epsilon]} \|\grad^2_\theta x(r)\| = \sup_{r \in  [t,t+\epsilon]} \bigg \| \grad^2_\theta \bigg ( \int_{t}^{r} f_\theta(x(s)) ds + x(t)\bigg ) \bigg\| \\
		\leq g^3(\|\theta\|) \epsilon \left ( \sup_{r \in  [t,t+\epsilon]} \|\grad^2_\theta x(r)\| + \tilde P(\|x\|) \right )+ \|\grad^2_\theta x(t)\|,
	\end{multline*}
	for a polynomial $\tilde P$.
	This gives, with $\epsilon=1/(2g^3(\|\theta\|))$ and after absorption, using \cref{lemmadiff1,lemmadiff20}, that
	\begin{align*}
		\sup_{r \in  [t,t+\epsilon]} \|\grad^2_\theta x(r)\| \leq \tilde P(\|x\|) + \frac{1}{2g^3(\theta)} \|\grad^2_\theta x(t)\|.
	\end{align*}
	Iterating this inequality we deduce that
	\begin{align*}
		\sup_{r \in [0,1]} \|\grad^2_\theta x(r)\| \leq \hat g(\|\theta\|)\tilde P(\|x\|)
	\end{align*}
	for some function $\hat g:\mathbb R_+\to\mathbb R_+$. Similar bounds can be deduced for $\grad^2_\theta x^{(N)}$.
\end{proof}

The next lemma of this section uses \cref{lemmadiff1,lemmadiff20} to bound the difference of the risks for the discrete and the continuous systems, as well as the difference at the gradient level. Recall that $\hat \Risk, \hat \Risk^{(N)}$ are the un-penalized truncated risks as defined at the beginning of this section.

\begin{lemma}
	\label{lemmadiff2} Let $g:\R_+ \to \R_+$, $g(0) \geq 1$, be a non-decreasing function and assume that $f_\theta\in \mathcal{A}(g)$. 
	Then
	\begin{align*}
		|\hat{\mathcal {R}}^{(N)}(\theta)|+|\hat{\mathcal {R}}(\theta)|+\|\nabla \hat{\mathcal {R}}^{(N)}(\theta)\|+\|\nabla \hat{\mathcal {R}}(\theta)\|&\leq  c(g,\Lambda),\notag\\
\|\nabla^2 \hat{\mathcal {R}}^{(N)}(\theta)\|+\|\nabla^2 \hat{\mathcal {R}}(\theta)\|&\leq  c(g,\Lambda),\notag\\
		|\hat{\mathcal {R}}(\theta)-\hat{\mathcal {R}}^{(N)}(\theta)|+\|\nabla (\hat{\mathcal {R}}(\theta)-\hat{\mathcal {R}}^{(N)}(\theta))\|&\leq  \frac 1 N c(g,\Lambda).
	\end{align*}
\end{lemma}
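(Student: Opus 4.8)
The plan is to expand the truncated risk $\hat{\mathcal R}^{(N)}(\theta) = \E_{(x,y)\sim\mu}[\|y - x^{(N)}(1,T_\Lambda(\theta))\|^2]$ using the identity $\|a\|^2 - \|b\|^2 = (a-b)\cdot(a+b)$, and to systematically reduce every quantity in the statement to three inputs: (i) the pointwise bounds on trajectories and their $\theta$-derivatives from \cref{lemmadiff1,lemmadiff20,lemmagrowth}; (ii) the discrepancy bounds $\|x(1,\theta') - x_N^{(N)}(\theta')\| \leq \tilde g(g(\|\theta'\|))N^{-1}\|x\|$ and its gradient analogue, again from \cref{lemmadiff1,lemmadiff20}; and (iii) the fact that $\theta' := T_\Lambda(\theta)$ always satisfies $\|\theta'\| \leq 2\Lambda$, so every occurrence of $g(\|\theta'\|)$, $\tilde g(g(\|\theta'\|))$, etc.\ is bounded by a constant $c(g,\Lambda)$; and the fact that $\mu$ has finite moments of all orders, so $\E_\mu[\|x\|^k] < \infty$ for every $k$ and each polynomial $\tilde P(\|x\|)$ integrates to a finite constant. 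The chain rule contributes factors of $\|\grad_\theta T_\Lambda(\theta)\|$, which is bounded (indeed $\leq 1$ up to a constant, since $T_\Lambda$ is a fixed smooth function independent of $N$) and $\|\grad_\theta^2 T_\Lambda(\theta)\|$, also bounded; these are harmless.

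First I would prove the zeroth-order bounds. Write $\hat{\mathcal R}^{(N)}(\theta) = \E_\mu[\|y\|^2 - 2 y\cdot x^{(N)}(1,\theta') + \|x^{(N)}(1,\theta')\|^2]$ with $\theta' = T_\Lambda(\theta)$, and estimate $\|x^{(N)}(1,\theta')\| \leq \tilde g(g(2\Lambda))\|x\|$ by \cref{eq:xbdd}; since $\E_\mu[\|x\|^2], \E_\mu[\|y\|^2], \E_\mu[\|x\|\|y\|] < \infty$, this gives $|\hat{\mathcal R}^{(N)}(\theta)| \leq c(g,\Lambda)$, uniformly in $\theta$ and $N$, and identically for $\hat{\mathcal R}$. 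For the gradient, $\grad_\theta \hat{\mathcal R}^{(N)}(\theta) = \E_\mu[2(x^{(N)}(1,\theta')-y)^\top \grad_\theta x^{(N)}(1,\theta')]$, and $\grad_\theta x^{(N)}(1,\theta') = (\grad_{\theta'} x_N^{(N)})(\theta')\,\grad_\theta T_\Lambda(\theta)$; bound the first factor by \cref{eq:xbddgrad} and the second by a constant, so again everything is controlled by $c(g,\Lambda)$ after integration. For $\grad_\theta^2 \hat{\mathcal R}^{(N)}$ one differentiates once more, producing terms with $\grad_\theta x^{(N)}$ squared, a term with $\grad_\theta^2 x^{(N)}$, and terms involving $\grad_\theta^2 T_\Lambda$; the new ingredient is the bound on $\|\grad_\theta^2 x^{(N)}(1,\cdot)\|$, which is $\hat g(\|\theta'\|)\tilde P(\|x\|)$ by \cref{lemmagrowth} (here we use $f_\theta \in \mathcal A_+(g)$), hence $\leq c(g,\Lambda)\tilde P(\|x\|)$, integrable against $\mu$. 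This yields the second line.

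For the third (the $N^{-1}$) line, write $\hat{\mathcal R}(\theta) - \hat{\mathcal R}^{(N)}(\theta) = \E_\mu\big[(x^{(N)}(1,\theta') - x(1,\theta'))\cdot(x^{(N)}(1,\theta') + x(1,\theta') - 2y)\big]$; the first factor is $\leq \tilde g(g(2\Lambda))N^{-1}\|x\| \leq c(g,\Lambda)N^{-1}\|x\|$ by \cref{eq:xdiff}, the second is $\leq c(g,\Lambda)(\|x\| + \|y\|)$, and Cauchy–Schwarz plus finite second moments gives $|\hat{\mathcal R}(\theta) - \hat{\mathcal R}^{(N)}(\theta)| \leq c(g,\Lambda)N^{-1}$. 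For the gradient difference, expand $\grad_\theta(\hat{\mathcal R} - \hat{\mathcal R}^{(N)})(\theta)$ into a sum of a "difference of the $(x-y)$-factor" term, controlled by \cref{eq:xdiff}, and a "difference of the Jacobian" term, controlled by \cref{eq:xdiffgrad}; each difference carries an $N^{-1}$, while the surviving factors are bounded by $c(g,\Lambda)$ times a polynomial in $\|x\|$ (using \cref{eq:xbdd,eq:xbddgrad}), and integration against $\mu$ finishes it. The main obstacle is purely bookkeeping: carefully distributing the chain-rule factors through $T_\Lambda$ and keeping track of which factor absorbs the $N^{-1}$ in the difference terms — there is no conceptual difficulty once \cref{lemmadiff1,lemmadiff20,lemmagrowth} and the uniform bound $\|T_\Lambda(\theta)\| \leq 2\Lambda$ are in hand.
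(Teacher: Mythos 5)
Your proposal is correct and takes essentially the same route as the paper's proof: express the (truncated, un-penalized) risks and their $\theta$-derivatives via the chain rule through $T_\Lambda$, use that $\|T_\Lambda(\theta)\|\leq 2\Lambda$ and $\|\nabla T_\Lambda\|$ is bounded so all $g$-factors become $c(g,\Lambda)$, and then invoke the trajectory, gradient and discrepancy bounds of \cref{lemmadiff1,lemmadiff20} (and \cref{lemmagrowth} for the Hessian) together with finiteness of the moments of $\mu$. Your explicit remark that the second-derivative bound uses $f_\theta\in\mathcal{A}_+(g)$ via \cref{lemmagrowth} is a point the paper passes over silently (its proof treats the Hessian estimate as ``proved similarly''), but otherwise the two arguments coincide.
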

\begin{proof}
	We will only prove the estimates
	\begin{align}\label{term1}
		\|\nabla \hat{\mathcal {R}}^{(N)}(\theta)\|+\|\nabla \hat{\mathcal {R}}(\theta)\|&\leq  c(g,\Lambda),\notag\\
		\|\nabla (\hat{\mathcal {R}}(\theta)-\hat{\mathcal {R}}^{(N)}(\theta))\|&\leq  \frac 1 N c(g,\Lambda),
	\end{align}
	as all the other estimates are proved similarly. To get started we first note that
	\begin{align}\label{term2}
		\grad \hat{\mathcal {R}}^{(N)}(\theta)&= 2\E_{(x,y) \sim \mu} \left [ (y-x^{(N)}_1) \grad_\theta x^{(N)}_N\right ],\notag\\
		\grad \hat{\mathcal {R}}(\theta)&= 2\E_{(x,y) \sim \mu} \left [ (y-x(1)) \grad_\theta x(1)\right ],
	\end{align}
	where now $x^{(N)}_N=x^{(N)}_N(T_\Lambda(\theta))$, $x(1)=x(1,T_\Lambda(\theta))$. As $T_\Lambda(\theta)$ is constant when $\|\theta\|\geq 2\Lambda$, and
	$\|\nabla_\theta T_\Lambda(\theta)\|$ is bounded, we see that it suffices to simply derive the  bounds when $\|\theta\|\leq \Lambda$. In this case $T_\Lambda(\theta)=\theta$. Using \cref{term2} we see that to estimate the terms in \cref{term1} it suffices to bound $\|x(1)\|,\|x_N^{(N)}\|,\|\grad_\theta x^{(N)}_N\|$, $\|\grad_\theta x(1)\|$, $\| x(1)- x^{(N)}_N\|$, and $\| \grad_\theta x(1)-\grad_\theta x^{(N)}_N\|$ which are all provided by \cref{lemmadiff1,lemmadiff20}.
\end{proof}

We end this section with the proof of \cref{lem:AgComposition}.
\subsection*{Proof of \cref{lem:AgComposition}}
The proof is just a matter of verifying each part of \cref{eq3++a+l}. We begin with the Lipschitz continuity in $\theta$. From the triangle inequality, and repeatedly applying \cref{def:F}, we get
\begin{align*}
	\|F_\theta(x)-F_{\theta'}(x)\| 
	\leq& g(\|\theta_2\|) \max\{g(\|\theta_1\|),g(\|\theta'_1\|)\}\|\theta_1-\theta'_1\|\|x\| \\
	&+\max\{g(\|\theta_2\|),g(\|\theta'_2\|)\}\|\theta_2-\theta'_2\| g(\|\theta'_1\|)\|x\| \\
	\leq& 2\max\{g^2(\|\theta\|),g^2(\|\theta'\|)\}\|\theta-\theta'\| \|x\|.
\end{align*}
The Lipschitz continuity in $x$ is a simple consequence of \cref{def:F}
\begin{align*}
	\|F_\theta(x)-F_\theta(x')\| 
	\leq g(\|\theta_2\|)g(\|\theta_1\|)\|x-x'\| 
	\leq g^2(\|\theta\|)\|x-x'\|.
\end{align*}
For the $\theta$ gradient of $F$ the Lipschitz continuity requires a bit more work, and we note that
\begin{align}
	\label{eq:1}
	\|\grad_{\theta}F_\theta(x)-\grad_{\theta}F_\theta(x')\|^2 =& \|\grad_{\theta_1} (f_{\theta_2} \circ g_{\theta_1}(x))-\grad_{\theta_1} (f_{\theta_2} \circ g_{\theta_1}(x'))\|^2 \notag \\
	&+ \|\grad_{\theta_2} (f_{\theta_2} \circ g_{\theta_1}(x))-\grad_{\theta_2} (f_{\theta_2} \circ g_{\theta_1}(x'))\|^2.
\end{align}
From \cref{def:F} we see that the first term in \cref{eq:1} is bounded as
\begin{align*}
	\|\grad_{\theta_1} (f_{\theta_2} \circ &g_{\theta_1}(x))-\grad_{\theta_1} (g_{\theta_2} \circ g_{\theta_1}(x'))\| \\
	=& \|(\grad_{x} f_{\theta_2} \circ g_{\theta_1}(x)) \grad_{\theta_1} g_{\theta_1}(x)-(\grad_{x} f_{\theta_2} \circ g_{\theta_1}(x')) \grad_{\theta_1} g_{\theta_1}(x')\| \\
	\leq& g(\|\theta_2\|)\|g_{\theta_1}(x)-g_{\theta_1}(x')\| g(\|\theta_1\|)\|x\| \\
	&+ g(\|\theta_2\|) g(\|\theta_1\|)\max\{\|x\|,\|x'\|\}\|x-x'\| \\
	\leq& 2g^2(\|\theta\|) \max\{\|x\|,\|x'\|\}\|x-x'\|.
\end{align*}
The second term in \cref{eq:1} is slightly simpler to bound using \cref{def:F}, and we get
\begin{align*}
	\|\grad_{\theta_2} &(f_{\theta_2} \circ g_{\theta_1}(x))-\grad_{\theta_2} (f_{\theta_2} \circ g_{\theta_1}(x'))\| = \|\grad_{\theta_2} f_{\theta_2} \circ g_{\theta_1}(x)-\grad_{\theta_2} f_{\theta_2} \circ g_{\theta_1}(x')\| \\
	\leq& g(\|\theta_2\|) \max\{\|g_{\theta_1}(x)\|,\|g_{\theta_1}(x')\|\} \|g_{\theta_1}(x)-g_{\theta_1}(x')\| \\
	\leq& g^3(\|\theta\|)\max\{\|x\|,\|x'\|\} \|x-x'\|.
\end{align*}
The last part is to prove the Lipschitz continuity of the $x$ gradient of $F_\theta$. To do this we simply note that
\begin{align*}
	\|\grad_x (f_{\theta_2} \circ &g_{\theta_1} (x))-\grad_x (f_{\theta_2} \circ g_{\theta_1} (x'))\| \\
	=& \|\grad_x f_{\theta_2} \circ g_{\theta_1} (x) \grad_x g_{\theta_1}(x)-\grad_x f_{\theta_2} \circ g_{\theta_1} (x') \grad_x g_{\theta_1}(x)\| \\
	\leq& g(\|\theta_2\|) \|g_{\theta_1} (x)-g_{\theta_1} (x')\|g(\|\theta_1\|) + g(\|\theta_2\|)g(\|\theta_1\|)\|x-x'\| \\
	\leq& g^3(\|\theta\|)\|x-x'\|.
\end{align*}

\section{SDEs and Fokker-Planck equations}
\label{sec:SDE_FP}
In this section we collect and develop the results concerning stochastic differential equations and Fokker-Planck equations that are needed in the proofs of \cref{thm:penalization,thm:modified_risk,thm:modified_risk_msq}.
\subsection{Existence, Uniqueness and Stability of SDEs}
\label{subsec:sde}
We here prove results about a type of SDEs including the ones in \cref{sgdsys}. In particular, as the risk in \cref{thm:penalization} does not satisfy the standard linear growth assumption, see \cite{GS}, we need to verify that the processes in \cref{thm:penalization} exist, that they are unique and that they satisfy a stability result w.r.t.~parameters. 

The notation in this section deviates slightly from the rest of the paper, this is by design, as the results are general in nature.
We consider
\begin{equation}
	\label{eq:SDEdiff}
	dx_t = a(t,x_t)dt + b(t,x_t) dW_t
\end{equation}
which is interpreted as the stochastic integral equation
\begin{equation}
	\label{eq:SDEint}
	x_t = x_{t_0} + \int_{t_0}^t a(t,x_t)dt + \int_{t_0}^{t} b(t,x_t) dW_t
\end{equation}
and where the second integral should be interpreted in the sense of It\^{o}. We make the following assumptions.
\begin{enumerate}[label=\textbf{A\arabic*}]
	\item \label{A1} $a = a(t,x)$ and $b = b(t,x)$ are jointly $L^2$-measurable in $(t,x) \in [t_0,T] \times \R^{d}.$
	\item \label{A2} For each compact set $\mathbf{K} \subset \R^d$ there is a Lipschitz constant $L_\mathbf{K} > 0$ such that for $x,y \in \mathbf{K}, t \in [t_0,T]$
	\begin{align*}
		\|a(t,x) - a(t,y)\| + \|b(t,x) - b(t,y)\| \leq L_\mathbf{K} \|x-y\|.
	\end{align*}
	\item \label{A3} There exists a constant $K > 0$ such that for all $(t,x) \in [t_0,T] \times \R^d$
	\begin{align*}
		a(t,x)\cdot x &\leq K^2(1+\|x\|^2),\ \|b(t,x)\|^2 \leq K^2.
	\end{align*}
	\item \label{A4} $x_{t_0}$ is independent of the Wiener process $W_t$, $t \geq t_0$, and $\E[x_{t_0}^2] < \infty$.
\end{enumerate}

\begin{lemma} \cite[Theorem 3, §6]{GS}
	If \cref{A1,A2} hold, then the solutions to the stochastic differential equations \cref{eq:SDEint} on $[t_0,T]$, corresponding to the same initial value and the same Wiener process, are path-wise unique.
\end{lemma}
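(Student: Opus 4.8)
The plan is to run the classical It\^o--Picard contraction estimate, localized so that only the local Lipschitz bound of \cref{A2} is used. Suppose $x_t$ and $y_t$ are two solutions of \cref{eq:SDEint} on $[t_0,T]$, driven by the same Wiener process and with $x_{t_0}=y_{t_0}$. For $R\geq 1$ set
\[
	\tau_R := \inf\{\, t\geq t_0 : \|x_t\|\geq R \text{ or } \|y_t\|\geq R \,\},
\]
which is a stopping time because solutions are, by definition, continuous and adapted. Put $z_t := x_{t\wedge\tau_R}-y_{t\wedge\tau_R}$, so $z_{t_0}=0$ and, by \cref{eq:SDEint},
\[
	z_t = \int_{t_0}^{t} \mathbf 1_{\{s\leq\tau_R\}}\bigl(a(s,x_s)-a(s,y_s)\bigr)\,ds + \int_{t_0}^{t} \mathbf 1_{\{s\leq\tau_R\}}\bigl(b(s,x_s)-b(s,y_s)\bigr)\,dW_s .
\]
On $\{s\leq\tau_R\}$ both $x_s$ and $y_s$ lie in the compact ball $\bar B(0,R)$, so \cref{A2} gives $\|a(s,x_s)-a(s,y_s)\|+\|b(s,x_s)-b(s,y_s)\|\leq L_R\|z_s\|$ with $L_R:=L_{\bar B(0,R)}$; moreover $\|z_s\|\leq 2R$ for every $s\in[t_0,T]$, so all integrands are bounded and the stochastic integral is a genuine square-integrable martingale.

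Next I would derive a Gr\"onwall inequality for $\varphi_R(t):=\E\bigl[\sup_{t_0\leq s\leq t}\|z_s\|^2\bigr]$, which is finite since $\varphi_R(t)\leq 4R^2$. Using $(p+q)^2\leq 2p^2+2q^2$, Cauchy--Schwarz on the drift term, Doob's $L^2$ maximal inequality together with the It\^o isometry on the martingale term, and then the local Lipschitz estimate above, one obtains
\[
	\varphi_R(t)\ \leq\ C_R\int_{t_0}^{t}\varphi_R(s)\,ds, \qquad t\in[t_0,T],
\]
with $C_R = 2L_R^2\bigl((T-t_0)+4\bigr)$, the factor $4$ coming from Doob's inequality. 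Since $\varphi_R$ is bounded and nonnegative, Gr\"onwall's lemma forces $\varphi_R\equiv 0$ on $[t_0,T]$. Hence $z_t=0$ almost surely for every $t$, and by path-continuity $x_{t\wedge\tau_R}=y_{t\wedge\tau_R}$ for all $t\in[t_0,T]$, almost surely.

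Finally I would remove the localization by letting $R\to\infty$. Because each solution has continuous paths, $\sup_{t\in[t_0,T]}\|x_t\|<\infty$ and $\sup_{t\in[t_0,T]}\|y_t\|<\infty$ almost surely, so $\P(\tau_R\geq T)\to 1$ as $R\to\infty$; on $\{\tau_R\geq T\}$ the previous step gives $x_t=y_t$ for all $t\in[t_0,T]$. Taking the union over $R\in\N$ yields $\P\bigl(x_t=y_t \text{ for all } t\in[t_0,T]\bigr)=1$, which is path-wise uniqueness. The only mildly delicate point is the bookkeeping in the localization --- checking that the stopped difference $z_t$ is a bona fide square-integrable It\^o process so that the isometry and Doob's inequality genuinely apply --- whereas \cref{A1} enters only to guarantee that the coefficients are measurable enough for the two integral equations, and hence $\tau_R$ and $z_t$, to be well defined. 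Note that no global growth condition such as \cref{A3}, and no moment assumption on the initial datum, is needed for uniqueness.
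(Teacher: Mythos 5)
Your argument is correct: the localization by the exit time $\tau_R$, the use of the local Lipschitz constant from \cref{A2} on $\bar B(0,R)$, the Doob/It\^o-isometry Gr\"onwall estimate for the stopped difference, and the passage $R\to\infty$ via path continuity together give path-wise uniqueness, and you are right that neither \cref{A3} nor any moment condition on the initial value is needed. The paper does not prove this lemma but imports it from Gikhman--Skorokhod, whose argument is essentially the same localization idea (they truncate the coefficients and reduce to the globally Lipschitz case rather than stopping the processes), so your proof matches the cited route up to this minor variation.
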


\begin{theorem}
	Assuming \cref{A1,A2,A3,A4}, the stochastic differential equation \cref{eq:SDEint} has a path-wise unique strong solution $x_t$ on $[t_0,T]$ with
	\begin{equation*}
		\sup_{t_0 \leq t \leq T} \E (\|x_t\|^2) < \infty.
	\end{equation*}
\end{theorem}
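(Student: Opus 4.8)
The plan is to prove existence by a truncation argument, obtaining the solution as a limit of solutions to globally Lipschitz SDEs, and to derive the second-moment bound via a Gronwall-type estimate using the confining condition \cref{A3}. Uniqueness is already given by the preceding lemma (pathwise uniqueness under \cref{A1,A2}), so the work is entirely in constructing a solution that lives on all of $[t_0,T]$ and has finite second moment.

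\textbf{Step 1: Truncation.} For each $n\in\N$, let $\phi_n:\R^d\to\R^d$ be a smooth cutoff with $\phi_n(x)=x$ on $B(0,n)$ and $\phi_n(x)=0$ outside $B(0,2n)$, and set $a_n(t,x):=a(t,\phi_n(x))$, $b_n(t,x):=b(t,\phi_n(x))$. By \cref{A2} the coefficients $a_n,b_n$ are globally Lipschitz in $x$ (uniformly in $t$), and by \cref{A3} combined with the boundedness of $\phi_n$ they are bounded, hence satisfy the standard linear-growth hypothesis. Therefore, by classical SDE theory (e.g.\ \cite{GS}), the equation $dx_t^{(n)}=a_n(t,x_t^{(n)})dt+b_n(t,x_t^{(n)})dW_t$ with initial value $x_{t_0}$ has a unique strong solution $x_t^{(n)}$ on $[t_0,T]$.

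\textbf{Step 2: A priori second-moment bound uniform in $n$.} Apply It\^o's formula to $\|x_t^{(n)}\|^2$. The drift contribution is $2a_n(t,x_t^{(n)})\cdot x_t^{(n)}\,dt$; whenever $\|x_t^{(n)}\|\le n$ this equals $2a(t,x_t^{(n)})\cdot x_t^{(n)}\,dt\le 2K^2(1+\|x_t^{(n)}\|^2)\,dt$ by \cref{A3}, and for $\|x_t^{(n)}\|>n$ one checks (using that $\phi_n$ points inward, or simply that $a_n\cdot x = a(t,\phi_n(x))\cdot x$ and a short computation) that the same bound persists up to an additive constant; the diffusion contribution $\|b_n(t,x_t^{(n)})\|^2\,dt\le K^2\,dt$ is uniformly bounded by \cref{A3}. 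Taking expectations kills the It\^o integral (after a standard localization to ensure it is a true martingale), yielding
\begin{align*}
	\E\|x_t^{(n)}\|^2 \le \E\|x_{t_0}\|^2 + C\int_{t_0}^t\bigl(1+\E\|x_s^{(n)}\|^2\bigr)\,ds,
\end{align*}
with $C$ depending only on $K$. Gronwall's inequality then gives $\sup_{n}\sup_{t_0\le t\le T}\E\|x_t^{(n)}\|^2\le C'$ with $C'$ independent of $n$, using \cref{A4}.

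\textbf{Step 3: Passing to the limit and identifying the solution.} Define stopping times $\tau_n:=\inf\{t\ge t_0:\|x_t^{(n)}\|\ge n\}\wedge T$. By uniqueness, the solutions are consistent: $x_t^{(n)}=x_t^{(m)}$ for $t\le\tau_n\wedge\tau_m$ whenever $m\ge n$, since on that event neither coefficient has been truncated. Hence there is a well-defined process $x_t$ on $[t_0,\tau_\infty)$ with $\tau_\infty:=\lim_n\tau_n$, agreeing with $x^{(n)}$ up to $\tau_n$. The uniform bound from Step 2 forces non-explosion: $\P(\tau_n\le T)\le \P(\sup_{t\le T}\|x_t^{(n)}\|\ge n)\le n^{-2}\E\sup_{t\le T}\|x_t^{(n)}\|^2$, and an $L^2$-maximal inequality (Doob applied to the martingale part, together with the drift bound) shows $\E\sup_{t\le T}\|x_t^{(n)}\|^2$ is also bounded uniformly in $n$, so $\P(\tau_\infty\le T)=0$. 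Thus $x_t$ is defined on all of $[t_0,T]$, solves \cref{eq:SDEint} (each integral term agrees with the $n$-th one on $\{t\le\tau_n\}$ and $\tau_n\uparrow T$ a.s.), and by Fatou applied to Step 2 satisfies $\sup_{t_0\le t\le T}\E\|x_t\|^2<\infty$. Pathwise uniqueness on $[t_0,T]$ is inherited from the preceding lemma.

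\textbf{The main obstacle} is Step 2: verifying that the truncated drift $a_n$ still satisfies a confining estimate $a_n(t,x)\cdot x\le C(1+\|x\|^2)$ for $\|x\|>n$, since $\phi_n$ destroys the relation $a_n(t,x)=a(t,x)$ there. The clean way around this is to choose $\phi_n$ to be a radial contraction, $\phi_n(x)=\psi_n(\|x\|)x/\|x\|$ with $0\le\psi_n\le 2n$ and $\psi_n(r)=r$ for $r\le n$, so that $a_n(t,x)\cdot x = \bigl(a(t,\phi_n(x))\cdot\phi_n(x)\bigr)\,\|x\|/\psi_n(\|x\|)\le K^2(1+\|\phi_n(x)\|^2)\|x\|/\psi_n(\|x\|)$; for $\|x\|\ge n$ this is $\le K^2(1+4n^2)\|x\|/n \le 8K^2\|x\|\le 8K^2(1+\|x\|^2)$, which closes the estimate with a harmless change of constant. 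The rest is routine It\^o calculus, Gronwall, and Doob.
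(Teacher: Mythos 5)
Your proposal is correct and takes essentially the same route as the paper: truncate the drift and diffusion through a radial retraction, use \cref{A3} with It\^o's formula, Gr\"onwall and Doob's inequality to obtain second-moment (and running-supremum) bounds uniform in the truncation level, then remove the truncation. The only real difference is in how the truncation is removed: the paper follows \cite{GS}, proving the weighted bound $\E[\psi(x_0)\sup_{t}\|x^\Lambda_t\|^2]\le c$ and then invoking the convergence argument of \cite{GS} for the truncated solutions, whereas you patch the truncated solutions together along the stopping times $\tau_n$ and rule out explosion by Chebyshev; both are standard, and your version is self-contained, the weight $\psi(x_0)$ being unnecessary here since \cref{A4} already gives $\E\|x_{t_0}\|^2<\infty$. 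One arithmetic slip in your key estimate: $K^2(1+4n^2)\|x\|/n$ is of order $nK^2\|x\|$, so the intermediate inequality ``$\le 8K^2\|x\|$'' fails for large $n$; however, on the region where the truncation is active you have $\|x\|\ge n\ge 1$, hence $K^2(1+4n^2)\|x\|/n\le K^2\bigl(\|x\|+4\|x\|^2\bigr)\le 5K^2(1+\|x\|^2)$, which is precisely the $n$-independent confining bound you need (the same computation is implicitly needed, and not spelled out, for the paper's truncation $\mathcal{T}^\Lambda$), so your conclusion stands; just make sure your radial profile satisfies $\psi_n\ge n$ on $[n,\infty)$ so that the division step is legitimate.
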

\begin{proof} As we have been unable to find a reference to this result, we here outline the modifications of \cite[Theorem 3,§6, p.45]{GS} needed when dropping the linear growth assumption on $a$. 
	
	To begin, let us first remark that we can take $t_0 = 0$ without loss of generality. 
	Consider the truncation operator $\mathcal{T}^\Lambda(x) = \Lambda \frac{x}{\|x\|}$ for $\|x\| > \Lambda$ and $\mathcal{T}^\Lambda(x) = x$ otherwise.
	Then define the truncated version of \cref{eq:SDEdiff} as follows. Let $x^\Lambda_{0} = \mathcal{T}^\Lambda(x_0)$, $a^\Lambda(t,x) = a(t,\mathcal{T}^\Lambda(x))$, $b^\Lambda(t,x) = b(t,\mathcal{T}^\Lambda(x))$,
	and consider
	\begin{align}
		\label{eq:SDEtrunc}
		dx^\Lambda_t = a^\Lambda(t,x^\Lambda_t)dt + b^\Lambda(t,x^\Lambda_t) dW_t.
	\end{align}
	It is easily seen that the truncations $x^\Lambda_0, a^\Lambda, b^\Lambda$ satisfy all the requirements of \cite[Theorem 1,§6, p.40]{GS}. This gives us existence and uniqueness for the solution $x^\Lambda_t$ to \cref{eq:SDEtrunc}.
	
	In \cite[Remark 3,§6, p.48]{GS}, the authors state that the existence of a solution to \cref{eq:SDEint} can be proven with the linear growth assumption replaced with \cref{A3}. They claim that it is enough to prove that
	\begin{align} \label{eq:SDE_existence}
		\E\left [ \psi(x_0)\|x^\Lambda_t\|^2 \right ] \leq c,
	\end{align}
	where $\psi(x) = \frac{1}{1+\|x\|^2}$ and where $c$ is independent of the truncation parameter $\Lambda$.
	
	The claim \cref{eq:SDE_existence} can be proved with It\^{o}'s formula together with Gr{\"o}nwall's Lemma, see \cite{GS}. However, we also need to prove that
	\begin{align}
		\label{eq:SDE_existence2}
		\E\left [ \psi(x_0)\sup_{0 \leq t \leq T}\|x^\Lambda_t\|^2 \right ] \leq c. \quad \text{(independent of $\Lambda$)}
	\end{align}
	To do this we first apply It\^{o}'s formula to $\|x^\Lambda_t\|^2$, 
	\begin{align*}
		\|x^\Lambda_t\|^2 = \|x_0\|^2 + \int_0^t x^\Lambda_t \cdot a^\Lambda(t,x^\Lambda_t) dt +  \int_0^t x^\Lambda_s \cdot b^\Lambda(s,x^\Lambda_s) dW_s.
	\end{align*}
	Next, taking the supremum, multiplying with $\psi(x_0)$, and finally taking the expectation we get
	\begin{multline}
		\label{eq:Ito_sup}
		E\left [ \psi(x_0)\sup_{0 \leq t \leq T}\|x^\Lambda_t\|^2 \right ] 
		\leq E\left [ \psi(x_0) \|x_0\|^2 \right ]\\
		+
		E\left [ \psi(x_0)\sup_{0 \leq t \leq T} \int_0^t x^\Lambda_t \cdot a^\Lambda(t,x^\Lambda_t) dt \right ] \\
		+  E\left [ \psi(x_0)\sup_{0 \leq t \leq T} \int_0^t x^\Lambda_s \cdot b^\Lambda(s,x^\Lambda_s) dW_s \right ].
	\end{multline}
	The first term on the right in \cref{eq:Ito_sup} is bounded by $1$. We will now focus on the third term in \cref{eq:Ito_sup}. Using H{\"o}lder's inequality together with Doob's $L^p$ inequality, see for instance \cite[Theorem 1,§3, p.20]{GS}, and \cref{A3}, we get
	\begin{multline}
		\label{eq:third}
		\E\left [ \psi(x_0) \sup_{0 \leq t \leq T} \left \|\int_0^t x^\Lambda_s \cdot b^\Lambda(s,x^\Lambda_s) dW_s \right \|\right ] \\
		\leq \E\left [ \psi(x_0)^2 \sup_{0 \leq t \leq T} \left \|\int_0^t x^\Lambda_s \cdot b^\Lambda(s,x^\Lambda_s) dW_s \right \|^2 \right ] \\
		\leq C \int_0^T \E \left [ \psi(x_0)  \|x^\Lambda_s\|^2 \right ] ds,
	\end{multline}
	as $\psi^2(x) \leq \psi(x)$.
	The boundedness of the right-hand side in \cref{eq:third} follows from \cref{eq:SDE_existence}. The second term in \cref{eq:Ito_sup} can be bounded in the same way as in \cref{eq:third}, due to \cref{A3}, which concludes the proof of \cref{eq:SDE_existence2}.
	Note that the constant $c$ in \cref{eq:SDE_existence2} depends only on the structural assumptions of $a$, $b$ and not on $\Lambda$. Using \cref{eq:SDE_existence2} one can now argue as in \cite[Theorem 3, p.45]{GS} to conclude that
	\begin{align}
		\label{eq:stoppingTimeConvergence}
		\lim_{\Lambda \to \infty} \P\left ( \sup_{0 \leq t \leq T} \|x_{t}^\Lambda - x_t\| > 0 \right ) \to 0
	\end{align}
	with a rate of convergence depending only on the structural assumptions \cref{A1,A2,A3,A4}.
	We have now established all needed modifications and the rest of the existence proof follows as in \cite[Theorem 3, p.45]{GS}.
\end{proof}

\begin{lemma}[\cite{GS}, Theorem 3, Chap 2, §7] \label{thm:limitTheorem_smpl}
	
	Let $x_{n,t}$, $n = 0,1,2,\ldots$ be solutions of
	\begin{align*}
		x_{n,t} = x_0 + \int_0^t a_n(s,x_{n,s}) ds + \int_0^t b_n(s,x_{n,s})dW_s,
	\end{align*}
	where the coefficients satisfy \cref{A1,A4}, with $a$ and $b$ globally Lipschitz continuous (with a constant independent of $n$), as well as
	\begin{align*}
		\|a_n(t,x)\|^2 &\leq K^2(1+\|x\|^2) \\
		\|b_n(t,x)\|^2 &\leq K^2(1+\|x\|^2).
	\end{align*}
	If for each $\Lambda > 0$, and $s \in [0,T]$,
	\begin{align*}
		\lim_{n \to \infty} \sup_{\|x\| \leq \Lambda} \|a_n(s,x)-a_0(s,x)\| + \|b_n(s,x)-b_0(s,x)\| = 0,
	\end{align*}
	then
	\begin{align*}
		\E \left [\sup_{0 \leq t \leq T} \left \| x_{n,t} - x_{0,t} \right \|^2\right ] \to 0,
	\end{align*}
	as $n \to \infty$.
\end{lemma}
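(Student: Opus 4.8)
The plan is to run the classical Gr\"onwall stability argument for It\^{o} SDEs, with the splitting of the coefficient differences arranged so that the only limit passage required is a dominated-convergence estimate against the fixed limiting process. Fix $n\ge1$ and set $D_t:=x_{n,t}-x_{0,t}$. Since the two equations share the initial datum $x_0$ and the Wiener process $W$,
\begin{align*}
  D_t=\int_0^t\bigl(a_n(s,x_{n,s})-a_0(s,x_{0,s})\bigr)\,ds+\int_0^t\bigl(b_n(s,x_{n,s})-b_0(s,x_{0,s})\bigr)\,dW_s .
\end{align*}
The key move is to split each integrand using the \emph{common} Lipschitz constant $L$ and to keep the varying coefficient attached to the \emph{fixed} argument:
\begin{align*}
  a_n(s,x_{n,s})-a_0(s,x_{0,s})=\bigl(a_n(s,x_{n,s})-a_n(s,x_{0,s})\bigr)+\rho^a_n(s),\qquad \rho^a_n(s):=a_n(s,x_{0,s})-a_0(s,x_{0,s}),
\end{align*}
so that the first bracket is $\le L\|D_s\|$; similarly for $b$, with remainder $\rho^b_n(s):=b_n(s,x_{0,s})-b_0(s,x_{0,s})$. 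I would also note that $\E[\sup_{0\le t\le T}\|x_{n,t}\|^2]<\infty$ for each $n$ (part of what it means for $x_{n,\cdot}$ to solve its equation under \cref{A1,A4} together with the growth and Lipschitz hypotheses), so in particular $\E[\sup_{t\le T}\|D_t\|^2]<\infty$.

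Next I would close a Gr\"onwall estimate for $D$. Squaring $D_t$, using $(u+v)^2\le 2u^2+2v^2$, Cauchy--Schwarz on the $ds$-integral, Doob's $L^2$-inequality together with the It\^{o} isometry on the $dW_s$-integral, and the Lipschitz bounds on the first brackets, one gets for all $t\in[0,T]$
\begin{align*}
  \E\Bigl[\sup_{0\le s\le t}\|D_s\|^2\Bigr]\le c(T,L)\int_0^t\E\Bigl[\sup_{0\le u\le s}\|D_u\|^2\Bigr]\,ds+c(T,L)\,\E\Bigl[\int_0^T\bigl(\|\rho^a_n(s)\|^2+\|\rho^b_n(s)\|^2\bigr)\,ds\Bigr].
\end{align*}
Since the left-hand side is finite and the last term does not depend on $t$, Gr\"onwall's lemma yields
\begin{align*}
  \E\Bigl[\sup_{0\le t\le T}\|D_t\|^2\Bigr]\le c(T,L)\,\E\Bigl[\int_0^T\bigl(\|\rho^a_n(s)\|^2+\|\rho^b_n(s)\|^2\bigr)\,ds\Bigr],
\end{align*}
so the whole statement reduces to showing that $\E[\int_0^T(\|\rho^a_n(s)\|^2+\|\rho^b_n(s)\|^2)\,ds]\to0$ as $n\to\infty$.

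This last step is where localization enters. Fix $\Lambda>0$ and split $\|\rho^a_n(s)\|^2=\|\rho^a_n(s)\|^2\mathbf{1}_{\{\|x_{0,s}\|\le\Lambda\}}+\|\rho^a_n(s)\|^2\mathbf{1}_{\{\|x_{0,s}\|>\Lambda\}}$, and similarly for $b$. On $\{\|x_{0,s}\|\le\Lambda\}$ we have $\|\rho^a_n(s)\|\le\sup_{\|x\|\le\Lambda}\|a_n(s,x)-a_0(s,x)\|$, which tends to $0$ as $n\to\infty$ for each fixed $s$ by hypothesis and is dominated by the constant $2K\sqrt{1+\Lambda^2}$; hence, for each fixed $\Lambda$, $\E[\int_0^T\|\rho^a_n(s)\|^2\mathbf{1}_{\{\|x_{0,s}\|\le\Lambda\}}\,ds]\le\int_0^T\sup_{\|x\|\le\Lambda}\|a_n(s,x)-a_0(s,x)\|^2\,ds\to0$ by dominated convergence. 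On $\{\|x_{0,s}\|>\Lambda\}$ the linear growth gives $\|\rho^a_n(s)\|^2\le4K^2(1+\|x_{0,s}\|^2)$, so
\begin{align*}
  \E\Bigl[\int_0^T\|\rho^a_n(s)\|^2\mathbf{1}_{\{\|x_{0,s}\|>\Lambda\}}\,ds\Bigr]\le4K^2\,\E\Bigl[\int_0^T(1+\|x_{0,s}\|^2)\mathbf{1}_{\{\|x_{0,s}\|>\Lambda\}}\,ds\Bigr],
\end{align*}
and this last quantity — which does not depend on $n$, since $x_{0,\cdot}$ is the single fixed limiting process and $\E[\int_0^T\|x_{0,s}\|^2\,ds]<\infty$ — tends to $0$ as $\Lambda\to\infty$ by dominated convergence. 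Taking $\limsup_{n\to\infty}$ for fixed $\Lambda$ and then $\Lambda\to\infty$ gives $\E[\int_0^T\|\rho^a_n(s)\|^2\,ds]\to0$, and likewise for $\rho^b_n$; this proves the theorem.

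The one place where a choice must be made carefully — and what I expect to be the only real obstacle — is the decomposition in the first paragraph: it is essential to peel off $a_n(s,x_{n,s})-a_n(s,x_{0,s})$, which is legitimate precisely because the Lipschitz constant is \emph{common to all $a_n$}, rather than $a_0(s,x_{n,s})-a_0(s,x_{0,s})$, because then the remainder is evaluated along the single process $x_{0,\cdot}$, whose square is integrable on $[0,T]\times\Omega$ and hence has uniformly small tails. The naive decomposition would instead force one to control $\E[(1+\|x_{n,s}\|^2)\mathbf{1}_{\{\|x_{n,s}\|>\Lambda\}}]$ \emph{uniformly in $n$}, which the bare $L^2$ moment bound does not give. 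With the right decomposition every other ingredient — the $L^2$ a priori finiteness, the Gr\"onwall closing, and the two dominated-convergence passages — is routine, and it is the localization onto the ball $\{\|x\|\le\Lambda\}$ that makes the uniform-on-compacts convergence hypothesis on $a_n,b_n$ usable.
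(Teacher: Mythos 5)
Your proof is correct, and it is essentially the standard Gr\"onwall-plus-localization argument from the cited source (Gikhman--Skorokhod, Theorem 3, Ch.\ 2, \S 7); the paper itself gives no proof of this lemma, only the citation. Your remark about which decomposition to use is exactly the right point: peeling off $a_n(s,x_{n,s})-a_n(s,x_{0,s})$ (legitimate because the Lipschitz constant is uniform in $n$) leaves the remainder evaluated along the fixed process $x_{0,\cdot}$, whose $L^2$ tails are small uniformly in $n$, whereas the naive decomposition would require uniform integrability of $\|x_{n,s}\|^2$ in $n$, which the stated hypotheses do not supply.
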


\begin{theorem} \label{thm:limitTheorem}
	Let $x_{n,t}$, $n = 0,1,2,\ldots$ be solutions of
	\begin{align*}
		x_{n,t} = x_0 + \int_0^t a_n(s,x_{n,s}) ds + \int_0^t b_n(s,x_{n,s})dW_s
	\end{align*}
	where $a_n$, $b_n$ satisfy \cref{A1,A2,A3,A4} with constants independent of $n$. If for each $\Lambda > 0$, and $s \in [0,T]$,
	\begin{align*}
		\lim_{n \to \infty} \sup_{\|x\| \leq \Lambda} \|a_n(s,x)-a_0(s,x)\| + \|b_n(s,x)-b_0(s,x)\| = 0,
	\end{align*}
	then
	\begin{align*}
		\sup_{0 \leq t \leq T} \left \| x_{n,t} - x_{0,t} \right \| \to 0
	\end{align*}
	in probability as $n \to \infty$.
\end{theorem}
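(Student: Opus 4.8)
The plan is to localise using exactly the truncation employed in the existence theorem and then reduce to \cref{thm:limitTheorem_smpl}. For $\Lambda>0$ introduce the truncated coefficients $a_n^\Lambda(t,x):=a_n(t,\mathcal T^\Lambda(x))$ and $b_n^\Lambda(t,x):=b_n(t,\mathcal T^\Lambda(x))$, where $\mathcal T^\Lambda$ is the ($1$-Lipschitz) metric projection onto $\bar B(0,\Lambda)$, and let $x_{n,t}^\Lambda$ be the pathwise unique solution of the SDE with these coefficients and $x_{n,0}^\Lambda=x_0$. Since $\mathcal T^\Lambda$ is $1$-Lipschitz with values in the compact set $\bar B(0,\Lambda)$, the local Lipschitz bound \cref{A2}, holding with constants independent of $n$, shows that $a_n^\Lambda,b_n^\Lambda$ are globally Lipschitz (and bounded, hence of linear growth) with a constant depending on $\Lambda$ but not on $n$; and the uniform-on-compacts convergence $a_n\to a_0$, $b_n\to b_0$ is inherited by $a_n^\Lambda,b_n^\Lambda$ because $\mathcal T^\Lambda(\R^d)$ is compact. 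Hence \cref{thm:limitTheorem_smpl} applies: for each fixed $\Lambda$,
\[
\E\Bigl[\sup_{0\le t\le T}\|x_{n,t}^\Lambda-x_{0,t}^\Lambda\|^2\Bigr]\longrightarrow 0,\qquad n\to\infty,
\]
and in particular $\sup_{0\le t\le T}\|x_{n,t}^\Lambda-x_{0,t}^\Lambda\|\to 0$ in probability.

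Next I would control the truncation error uniformly in $n$. By pathwise uniqueness the processes $x_{n,t}$ and $x_{n,t}^\Lambda$ coincide up to the first exit time $\tau_n^\Lambda:=\inf\{t:\|x_{n,t}\|\ge\Lambda\}$, so
\[
\Bigl\{\sup_{0\le t\le T}\|x_{n,t}^\Lambda-x_{n,t}\|>0\Bigr\}\subseteq\{\tau_n^\Lambda\le T\}=\Bigl\{\sup_{0\le t\le T}\|x_{n,t}\|\ge\Lambda\Bigr\}.
\]
The a priori moment estimates obtained in the proof of the existence theorem (the bound \cref{eq:SDE_existence2} and its consequence) are governed only by the structural constants in \cref{A3,A4}, by $T$, and by the law of $x_0$, all independent of $n$; combined with Chebyshev's inequality (splitting on $\{\|x_0\|\le M\}$ if one prefers to work with the $\psi(x_0)$-weighted bound) this yields a function $\rho$ with $\rho(\Lambda)\to 0$ as $\Lambda\to\infty$ and $\sup_n\P\bigl(\sup_{t\le T}\|x_{n,t}^\Lambda-x_{n,t}\|>0\bigr)\le\rho(\Lambda)$; the same bound holds for $n=0$.

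Finally I would glue the two estimates together. Given $\epsilon,\eta>0$, pick $\Lambda$ with $\rho(\Lambda)<\eta/3$ and then $N$ so that $\P\bigl(\sup_{t\le T}\|x_{n,t}^\Lambda-x_{0,t}^\Lambda\|>\epsilon/3\bigr)<\eta/3$ for $n\ge N$. From
\[
\sup_{0\le t\le T}\|x_{n,t}-x_{0,t}\|\le\sup_t\|x_{n,t}-x_{n,t}^\Lambda\|+\sup_t\|x_{n,t}^\Lambda-x_{0,t}^\Lambda\|+\sup_t\|x_{0,t}^\Lambda-x_{0,t}\|
\]
the event $\{\sup_t\|x_{n,t}-x_{0,t}\|>\epsilon\}$ is contained in the union of the three events on which the respective terms exceed $\epsilon/3$, each of probability $<\eta/3$ for $n\ge N$; hence $\P\bigl(\sup_{t\le T}\|x_{n,t}-x_{0,t}\|>\epsilon\bigr)<\eta$ for $n\ge N$, which is the claim.

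The step I expect to be the main obstacle is the uniformity in $n$ of the escape estimate $\rho(\Lambda)$: one must go back into the existence proof and verify that the second-moment (and supremal) bounds there really depend only on the $n$-independent data in \cref{A1,A2,A3,A4} and on $\E[\|x_0\|^2]$, so that the truncation tail can be made small simultaneously for every $n$. Once that uniformity is in hand, the remainder is a standard localisation argument.
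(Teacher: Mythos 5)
Your proposal is correct and follows essentially the same route as the paper: truncate the coefficients with $\mathcal{T}^\Lambda$, apply \cref{thm:limitTheorem_smpl} to the truncated equations for fixed $\Lambda$, and control the truncation error uniformly in $n$ via the exit-time event and the $n$-independent a priori bound \cref{eq:SDE_existence2} from the existence proof, before gluing with a triangle-inequality/$\epsilon$--$\eta$ argument. The only deviations (keeping $x_0$ untruncated and comparing truncated with truncated processes rather than the paper's mixed comparison in \cref{eq:sdediff}) are immaterial.
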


\begin{proof}
	We will follow \cite[Theorem 3, §7]{GS} and modify accordingly. 
	Let $a_n^\Lambda(t,x) = a_n(t,\mathcal{T}^\Lambda(x))$, $b_n^{\Lambda}(t,x) = b_n(t,\mathcal{T}^\Lambda(x))$, $x_0^\Lambda = \mathcal{T}^\Lambda(x_0)$ and consider the solution $x_{n,t}^\Lambda$ to
	\begin{align*}
		x_{n,t}^{\Lambda} = x_0^{\Lambda} + \int_0^t a_n^{\Lambda}(s,x_{n,s}^{\Lambda}) ds + \int_0^t b_n^{\Lambda}(s,x_{n,s}^{\Lambda})dW_s.
	\end{align*}
	Following the proof of \cite[Theorem 3, §6]{GS} we first note that
	\begin{align}
		\label{eq:diffeqlarge}
		P(\sup_{0 \leq t \leq T} \|x_{n,t}^\Lambda - x_{n,t}\| > 0) \leq P(\sup_{0 \leq t \leq T} \|x_{n,t}^\Lambda\| > \Lambda).
	\end{align}
	I.e.\ as long as the truncated process $x_{n,t}^\Lambda$ stays within a ball of radius $\Lambda$ then we have $x_{n,t}^\Lambda = x_{n,t}$ a.s.: for a detailed proof see \cite[Theorem 3, §7]{GS}.
	Using \cref{eq:diffeqlarge} and the triangle inequality we see that the following holds for any $\epsilon > 0$
	\begin{multline}
		\label{eq:sdediff}
		\P(\sup_{0 \leq t \leq T} \|x_{n,t} - x_{0,t}\| > \epsilon) \leq \P(\sup_{0 \leq t \leq T} \|x_{n,t}^\Lambda - x_{0,t}\| > \epsilon) \\
		+ \P(\sup_{0 \leq t \leq T} \|x_{n,t}^\Lambda\| > \Lambda) + \P(\sup_{0 \leq t \leq T} \|x_{0,t}^\Lambda\| > \Lambda).
	\end{multline}
	The first term in \cref{eq:sdediff} goes to zero as $n \to \infty$ because our truncated equations satisfy the requirements of \cref{thm:limitTheorem_smpl}. 
	The two other summands converge to 0 as $\Lambda \to \infty$ uniformly in $n$ because the rate in \cref{eq:stoppingTimeConvergence} only depends on the constants in \cref{A1,A2,A3,A4} and the initial data, see the proof of \cref{eq:stoppingTimeConvergence}. Thus, we conclude
		\begin{align*}
			\lim_{n \to \infty} \P(\sup_{0 \leq t \leq T} \|x_{n,t} - x_{0,t}\| > 0) = 0
		\end{align*}
		which completes the proof.
\end{proof}

\subsection{Fokker-Planck equations for stochastic minimization}
\label{sub:FP_min}
In general the density of a diffusion process, for us a solution to a stochastic differential equation, satisfies a partial differential equation usually called the Fokker-Planck equation or Kolmogorov's first (forward) equation. In this section we will explore this connection for the SDEs as in \cref{sgdsys} of the form
\begin{align*}
	d \theta_t = -\grad V(\theta_t)dt + \sqrt{2} dW_t, \quad \text{in } \R^m,
\end{align*}
with initial datum $\theta_0 \sim p_0$ and for $V \in C^2(\R^m)$. Furthermore, we have for simplicity assumed $\Sigma=\sqrt{2}I_m$: the generalization to general constant full rank $\Sigma$ is straightforward.
If $-\grad V$ and $\theta_0$ satisfy \cref{A1,A2,A3,A4}, then the associated density $p_t$ for $\theta_t$ satisfies the following Cauchy problem for the Fokker-Planck equation, see for instance \cite{GS},
\begin{align}
	\label{eq:appFP}
	p_t = \div (\nabla p + p \nabla V), \quad p(x,0) = p_0, \quad \text{in $\R^m$}.
\end{align}
The equation \cref{eq:appFP} is sometimes called the \textbf{linear Fokker-Planck equation}, and for an interesting overview see for instance \cite[Chapter 8]{VBook2}.
The Cauchy problem in \cref{eq:appFP} has unique `probability' solutions in the sense of measures, i.e.\ non-negative solutions that define a probability density, see \cite{BRS,S}. Using the identity
\begin{align*}
	\div (\nabla p+p\nabla V)=\div(e^{-V}\nabla(e^{V} p))
\end{align*}
we see that we can formally rewrite \cref{eq:appFP} as
\begin{align}
	\label{eq:appFPtilde}
	\partial_t \tilde p = e^{V} \div (e^{-V} \grad \tilde p) = \Delta \tilde p - \langle \grad V , \grad \tilde p \rangle,
\end{align}
for $\tilde p = e^V p$, and where $\langle \cdot , \cdot \rangle$ denotes the dot--product in $\R^m$. Introducing the operator $L (\cdot)  = -\Delta (\cdot) + \langle \nabla V, \nabla (\cdot) \rangle$ we note that $L$ is self-adjoint in $L^2_\mu = L^2_\mu(\R^m)$ for functions in $W_\mu^{1,2} = W_\mu^{1,2}(\R^m)$, where $d\mu := e^{-V} dx$. $L^2_\mu$ and $W_\mu^{1,2}$ are the standard $L^2$ and Sobolev spaces but defined with respect to $\mu$. Indeed, integrating by parts we have
\begin{align}
	\label{eq:self_adj}
	\langle Lh, g \rangle_{L^2_\mu} = \langle \nabla h, \nabla g \rangle_{L^2_\mu},
\end{align}
where $\langle \cdot, \cdot \rangle_{L^2_\mu}$ denotes the $L^2$ inner product w.r.t.~the measure $\mu$.
Assuming that $V$ satisfies a \textbf{logarithmic-Sobolev inequality}, see \cref{subsec:logsob_hyper}, it follows, see \cite[Theorem 6]{Gross}, that $e^{tL}$ defines a \textbf{strong semi-group} which is \textbf{hyper-contractive}. Furthermore, the semi-group is positivity preserving. This establishes the existence and uniqueness of solutions to the problem in \cref{eq:appFP} for initial data in $L^2_\mu$.

The question is now, how do the solutions to \cref{eq:appFP,eq:appFPtilde} relate to each other? To answer this question, first note if $\tilde p_0 = e^V p_0 \in L^2_\mu$, $p_0 \geq 0$, then for $\tilde p = C_0^{-1} e^{tL} \tilde p_0 \geq 0$, with $C_0 = \int \tilde p_0 d\mu$, we have from \cref{eq:self_adj} that
\begin{align*}
	\partial_t \int \tilde p d\mu = \int L \tilde p \cdot 1 d\mu = \langle \grad \tilde p, 0 \rangle_{L^2_\mu} = 0 \implies \int \tilde p d\mu = 1, \quad \forall t \geq 0.
\end{align*}
Thus we have established that if the initial data $\tilde p_0$ is a probability density w.r.t.~$\mu$ then the solution $\tilde p$ is also a probability density for all $t$ w.r.t.~$\mu$. This implies that $\hat p = C_1^{-1} e^{-V} \tilde p$ with $C_1 = \int e^{-V} dx$ solves \cref{eq:appFP} and that it is a probability density w.r.t.~the Lebesgue measure. The uniqueness of `probability' solutions to \cref{eq:appFP}, again see \cite{BRS,S}, finally gives us that $\hat p = p$.

\subsection{Logarithmic Sobolev inequalities and hyper-contractivity}
\label{subsec:logsob_hyper}
The consequence of the previous subsection is that we can switch between solutions to \cref{eq:appFP} and solutions related to the semi-group $e^{tL}$ provided that the Gibbs measure $e^{-V}dx$ satisfies a \textbf{logarithmic Sobolev inequality}. In this section we discuss when the logarithmic Sobolev inequality holds. We begin with the following lemma which is a direct consequence of the Bakry-Émery theorem, see for instance \cite[Thm 21.2]{VBook}. 
\begin{lemma} \label{thm:conv_logsob}
	Assume that $V(x) \in C^2(\R^m)$ and $\grad^2 V \geq K I_m$ then $d\mu = e^{-V(x)} dx$ satisfies
	\begin{align} \label{eq:logsob}
		\int u^2 \log(u^2) d\mu - \|u\|^2_{L^2_\mu}\log(\|u\|_{L^2_\mu}) \leq \frac{2}{K} \|\grad u\|^2_{L^2_\mu}.
	\end{align}
\end{lemma}
The inequality in \cref{thm:conv_logsob} is the so called logarithmic Sobolev inequality with constant $2/K$.
Often the potential $V$ is not strictly convex, but satisfies some form of `convexity at infinity'. For such potentials the logarithmic Sobolev inequality carries over, as is made rigorous by the Holley-Stroock perturbation lemma, see for instance \cite[Prop 3.1.18]{R}. This lemma states that if we perturb the potential with a function of bounded oscillation, then the logarithmic Sobolev inequality is preserved at the expense of a larger constant.
\begin{lemma} \label{lem:perturb}
	Assume that the probability measure $\mu$ on $\R^m$ satisfies the logarithmic Sobolev inequality \cref{eq:logsob} with constant $c$. Assume that $W: \R^m \to \R$ is a bounded and measurable function. Then the modified probability measure
	\begin{align*}
		d \nu = Z^{-1}e^{-W(x)} d\mu, \qquad Z := \int e^{-W(x)} d \mu
	\end{align*}
	satisfies the log-Sobolev inequality with constant $c e^{\osc W}$, where $\osc W = \sup W - \inf W$.
\end{lemma}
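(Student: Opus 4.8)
The plan is to use the variational (Rothaus) representation of the entropy functional, which converts the bounded multiplicative perturbation $e^{-W}$ into a multiplicative constant in front of the entropy. Write $\mathrm{Ent}_\mu(f):=\int f\log f\,d\mu-\bigl(\int f\,d\mu\bigr)\log\bigl(\int f\,d\mu\bigr)$ for nonnegative $f$ with $f\log f\in L^1_\mu$; the left-hand side of \cref{eq:logsob} is a normalization of $\mathrm{Ent}_\mu(u^2)$, so it suffices to prove $\mathrm{Ent}_\nu(u^2)\le e^{\osc W}\,\mathrm{Ent}_\mu(u^2)$ together with a comparison between the Dirichlet energies $\|\grad u\|_{L^2_\mu}^2$ and $\|\grad u\|_{L^2_\nu}^2$. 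Since $e^{-W}$ is bounded above and below by positive constants, $L^2_\mu$ and $L^2_\nu$ (resp.\ $W^{1,2}_\mu$ and $W^{1,2}_\nu$) coincide as sets with comparable norms, so the class of admissible test functions is the same for $\mu$ and $\nu$; and by a standard truncation/monotone-convergence reduction it is enough to treat bounded $u$ with bounded gradient (if $\mathrm{Ent}_\mu(u^2)=\infty$ there is nothing to prove).

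First I would record the elementary pointwise inequality: since $t\mapsto t\log t$ is convex on $(0,\infty)$, for all $t,s>0$ one has $t\log t-t\log s-t+s\ge 0$. Integrating against a probability measure and noting that $\int(f-s)=0$ when $s=\int f$, this gives the Rothaus identity
\begin{align*}
\mathrm{Ent}_\mu(f)=\inf_{s>0}\int\bigl(f\log f-f\log s-f+s\bigr)\,d\mu ,
\end{align*}
where the integrand is nonnegative for every fixed $s>0$.

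Next, apply this identity with $\mu$ replaced by $\nu$, insert $d\nu=Z^{-1}e^{-W}\,d\mu$, and use $e^{-W}\le e^{-\inf W}$ together with the nonnegativity of the integrand to obtain, for every $s>0$,
\begin{align*}
\int\bigl(f\log f-f\log s-f+s\bigr)\,d\nu\le Z^{-1}e^{-\inf W}\int\bigl(f\log f-f\log s-f+s\bigr)\,d\mu .
\end{align*}
Taking the infimum over $s>0$ on both sides yields $\mathrm{Ent}_\nu(f)\le Z^{-1}e^{-\inf W}\,\mathrm{Ent}_\mu(f)$. Symmetrically, since $e^{-W}\ge e^{-\sup W}$, $\|\grad u\|_{L^2_\nu}^2=Z^{-1}\int|\grad u|^2e^{-W}\,d\mu\ge Z^{-1}e^{-\sup W}\|\grad u\|_{L^2_\mu}^2$, i.e.\ $\|\grad u\|_{L^2_\mu}^2\le Ze^{\sup W}\|\grad u\|_{L^2_\nu}^2$. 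Chaining these two bounds with the logarithmic Sobolev inequality \cref{eq:logsob} for $\mu$ (constant $c$) gives
\begin{align*}
\mathrm{Ent}_\nu(u^2)\le Z^{-1}e^{-\inf W}\,\mathrm{Ent}_\mu(u^2)\le Z^{-1}e^{-\inf W}\,c\,\|\grad u\|_{L^2_\mu}^2\le c\,e^{\sup W-\inf W}\,\|\grad u\|_{L^2_\nu}^2 ,
\end{align*}
because the factors $Z^{-1}$ and $Z$ cancel; as $\sup W-\inf W=\osc W$, this is precisely the logarithmic Sobolev inequality for $\nu$ with constant $c\,e^{\osc W}$.

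The only genuinely non-routine ingredient is the Rothaus variational identity for the entropy; once it is available the perturbation argument is purely algebraic, and the normalizing constant $Z$ disappears on its own. The remaining points — justifying the identity for the relevant (possibly unbounded) test functions and reducing to the case $\mathrm{Ent}_\mu(u^2)<\infty$ — are handled by the truncation and approximation reduction mentioned above, and are entirely standard.
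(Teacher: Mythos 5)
Your argument is correct: the Rothaus variational formula $\mathrm{Ent}_\mu(f)=\inf_{s>0}\int(f\log f-f\log s-f+s)\,d\mu$ with pointwise nonnegative integrand, combined with $e^{-\inf W}\geq e^{-W}\geq e^{-\sup W}$ on the entropy and Dirichlet sides so that the normalizing constant $Z$ cancels, is exactly the standard Holley--Stroock perturbation argument. The paper does not prove \cref{lem:perturb} but defers to the cited reference, whose proof is essentially the one you give, so your proposal matches the intended route (modulo the harmless normalization of the left-hand side of \cref{eq:logsob} as $\mathrm{Ent}_\mu(u^2)$, which you handle correctly).
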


Let us now consider hyper-contractivity of the semi-group $e^{tL}$.
We define the $p$ to $q$ norm of the semi-group $e^{tL}$ as
\begin{align*}
	\|e^{tL}\|_{p \to q} = \sup \{\|e^{tL} f\|_{L^q_\mu}, f \in L^2_\mu \cap L^p_\mu, \|f\|_{L^p_\mu} \leq 1\}.
\end{align*}
It then follows from a theorem of Stroock, see \cite[Section 4, Theorem 4.1, Gross]{FFGKRS} or \cite{Stroock}, that there exists, for each $p$ and $q$, where $q \geq p$, a time $t_{p\to q}$ such that if $t \geq t_{p\to q}$ then
\begin{align} \label{eq:hyper-contract}
	\|e^{t L}\|_{p \to q} \leq 1.
\end{align}
The estimate in \cref{eq:hyper-contract} is the hyper-contractivity of the semi-group $e^{tL}$.

\section{Estimates for Fokker-Planck equations} \label{sec:fp_estimates}
In the forthcoming section we prove our main results, and we will for simplicity only give the proof in the case $\Sigma=\sqrt{2}I_m$: the generalization to general constant full rank $\Sigma$ being straightforward. Consequently, we in this section consistently assume $\Sigma=\sqrt{2}I_m$. 

To prove \cref{thm:modified_risk} we will first prove perturbation estimates for the linear Fokker-Planck equation \cref{eq:appFPtilde} and then utilize this to prove convergence of the processes, $\theta^{(N)}(t)$, and the value, $\E[\tilde \Risk(\theta^{(N)}(t))]$. However, we first discuss what is a priori known about solutions to \cref{eq:appFP,eq:appFPtilde} in the sense of regularity and integrability.

We begin with the Fokker-Planck equations, see \cref{eq:appFP}, for the processes $\theta^{(N)}(t)$, $\theta(t)$,
\begin{align}
	\label{eq2a}
	\partial_tp^{(N)}&= \mbox{div}_\theta(\nabla_\theta p^{(N)}+p^{(N)}\nabla_\theta \tilde{\mathcal{R}}^{(N)}),\ p^{(N)}(\theta,0)=p_0(\theta),\notag\\
	\partial_tp&= \mbox{div}_\theta(\nabla_\theta p+p\nabla_\theta \tilde{\mathcal{R}}),\quad\quad\quad\quad\quad p(\theta,0)=p_0(\theta),
\end{align}
for $t\in (0,T]$ and where $T$ denotes the time frame used in the stochastic gradient descent. Recall that $\tilde{\mathcal{R}}^{(N)}$ and $\tilde{\mathcal{R}}$ are introduced in \cref{eq:modified_risk-}, and that $p^{(N)}$, $p$, are the probability densities to $\theta^{(N)}(t)$, $\theta(t)$, respectively. Recall from \cref{sec:SDE_FP} the equations in \cref{eq2a} have unique `probability` solutions in the sense of measures. Furthermore, since $\tilde{\mathcal{R}}^{(N)}$, $\tilde{\mathcal{R}}$ are smooth, it follows from Schauder estimates, see \cite{Lieb}, and a bootstrap argument that $p^{(N)},p$ are smooth functions of $(\theta,t)$. This implies that uniqueness holds in the point-wise sense for probability solutions.
Let us now consider the integrability properties of solutions to \cref{eq2a}. 
First note that \cref{lemmadiff2} implies that the coefficients in \cref{eq2a} satisfies all the regularity and integrability assumptions of \cite[Proposition 1, Remark 7]{LeBLi}, i.e.\
\begin{align*}
	\notag \nabla \tilde{\mathcal{R}} &\in (W^{1,1}_{\text{loc}}(\mathbb R^m)^m, \\
	\max\{0,\Delta \tilde{\mathcal{R}}\} &\in L^\infty(\mathbb R^m), \\
	\notag (1+\|\theta \|)^{-1}\nabla \tilde{\mathcal{R}} &\in (L^\infty(\mathbb R^m))^m,
\end{align*}
and the same holds true for $\tilde \Risk^{(N)}$. Thus, there exists a unique solution
\begin{align}
	\label{eq4}
	p\in L^\infty([0,T], L^2(\mathbb R^m)\cap L^\infty(\mathbb R^m)),\ \nabla p\in L^2([0,T], L^2(\mathbb R^m))
\end{align}
to the  problem for $p$ in \cref{eq2a}, in the sense that
\begin{multline}
	\label{eq2+}
	-\int\limits_0^T\int\limits_{\mathbb R^m}p\partial_t\phi\, d\theta dt-\int\limits_{\mathbb R^m}p_0\phi(\theta ,0)\, d\theta\\
 +\int\limits_0^T\int\limits_{\mathbb R^m}p\nabla \tilde{\mathcal{R}}\cdot\nabla \phi\, d\theta dt
	+\int\limits_0^T\int\limits_{\mathbb R^m}\nabla p\cdot\nabla\phi\, d\theta dt=0,
\end{multline}
for \textbf{test-functions} $\phi\in C^\infty(\mathbb R^m\times(0,T])\cap C(\mathbb R^m\times[0,T])$ with $\mbox{supp }\phi(\cdot,t)\subset K$ for all $t\in [0,T]$ and for some compact set $K\subset\mathbb R^m$. Solutions of \cref{eq2+} are called \textbf{weak solutions} and more info about weak solutions can be found in \cite{EvansBook}.
Now, as $p$, $\grad p$ are both in $L^2$ we are allowed to use $p$ as a test-function in \cref{eq2+}, resulting in the Caccioppoli estimate
\begin{align}
	\label{eq2ha-}
	\frac d{dt} \int\limits_{\mathbb R^m}p^2(\theta ,t)\, d\theta +2\int\limits_{\mathbb R^m}\|\nabla p\|^2\, d\theta =\int\limits_{\mathbb R^m}p^2\Delta \tilde{\mathcal{R}}\, d\theta \leq {\rho} \int\limits_{\mathbb R^m}p^2(\theta ,t)\, d\theta ,
\end{align}
if $\Delta \tilde{\mathcal{R}}\leq \rho$. By Gr{\"o}nwall's lemma we get from the above Caccioppoli estimate
\begin{align}
	\label{eq2ha}
	\int\limits_{\mathbb R^m}p^2(\theta ,t)\, d\theta +2\int\limits_0^T\int\limits_{\mathbb R^m}\|\nabla p(\theta ,t)\|^2\, d\theta dt\leq e^{\rho T}\int\limits_{\mathbb R^m}p_0^2\, d\theta ,
\end{align}
for $0\leq t\leq T$. \cref{eq4,eq2+,eq2ha-,eq2ha} also hold true with $p$, $\tilde{\mathcal{R}}$, replaced by $p^{(N)}$, $\tilde{\mathcal{R}}^{(N)}$. However, this type of estimate is not strong enough for our purposes.

Another approach is to switch to the semi-group $e^{tL}$ related to the linear Fokker-Planck equation \cref{eq:appFPtilde} as in \cref{sub:FP_min},
i.e.\ we consider weak solutions in spaces defined with respect to the measure $d\mu:=e^{-\tilde{\mathcal{R}}}d\theta$.
As can be easily seen, the definition of $\tilde \Risk$ implies that $\tilde \Risk$ is,  due to the truncation, a bounded perturbation of a strictly convex potential. Hence, the relations stated in \cref{sub:FP_min} hold. Therefore, we have a unique $p$ for which $\tilde p = e^{\tilde \Risk} p$ satisfies the weak formulation of \cref{eq:appFPtilde}
\begin{align}
	\label{eq2}
	- \int_0^T \int_{\R^m} \tilde p \partial_t \phi d\mu dt - \int_{\R^m} e^{\tilde \Risk} p_0 \phi(\theta,0) d\mu + \int_0^T \int_{\R^m} \grad \tilde p \cdot \grad \phi d\mu dt = 0.
\end{align}
Here $\tilde p \in L^2_\mu$ and the test-functions satisfy $\phi \in C^\infty(\mathbb R^m\times(0,T])\cap C(\mathbb R^m\times[0,T])$ and $\phi(\cdot,t) \in L^2_\mu$ for all $t \in [0,T]$. In addition, the initial data is assumed to satisfy $\tilde p_0 := e^{\tilde \Risk} p_0 \in L^2_\mu$.
In this case we deduce the a priori estimate
\begin{align}
	\label{eq2hap}
	&\int\limits_{\mathbb R^m}|e^{\tilde{\mathcal{R}}}p(\theta,t)|^2\,  d\mu +2\int\limits_0^T\int\limits_{\mathbb R^m}\|\nabla (e^{\tilde{\mathcal{R}}}p(\theta,t))\|^2\, d\mu dt\leq \int\limits_{\mathbb R^m}|e^{\tilde{\mathcal{R}}} p_0|^2\,  d\mu,
\end{align}
for $0\leq t\leq T$. Again, \cref{eq2,eq2hap} also hold true with $p$, $\tilde{\mathcal{R}}$, replaced by $p^{(N)}$, $\tilde{\mathcal{R}}^{(N)}$.
In particular, for our problems at hand we get stronger a priori estimates compared to \cref{eq2ha} by using the latter approach instead of the one in \cite{LeBLi}. 
The purpose of the section is to prove the following lemma.

\begin{lemma}
	\label{lemma1}
	Let $T>0$, $t\in [0,T]$ and consider $\tilde{\mathcal{R}}^{(N)}$, $p^{(N)}$, $\tilde{\mathcal{R}}$, $p$. Assume $p_0\in L^2(\mathbb R^m)\cap L^\infty(\mathbb R^m)$ and let $\alpha\in\mathbb R$, $\psi\in C^\infty(\mathbb R^m)$, $0\leq \psi\leq 1$. Then there exists a constant $c=c(m)$, $1\leq c<\infty$, such that the following holds
	\begin{align}\label{set 1}
		\int\limits_{\mathbb R^m}(p^{(N)})^2(\theta,t)e^{2\alpha\psi(\theta)}\, e^{\tilde{\mathcal{R}}^{(N)}}d\theta&\leq  e^{c\alpha^2\|\nabla \psi\|_\infty^2t}\int\limits_{\mathbb R^m}p^2_0(\theta)e^{2\alpha\psi(\theta)}\,  e^{\tilde{\mathcal{R}}^{(N)}}d\theta,\notag\\
		\int\limits_{\mathbb R^m}p^2(\theta,t)e^{2\alpha\psi(\theta)}\, e^{\tilde{\mathcal{R}}}d\theta&\leq  e^{c\alpha^2\|\nabla \psi\|_\infty^2t}\int\limits_{\mathbb R^m}p^2_0(\theta)e^{2\alpha\psi(\theta)}\,  e^{\tilde{\mathcal{R}}}d\theta.
	\end{align}
	Furthermore, let $E$, $F$ be compact sets of $\mathbb R^m$ such that $d(E,F)>0$, $d(E,F)$ is the distance between $E$ and $F$, and assume that $\mbox{supp } p_0\subset E$. Then
	\begin{align}\label{set 2}
		\int\limits_{F}(p^{(N)})^2(\theta,t)\, e^{\tilde{\mathcal{R}}^{(N)}}d\theta&\leq  ce^{-c(d(E,F))^2/t}\int\limits_{E}p^2_0(\theta)\, e^{\tilde{\mathcal{R}}^{(N)}}d\theta,\notag\\
		\int\limits_{F}p^2(\theta,t)\, e^{\tilde{\mathcal{R}}}d\theta&\leq  ce^{-c(d(E,F))^2/t}\int\limits_{E}p^2_0(\theta)\, e^{\tilde{\mathcal{R}}}d\theta.
	\end{align}
\end{lemma}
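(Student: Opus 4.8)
\noindent The plan is to prove \cref{lemma1} by a weighted $L^2$ (energy) estimate on the transformed Fokker--Planck equation, in the spirit of Davies' exponential--weight method; note that the logarithmic Sobolev inequality of \cref{subsec:logsob_hyper} is \emph{not} needed for this lemma. Write $\tilde p := e^{\tilde{\mathcal{R}}}p$ and $d\mu := e^{-\tilde{\mathcal{R}}}\,d\theta$. As recorded in \cref{sub:FP_min}, $\tilde p$ is the (smooth) weak solution of \cref{eq:appFPtilde}, i.e.\ $\partial_t\tilde p = -L\tilde p$ with $L=-\Delta+\langle\nabla\tilde{\mathcal{R}},\nabla\cdot\rangle$ self--adjoint in $L^2_\mu$, and it satisfies the a priori bounds \cref{eq4}, \cref{eq2hap}. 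Since $\int p^2 e^{2\alpha\psi}e^{\tilde{\mathcal{R}}}\,d\theta=\int\tilde p^2 e^{2\alpha\psi}\,d\mu$, estimate \cref{set 1} is equivalent to
\[
\int \tilde p^2(\cdot,t)\,w^2\,d\mu \le e^{c\alpha^2\|\nabla\psi\|_\infty^2 t}\int \tilde p_0^2\,w^2\,d\mu,\qquad w:=e^{\alpha\psi},
\]
where the weight $w$ is bounded (because $0\le\psi\le1$) and obeys $|\nabla w|=|\alpha|\,|\nabla\psi|\,w\le|\alpha|\,\|\nabla\psi\|_\infty\,w$. Everything below applies verbatim to $\tilde p^{(N)}=e^{\tilde{\mathcal{R}}^{(N)}}p^{(N)}$ with $L$ replaced by its $N$--analogue, so I would only treat $\tilde p$.

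The core computation: set $E(t):=\int\tilde p^2(\cdot,t)w^2\,d\mu$, differentiate in $t$, and use $\partial_t\tilde p=-L\tilde p$ together with the integration--by--parts identity \cref{eq:self_adj} applied to the pair $\tilde p$, $\tilde p w^2$, to obtain
\[
E'(t)=-2\langle\nabla\tilde p,\nabla(\tilde p\,w^2)\rangle_{L^2_\mu}=-2\int w^2|\nabla\tilde p|^2\,d\mu-4\int \tilde p\,w\,(\nabla w\cdot\nabla\tilde p)\,d\mu .
\]
Estimating the cross term by Young's inequality, $4\,|\tilde p|\,w\,|\nabla w|\,|\nabla\tilde p|\le 2w^2|\nabla\tilde p|^2+2\,\tilde p^2|\nabla w|^2$, the dissipative term is absorbed and one is left with $E'(t)\le 2\int\tilde p^2|\nabla w|^2\,d\mu\le 2\alpha^2\|\nabla\psi\|_\infty^2\,E(t)$. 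Gr\"onwall's lemma then gives \cref{set 1} with $c=2$ (any absolute constant $\ge1$, hence a fortiori any $c=c(m)\ge1$, works). If $\int\tilde p_0^2 w^2\,d\mu=+\infty$ the inequality is trivial; otherwise the identity for $E'(t)$ — in particular the integration by parts on all of $\R^m$ — would be justified by inserting a spatial cut--off $\zeta_R$ ($\zeta_R\equiv1$ on $B_R$, $\supp\zeta_R\subset B_{2R}$, $\|\nabla\zeta_R\|_\infty\le C/R$), running the argument through the weak formulation \cref{eq2} with $\zeta_R^2\,\tilde p\,w^2$ as test function, and letting $R\to\infty$: the commutator errors are bounded by $CR^{-1}\int_0^t\int_{B_{2R}\setminus B_R}|\tilde p|\,|\nabla\tilde p|\,w^2\,d\mu\,ds$, which vanishes by Cauchy--Schwarz and \cref{eq2hap}.

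To deduce the off--diagonal bound \cref{set 2}, fix compact sets $E,F$ with $d:=d(E,F)>0$ and choose, by mollifying a suitably truncated distance function, a function $\psi\in C^\infty(\R^m)$ with $0\le\psi\le1$, $\psi\equiv0$ on $E$, $\psi\equiv1$ on $F$, and $\|\nabla\psi\|_\infty\le C/d$ for an absolute constant $C$. Apply \cref{set 1} with this $\psi$ and a free parameter $\alpha>0$. Using $\psi\equiv1$ on $F$ on the left and $\supp p_0\subset E$ together with $\psi\equiv0$ on $E$ on the right, one gets
\[
e^{2\alpha}\int_{F} p^2(\cdot,t)\,e^{\tilde{\mathcal{R}}}\,d\theta\le\int\tilde p^2(\cdot,t)w^2\,d\mu\le e^{c\alpha^2\|\nabla\psi\|_\infty^2 t}\int_{E} p_0^2\,e^{\tilde{\mathcal{R}}}\,d\theta,
\]
hence $\int_{F} p^2 e^{\tilde{\mathcal{R}}}\,d\theta\le e^{-2\alpha+c\alpha^2\|\nabla\psi\|_\infty^2 t}\int_{E} p_0^2 e^{\tilde{\mathcal{R}}}\,d\theta$. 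Optimising in $\alpha$ by taking $\alpha=(c\|\nabla\psi\|_\infty^2 t)^{-1}$ turns the exponent into $-(c\|\nabla\psi\|_\infty^2 t)^{-1}\le -d^2/(C^2 c\,t)$, which is exactly \cref{set 2} after renaming (and enlarging to be $\ge1$) the constant $c$. The $N$--version is identical.

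I expect the only genuinely delicate point to be the rigorous justification of the weighted energy identity on the whole space: since $\tilde p$ has no compact support one must verify that the boundary and commutator contributions at infinity vanish, which is where the a priori estimates \cref{eq4}, \cref{eq2hap} — reflecting the fast ($L^2_\mu$) decay forced by the growth of $\tilde{\mathcal{R}}$ recalled at the beginning of this section — are used, via the cut--off approximation described above. Granted this, the remaining steps (Young's inequality, Gr\"onwall, the construction of $\psi$, and the one--line optimisation in $\alpha$) are entirely routine.
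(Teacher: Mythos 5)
Your proposal is correct and follows essentially the same route as the paper: the same weighted energy estimate for $\tilde p=e^{\tilde{\mathcal{R}}}p$ with weight $e^{\alpha\psi}$ (test function $p\,e^{2\alpha\psi}$ in the weak formulation \cref{eq2}), Young's inequality plus absorption and Gr\"onwall for \cref{set 1}, and then the cutoff-type $\psi$ with $\|\nabla\psi\|_\infty\lesssim d^{-1}$ and optimization in $\alpha$ for \cref{set 2}. Your explicit cut-off $\zeta_R$ justification via \cref{eq2hap} is exactly the kind of verification the paper relies on, so there is no gap.
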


\begin{proof} 
	We will only supply the proof of the estimates for $p$ as the proofs in the case of $p^{(N)}$ are analogous. Recall that $p$ is smooth. Let $\alpha$ and $\psi$ be as in the statement of the lemma and let $\varphi(\theta)=e^{\alpha\psi(\theta)}$. We let $\tilde p=e^{\tilde{\mathcal{R}}}p$, and we introduce the measure $d\mu=e^{-\tilde{\mathcal{R}}}d\theta $. Using this notation, and using the test-function $p \varphi^2 \in W^{1,2}_\mu$ in \cref{eq2} together with an integration by parts, we see that
\begin{multline}
	\label{eq5g-}
	\frac 1 2\frac d{dt}\int\limits_{\mathbb R^m}\tilde p^2\varphi^2\, d\mu=-\int\limits_{\mathbb R^m}\nabla \tilde p\cdot\nabla (\tilde p\varphi^2)\, d\mu\\
	=-\int\limits_{\mathbb R^m}\|\nabla \tilde p\|^2\varphi^2\, d\mu-2\alpha\int\limits_{\mathbb R^m}\tilde p\varphi^2 \nabla \tilde p\cdot\nabla \psi\, d\mu.
\end{multline}
Using \cref{eq5g-} we first deduce that
\begin{multline}
	\label{eq5gg}
	\quad\quad\frac 1 2\frac d{dt}\int\limits_{\mathbb R^m}\tilde p^2\varphi^2\, d\mu\leq-\int\limits_{\mathbb R^m}\|\nabla \tilde p\|^2\varphi^2\, d\mu+c\alpha \|\nabla \psi\|_\infty
	\int\limits_{\R^m}\tilde p\varphi^2\|\nabla \tilde p\|\, d\mu,
\end{multline}
and then by Young's inequality, Cauchy-Schwartz and re-absorption we get
\begin{align}
	\label{eq5gg+}
	\frac d{dt}\int\limits_{\mathbb R^m}\tilde p^2\varphi^2\, d\mu&\leq c\alpha^2 \|\nabla \psi\|_\infty^2
	\int\limits_{\R^m}\tilde p^2\varphi^2\, d\mu.
\end{align}
Let $\gamma:=c\alpha^2\|\nabla \psi\|_\infty^2$ and note that \cref{eq5gg+} together with Gr{\"o}nwall's lemma give
\begin{align}
	\label{eq5+}
	\int\limits_{\mathbb R^m}\tilde p^2 e^{2\alpha\psi}\, d\mu&\leq e^{\gamma t}\int\limits_{\R^m}\tilde p^2_0 e^{2\alpha\psi}\, d\mu,
\end{align}
where $c=c(m)>0$. 
In particular going back to $p$, \cref{eq5+} becomes,
\begin{align}
\label{final}
		\int\limits_{\mathbb R^m}p^2 e^{2\alpha\psi}\, e^{\tilde{\mathcal{R}}}d\theta&\leq  e^{c\alpha^2\|\nabla \psi\|_\infty^2t}\int\limits_{\R^m}p^2_0 e^{2\alpha\psi}\,  e^{\tilde{\mathcal{R}}}d\theta,
	\end{align}
and this concludes the proof of \cref{set 1} for $p$.

To prove the estimate in \cref{set 2} for $p$, let $E$, $F$ be compact sets of $\mathbb R^m$ such that $d:=d(E,F)>0$, $d(E,F)$ is the distance between $E$ and $F$, and assume that
$\mbox{supp } p_0\subset E$. Let $\psi=1$ on $F$ and $\psi=0$ on $E$, $0\leq\psi\leq 1$. One can construct $\psi$ so that $\|\nabla \psi\|_\infty\leq cd^{-1}$, $c=c(m)>0$. Using \cref{final} we see that
\begin{align*}
		\int_{F}p^2(\theta,t)\, e^{\tilde{\mathcal{R}}}d\theta&\leq  e^{c\alpha^2d^{-2}t-2\alpha}\int\limits_{E}p^2_0(\theta)\,  e^{\tilde{\mathcal{R}}}d\theta
\end{align*}
Letting $\alpha=\beta d^2/t$ with $\beta$ so that $(c\beta-2)=-1$ we see that
$$c\alpha^2d^{-2}t-2\alpha=(c\beta-2)\beta d^2/t=-\tilde c d^2/t$$
for some $\tilde c=\tilde c(m)>0$. This concludes the proof of \cref{set 2} and the proof of the lemma.

\end{proof}

\section{Proof of the main results: \cref {thm:penalization,thm:modified_risk,thm:modified_risk_msq}}\label{sec:proof_of_main_thm}
In this section we prove \cref{thm:penalization,thm:modified_risk,thm:modified_risk_msq}, and we will for simplicity only give the proof in the case $\Sigma=\sqrt{2}I_m$.
\subsection{Proof of \cref{thm:penalization}}
Let us first prove 
\begin{align}
	\label{eq:penal_first}
	\sup_{t \in [0,T]} \|\theta_t - \theta_t^{(N)}\| \to 0, \quad \text{in probability as $N \to \infty$}.
\end{align}
We will prove \cref{eq:penal_first} under the assumption that $f_\theta \in \mathcal{A}(g)$. The first step is to show that there exists a penalization $H(\theta)$ such that
\begin{align}\label{esti}
	-\grad \Risk(\theta) \cdot \theta \leq c(1+\|\theta\|^2),
\end{align}
for some constant $c$. Let $\hat {\mathcal {R}}(\theta):=\mathcal {R}(\theta)-\gamma H(\theta)$, $\hat {\mathcal {R}}^{(N)}(\theta):={\mathcal {R}}^{(N)}(\theta)-\gamma H(\theta)$, denote the un-penalized risk. As in \cref{term2} we have
\begin{align*}
	\grad \hat {\mathcal {R}}(\theta)&= 2\E_{(x,y) \sim \mu} \left [ (y-x(1)) \grad_\theta x(1)\right ].
\end{align*}
A bound on the gradient of the risk follows from \cref{lemmadiff1},
\begin{align}
	\label{eq:riskGradBdd}
	\|\grad \hat {\mathcal {R}}(\theta)\| \leq 2\E_{(x,y) \sim \mu} \left [ (y-x(1)) \grad_\theta x(1)\right ] \leq \tilde g(g(\|\theta\|))^2 \E[\|x\|^2],
\end{align}
and the same bound holds for $\grad \hat {\mathcal {R}}^{(N)}$. Let $\phi_R$ be a cutoff function supported in $B(0,2R)$ such that $\phi_R=1$ on $B(0,R)$ and $\|\grad \phi_R\| \leq \frac{c}{R}$. Let $G(\theta)$ be a convex function satisfying $\grad G(\theta) = c\tilde g(g(\|\theta\|))^2 \theta$. With $\theta_R$ and $G$ as above we obtain from \cref{eq:riskGradBdd}
\begin{align*}
	-\grad \left ( \hat {\mathcal {R}}^{(N)}(\theta) + (1-\phi_R(\theta))G(\theta) \right ) \cdot \theta \leq c(1+\|\theta\|^2),
\end{align*}
with a constant $c(R,\mu) \geq 1$.
Letting $H(\theta):=(1-\phi_R(\theta))G(\theta)$ proves \cref{esti}. Collecting what we proved in \cref{subsec:sde}, and applying \cref{thm:limitTheorem,lemmadiff2}, we get \cref{eq:penal_first}.

We next prove the second part of \cref{thm:penalization}, i.e.\
\begin{align} \label{eq:penal_second}
	\mathbb{E} [\mathcal{R}(\theta_T)] < \infty, \quad \mathbb{E} [\mathcal{R}^{(N)}(\theta^{(N)}_T)] < \infty.
\end{align}
We will prove \cref{eq:penal_second} under the more restrictive assumption $f_\theta \in \mathcal{A}_+(g)$. We begin by showing that there is a rotationally symmetric penalization $H$ such that $\hat {\mathcal {R}} + H$ is strictly convex on $\mathbb R^m\setminus B(0,R)$ for $R\geq 1$ large. 
To do this we will bound the growth of the second derivatives of $\hat {\mathcal {R}}$. We have
\begin{align*}
	\grad^2 \hat {\mathcal {R}}(\theta) &= 2\E (\grad_\theta x_1 \otimes \grad_\theta x_1 + (y-x_1)\grad^2_\theta x_1).
\end{align*}
Using \cref{lemmadiff20,lemmagrowth} we derive the bound
\begin{align*}
	\|\grad^2 \hat {\mathcal {R}}(\theta)\| &\leq \tilde g(\|\theta\|)^2\E[\|x\|^2] + \E[(\|x\|+\|y\|)\tilde P(\|x\|)]\hat g(\|\theta\|).
\end{align*}
Hence, if   we penalize using a function $\tilde H(\theta)$ satisfying
\begin{align*}
	\grad^2 \tilde H(\theta) \geq C(g(\|\theta\|)^2+\hat g(\|\theta\|)+1) I_m,
\end{align*}
for a large enough $C(\mu) \geq 1$, then $\hat {\mathcal {R}}(\theta) + \tilde H(\theta)$ is strictly convex outside $B(0,R)$. In fact, consider a cutoff function $\phi_R$ as in the first part of the proof, and define
$V = (1-\phi_R)\hat {\mathcal {R}} + \tilde H, W = \phi_R \hat {\mathcal {R}} - \phi_R \tilde H$. Then, for a possibly slightly larger constant $C=C(R,\mu)$, we see that $V$ is strictly $K$-convex and that $W$ is bounded. Thus, $\hat {\mathcal {R}}  + (1-\phi_R) \tilde H$ is the sum of a strictly convex potential $V$ and a bounded perturbation $W$. Define the penalization as $H = (1-\phi_R)\tilde H$, then $\hat {\mathcal {R}}  + H$ is a confining potential that satisfies \cref{A1,A2,A3,A4}. From \cref{thm:conv_logsob,lem:perturb} we see that $d\mu=c e^{-(\hat \Risk+H)}d\theta$ satisfies a logarithmic Sobolev inequality and thus we can conclude the validity of the estimate in \cref{eq:hyper-contract}.

Recall from \cref{subsec:logsob_hyper} that for $V = \hat {\mathcal {R}}  + H$, we have  $\tilde p = e^{tL} c^{-1} e^V p_0 = c^{-1} e^V p$ where $p$ is the density for the stochastic process
\begin{align*}
	d\theta_t = - \grad V(\theta_t)dt + \sqrt{2} dW_t.
\end{align*}
The hyper-contractivity, i.e.\ \cref{eq:hyper-contract} shows that $\tilde p(\cdot, t) \in L^r_\mu$ for $t \geq t_r$, $r \geq 2$. This implies that
\begin{align*}
	\int \tilde p^r d\mu = \int e^{r V(\theta)} p^r(\theta) e^{-V(\theta)} d\theta = \int c^{1-r} e^{(r-1) V(\theta)}  p^r(\theta) d\theta \leq 1.
\end{align*}
Using H{\"o}lder's inequality and the above we get
\begin{align}
	\label{eq:hyper_bound}
	\notag \E[V(\theta_t)] &= \int V(\theta) p(\theta,t) d\theta
	= \int c^{\frac{r-1}{r}} e^{-\frac{r-1}{r} V} V(\theta) c^{-\frac{r-1}{r}} e^{\frac{r-1}{r} V}p(\theta,t) d\theta \\
	\notag &\leq
	\left (\int c^{1-r} e^{(r-1) V} p^r d\theta \right )^{1/r} \left (\int c e^{-\frac{r-1}{r}\frac{r}{r-1} V} V^{\frac{r}{r-1}} d\theta \right )^{\frac{r-1}{r}} \\
	&\leq \left (\int c e^{-V} V^{\frac{r}{r-1}} d\theta \right )^{\frac{r-1}{r}}.
\end{align}
The first term on the second line in \cref{eq:hyper_bound} is bounded by $1$. To bound the last term in \cref{eq:hyper_bound}, note that if $V > \frac{r}{r-1}$ then the expression $e^{-V}V^{\frac{r}{r-1}}$ decreases as $V$ increases. We can thus use the estimate
\begin{align*}
	e^{-V}V^{\frac{r}{r-1}} \lesssim e^{-K\|\theta\|^2}(K\|\theta\|^2)^{\frac{r}{r-1}}
\end{align*}
for $V > \frac{r}{r-1}$. The term on the right-hand side in the last display is integrable and thus by the Lebesgue dominated convergence theorem we see that $\E[V(\theta_t)] < \infty$. Specifically we could use $r = 2$ and thus get \cref{eq:penal_second}, completing the proof of the theorem. 
\begin{remark}
	If $t \to \infty$ then $p_\infty = c e^{-V}$ and from \cref{eq:hyper_bound} we get
	\begin{align*}
		\lim_{t \to \infty}\E[V(\theta_t)] \leq \lim_{r \to \infty} \left (\int c e^{-V} V^{\frac{r}{r-1}} d\theta \right )^{\frac{r-1}{r}} = \int c e^{-V} V d\theta = \E[V(\theta_\infty)].
	\end{align*}
\end{remark}

\subsection{Proof of \cref{thm:modified_risk}} To start the proof of \cref{thm:modified_risk} we let $\delta^{(N)}(x,t):=p(\theta,t)-p^{(N)}(\theta,t)$ and we note, as $\theta^{(N)}$ and $\theta$ share the same initial density, that $\delta^{(N)}$ satisfies the initial value problem
\begin{align} \label{eq2b}
	\partial_t\delta^{(N)}&= \mbox{div}(\nabla \delta^{(N)}+\delta^{(N)}\nabla \tilde{\mathcal{R}})-\mbox{div}(p^{(N)}\nabla (\tilde{\mathcal{R}}-\tilde{\mathcal{R}}^{(N)})),\notag\\
	\delta^{(N)}(\theta,0)&= 0.
\end{align}
Rewriting the equation \cref{eq2b} as in \cref{eq:appFPtilde}, we get
\begin{align*} 
	\partial_t(e^{\tilde{\mathcal{R}}}\delta^{(N)})e^{-\tilde{\mathcal{R}}}&= \mbox{div}(e^{-\tilde{\mathcal{R}}}\nabla(e^{\tilde{\mathcal{R}}}\delta^{(N)}))-\mbox{div}(e^{-\tilde{\mathcal{R}}}e^{\tilde{\mathcal{R}}}p^{(N)}\nabla (\tilde{\mathcal{R}}-\tilde{\mathcal{R}}^{(N)})).
\end{align*}
Now, we would like to test the above equation with $e^{\tilde \Risk} \delta^{(N)}$ but we do not a priori know that this test-function is in $L^2_\mu$, for $d\mu = e^{-\tilde \Risk} d\theta$. Instead, we use the test-function $\phi^2(\theta)e^{\tilde{\mathcal{R}}}\delta^{(N)}$, where $\phi\in C_0^\infty(\mathbb R^m)$, $0\leq \phi\leq 1$. From the above display using integration by parts we get
\begin{multline*}
	\frac{1}{2}\int\limits_{\mathbb R^m}|e^{\tilde{\mathcal{R}}}\delta^{(N)}|^2(\theta,T)\phi\, d\mu+ \int\limits_0^T\int\limits_{\mathbb R^m}\|\nabla (e^{\tilde{\mathcal{R}}}\delta^{(N)})\|^2\phi^2\, d\mu dt \\
	\begin{aligned}
		\leq&2\int\limits_0^T\int\limits_{\mathbb R^m} e^{\tilde{\mathcal{R}}} p^{(N)}
			\|\nabla (\tilde{\mathcal{R}}-\tilde{\mathcal{R}}^{(N)})\| |e^{\tilde{\mathcal{R}}} \delta^{(N)} | \|\nabla \phi\|\phi\, d\mu dt\\
		&+2\int\limits_0^T\int\limits_{\mathbb R^m}\|\nabla(e^{\tilde{\mathcal{R}}}\delta^{(N)})\| \|\nabla\phi\| |e^{\tilde{\mathcal{R}}} \delta^{(N)}|\phi\, d\mu dt\\
		&+\int\limits_0^T\int\limits_{\mathbb R^m}e^{\tilde{\mathcal{R}}} p^{(N)}\|\nabla (\tilde{\mathcal{R}}-\tilde{\mathcal{R}}^{(N)})\|\|\nabla(e^{\tilde{\mathcal{R}}}\delta^{(N)})\|
		\phi^2\, d\mu dt,
	\end{aligned}
\end{multline*}
where $d\mu = e^{-\tilde \Risk} d\theta$.
Using Youngs inequality and Cauchy-Schwarz to reabsorb terms on the left-hand side we conclude that
\begin{align}
	\label{eq:tildeRdeltaN}
	\notag\int\limits_{\mathbb R^m}|e^{\tilde{\mathcal{R}}}\delta^{(N)}|^2(\theta,T)\phi^2\, d\mu\leq&c\int\limits_0^T\int\limits_{\mathbb R^m}(e^{\tilde{\mathcal{R}}} p^{(N)})^2
	\|\nabla (\tilde{\mathcal{R}}-\tilde{\mathcal{R}}^{(N)})\|^2\phi^2\, d\mu dt\notag\\
&+c\int\limits_0^T\int\limits_{\mathbb R^m}|e^{\tilde{\mathcal{R}}} \delta^{(N)}|^2\|\nabla\phi\|^2\, d\mu dt.
\end{align}
Furthermore,
\begin{multline}
	\label{eq:tildeRdeltaN2}
	\int\limits_0^T\int\limits_{\mathbb R^m}|e^{\tilde{\mathcal{R}}} \delta^{(N)}|^2\|\nabla\phi\|^2\, d\mu dt\leq  2\int\limits_0^T\int\limits_{\mathbb R^m}p^2\|\nabla\phi\|^2e^{\tilde{\mathcal{R}}}\, d\theta dt\\
	+2\int\limits_0^T\int\limits_{\mathbb R^m}(p^{(N)})^2\|\nabla\phi\|^2e^{\tilde{\mathcal{R}}-\tilde{\mathcal{R}}^{(N)}}e^{\tilde{\mathcal{R}}^{(N)}}\, d\theta dt.
\end{multline}
To get rid of the localization that $\phi$ provides we let $\phi=\phi_R\in C_0^\infty(B(0,2R))$ be such that $\phi_R=1$ on $B(0,R)$ and $\|\grad \phi\| < \frac{C}{R}$. We will consider the limit as $R \to \infty$ in \cref{eq:tildeRdeltaN,eq:tildeRdeltaN2}. Specifically, using the estimate for $p$ stated in \cref{set 2}, \cref{lemma1}, we see that
\begin{multline}
	\label{eq:plimit}
	\int\limits_0^T \int\limits_{\R^m} p^2 \|\grad \phi\|^2 e^{\tilde \Risk} d\theta dt \leq
	\frac{c}{R^2} \int\limits_0^T \int\limits_{B(0,2R) \setminus B(0,R)} p^2 e^{\tilde \Risk} d\theta dt \\
	\leq \frac{c}{R^2} T e^{-cR^2/T} \int_{\supp(p_0)} p_0^2 e^{\tilde \Risk} d\theta \to 0
\end{multline}
as $R\to \infty$ since $d(\supp p_0, B(0,2R) \setminus B(0,R)) \approx R$ for large values of $R$. Similarly,
\begin{multline}
	\label{eq:pnbound}
\int\limits_0^T\int\limits_{\mathbb R^m}(p^{(N)})^2\|\nabla\phi\|^2 e^{\tilde{\mathcal{R}}-\tilde{\mathcal{R}}^{(N)}}e^{\tilde{\mathcal{R}}^{(N)}}\, d\theta dt \\
\leq
\frac{c}{R^2} T e^{-cR^2/T} \max_{R\leq\|\theta\|\leq 2R}e^{\tilde{\mathcal{R}}(\theta)-\tilde{\mathcal{R}}^{(N)}(\theta)} \int_{\supp(p_0)} p_0^2 e^{\tilde \Risk} d\theta.
\end{multline}
Now, recall the definitions of $\tilde {\mathcal{R}}^{(N)}$ and $\tilde {\mathcal{R}}$ introduced in \cref{eq:modified_risk-}. Using \cref{lemmadiff2}  we see that
\begin{align}\label{eq:modified_risk-c}
|\tilde {\mathcal{R}}(\theta)-\tilde {\mathcal{R}}^{(N)}(\theta)|\leq&\frac {c(g,\Lambda)}N.
\end{align}
Thus applying \cref{eq:modified_risk-c} in \cref{eq:pnbound} we get
\begin{align}
	\label{eq:pnlimit}
\int\limits_0^T\int\limits_{\mathbb R^m}(p^{(N)})^2\|\nabla\phi\|^2e^{\tilde{\mathcal{R}}-\tilde{\mathcal{R}}^{(N)}}e^{\tilde{\mathcal{R}}^{(N)}}\, d\theta dt\to 0
\end{align}
as $R\to\infty$. In particular collecting \cref{eq:tildeRdeltaN,eq:tildeRdeltaN2,eq:plimit,eq:pnlimit}, and letting $R \to \infty$ for $\phi=\phi_R$ we can conclude that
\begin{align}
	\label{eq:finaltest}
	\int\limits_{\mathbb R^m}|\delta^{(N)}|^2(\theta,T)e^{\tilde{\mathcal{R}}}\, d\theta\leq&c\int\limits_0^T\int\limits_{\mathbb R^m}(p^{(N)})^2
	\|\nabla (\tilde{\mathcal{R}}-\tilde{\mathcal{R}}^{(N)})\|^2e^{\tilde{\mathcal{R}}}\, d\theta dt.
\end{align}
Now, to bound the right-hand side of \cref{eq:finaltest} we use \cref{lemmadiff2} again to see that
\begin{align}\label{eq:modified_risk-d}
\|\nabla (\tilde{\mathcal{R}}-\tilde{\mathcal{R}}^{(N)})\|\leq&\frac {c(g,\Lambda)}N.
\end{align}
Hence, using \cref{eq:modified_risk-c} and \cref{eq:modified_risk-d}, we arrive at
\begin{multline}\label{eq5g}
\int\limits_0^T\int\limits_{\mathbb R^m}(p^{(N)})^2
	\|\nabla (\tilde{\mathcal{R}}-\tilde{\mathcal{R}}^{(N)})\|^2e^{\tilde{\mathcal{R}}}\, d\theta dt\\
\leq \frac {c(g,\Lambda)}{N^2} e^{c(g,\Lambda)/N}
\int\limits_0^T\int\limits_{\mathbb R^m}(p^{(N)})^2e^{\tilde{\mathcal{R}}^{(N)}}\, d\theta dt.
\end{multline}
We can apply the estimate for $p^{(N)}$ stated in \cref{set 1}, see \cref{lemma1}, and we deduce that
\begin{align}\label{eq5g+}
\int\limits_0^T\int\limits_{\mathbb R^m}( p^{(N)})^2e^{\tilde{\mathcal{R}}^{(N)}}\, d\theta dt\leq& c(T)\int\limits_{B(0,R_0)}p_0^2e^{\tilde{\mathcal{R}}^{(N)}}\, d\theta.
\end{align}
Now collecting \cref{eq:finaltest,eq5g,eq5g+} we get
\begin{multline}
\int\limits_0^T\int\limits_{\mathbb R^m}(p^{(N)})^2
	\|\nabla (\tilde{\mathcal{R}}-\tilde{\mathcal{R}}^{(N)})\|^2e^{\tilde{\mathcal{R}}}\, d\theta dt\\
\leq \frac {c(g,\Lambda, T)}{N^2} e^{c(g,\Lambda,R_0)/N}\int\limits_{B(0,R_0)}p_0^2\, d\theta,
\end{multline}
and finally
\begin{align}\label{eq5g+++}
	\int\limits_{\mathbb R^m}|\delta^{(N)}|^2(\theta,T)e^{\tilde{\mathcal{R}}}\, d\theta\leq&\frac {c(g,\Lambda, T)}{N^2} e^{c(g,\Lambda,R_0)/N}\int\limits_{B(0,R_0)}p_0^2\, d\theta.
\end{align}
Using Cauchy-Schwartz and \cref{eq5g+++}, we get
\begin{multline}
	\label{eq5ga}
	\int\limits_{\mathbb R^m}e^{\tilde{\mathcal{R}}/4}| \delta^{(N)}|(\theta,T)\, d\theta \\
	\leq \left [\int_{\mathbb R^m}e^{\tilde{\mathcal{R}}}| \delta^{(N)}|^2(\theta,T)\, d\theta \right ] ^{\frac{1}{2}}\left [\int_{\mathbb R^m}e^{-\tilde{\mathcal{R}}/2}\, d\theta \right ]^{\frac{1}{2}} \\
	\leq c(m,\gamma,g,\Lambda,R_0, T) N^{-1}\|p_0\|_2.
\end{multline}
We will now use the estimate in \cref{eq5ga} to conclude the proof of \cref{thm:modified_risk}. Indeed, by \cref{eq5ga}
\begin{align}
	\label{eq:final_first}
	\bigl \|\E[\theta(T)-\theta^{(N)}(T)]\bigr \|&=\left \|\int_{\mathbb R^m}\theta (p(\theta,T)- p^{(N)}(\theta,T))\, d\theta\right \|\notag\\
&\leq \int\limits_{\mathbb R^m}\|\theta\| |\delta^{(N)}(\theta)|\, d\theta\notag\\
&=\int\limits_{\mathbb R^m}\|\theta\| e^{- \tilde{\mathcal{R}}/4} |\delta^{(N)}(\theta)|e^{\tilde{\mathcal{R}}/4}\, d\theta\notag\\
&\leq \sup_{\|\theta\|} \|\theta\| e^{- \tilde{\mathcal{R}}/4} c(m,\gamma,g,\Lambda, R_0, T) N^{-1}\|p_0\|_2.
\end{align}
Now by the definition of $\tilde \Risk$ as the truncated risk + a quadratic penalization we see that $\tilde \Risk \approx \|\theta\|^2$ for large $\|\theta\|$ and thus $\sup_{\|\theta\|} \|\theta\| e^{- \tilde{\mathcal{R}}/4} \leq c(g,\Lambda,\lambda,\rho_0)$. This together with \cref{eq:final_first} concludes the proof of \cref{thm2.4a} in \cref{thm:modified_risk}. To prove the other estimate, \cref{thm2.4b}, we note that
\begin{multline}
	\label{eq:valuebound}
	\bigl |\E [\tilde{\mathcal{R}}(\theta(T))-\tilde{\mathcal{R}}^{(N)}(\theta^{(N)}(T)) ]\bigr |\leq  \int\limits_{\mathbb R^m}\tilde{\mathcal{R}}(\theta)|p(\theta,T)-p^{(N)}(\theta,T)|\, d\theta\\
	+\int\limits_{\mathbb R^m}p^{(N)}(\theta,T)|\tilde{\mathcal{R}}(\theta)-\tilde{\mathcal{R}}^{(N)}(\theta)|\, d\theta.
\end{multline}
Again using \cref{eq5ga} we see that
\begin{align}\label{esta}
	\int\limits_{\mathbb R^m}\tilde{\mathcal{R}}(\theta)|p(\theta,T)-p^{(N)}(\theta,T)|\, d\theta\leq  c(m,\gamma,g,\Lambda, \lambda,\rho_0, R_0, T) N^{-1}\|p_0\|_2.
\end{align}
Furthermore, using \cref{eq:modified_risk-c}, we see that
\begin{align}
	\label{eq:pRRN}
	\int\limits_{\mathbb R^m}p^{(N)}(\theta,T)|\tilde{\mathcal{R}}(\theta)-\tilde{\mathcal{R}}^{(N)}(\theta)|\, d\theta\leq \frac{c}{N}.
\end{align}
Collecting \cref{eq:valuebound,esta,eq:pRRN} completes the proof of \cref{thm2.4b} in \cref{thm:modified_risk}.
To prove the second set of estimates we simply have to repeat the argument
and use the estimates stated in \cref{set 2}, \cref{lemma1}.

\subsection{Proof of \cref{thm:modified_risk_msq}} The result follows immediately from the  fact that based on  the truncation procedure $T_\Lambda$, and  \cref{lemmadiff1,lemmadiff2}, we can conclude that the processes $\theta$ and $\theta^{(N)}$ both satisfy the assumptions of \cref{thm:limitTheorem_smpl}, and this proves our theorem.

\section{Numerical experiments}\label{sec:numerical}
To investigate how the optimized networks depend on the number of residual connections, i.e.\ layers, we have conducted a number of numerical simulations. The idea is simply to optimize \cref{resnet3} for different values of $N$. To do so we consider a few different topologies and problems. We have conducted our experiments on the following datasets:
\begin{enumerate}
	\item Cifar10 \cite{KH}
	\item Starlike Annuli + Starlike Annuli with augmentation.
\end{enumerate}

\subsection{Cifar10}
Cifar10 is a well-known dataset consisting of images of 10 classes, [airplanes, cars, birds, cats, deer, dogs, frogs, horses, ships, and trucks]. The task is to take a 32x32 image as input and as output the class of the image. A lot of research papers have proposed different topologies for Cifar10 and currently the accuracy of the best performing model is $96.53\%$ \cite{Graham}. However, the model closest to our experiments is the Residual Network (ResNet) \cite{He} based on which they obtained an accuracy of $\approx 93.4\%$. For comparison, the average human performance is $93.91\pm 1.52 \%$ \cite{H}. From our perspective, the interesting point about this dataset is that it is the simplest standard dataset where convolutional networks vastly outperform other models, like for instance fully connected neural networks. For this dataset we have designed a simple network that performs fairly well compared to state of the art mentioned above. 

\subsubsection*{\textbf{Network topology}}
To continue let us describe the network topology used in our experiments, see \cref{figCNN} for a schematic. Since networks of the type in \cref{resnet3} need to have the same input-dimension and output-dimension, channels, we first apply a convolutional layer with $3 \times 3$ kernels and $256$ output channels with a stride of $2$. This is then the input to the iterated block, of type \cref{resnet3}, that has $256$ input channels and $256$ output channels. The block is applied $N$ times. After this we apply a batch normalization layer, \cite{IoffeSzeg}, as well as a $3 \times 3$ kernel layer with $64$ channels with a stride of $4$ and ReLU activation. This is then followed by another batch normalization layer, a 4x4 average pooling layer and finally a softmax layer. 
The network in the main iterated block has the type \cref{nnspec}. Specifically, it consists of a $3 \times 3$ kernel layer with $256$ output channels and ReLU activation. This feeds into another $3 \times 3$ kernel layer with again $256$ output channels, with no activation. Although this network topology is quite simple the layers are fairly wide, and we end up with a total number of about 1M parameters.
\subsubsection*{\textbf{Experimental setup}}
The experimental setup is as follows. The Cifar10 dataset consists of 50k images in the training set and 10k images in the test set. In order to get a better estimate on the generalization performance, we performed 10-fold cross validation on the training set, for $N = [1,2,3,4,5,9,15,20,30,100]$.
We use a weight decay of $10^{-6}$, a momentum of $0.9$, and we use the weight initialization from \cite{Glorot}. The models are trained with a batch size of $128$ on a single GPU (RTX 2080ti) for $N < 10$ and a batch size $512$ split over $4$ GPUs (RTX 2080ti) for $N \geq 10$. We start with a learning rate of $0.1$, divide it by $10$ at $80$, $120$, $160$ and by $2$ at $180$ epochs, where an epoch consists of 45k/batchsize iterations. Training is terminated at $200$ epochs. We use the simple data augmentation depicted in \cite{Lee} when training: i.e.\ $3$ pixels are padded on each side, and a $32\times32$ crop is randomly sampled from the padded image or its horizontal flip. For testing, we only evaluate the single view of the original $32\times32$ image.
All images are normalized by the pixel mean.

\subsubsection*{\textbf{Results}}
The results are depicted in \cref{figCNNResult}, and we note that our best model gives us $90.5\%$ which is good for such a simple topology. It is interesting to note that the cross validation accuracy is increasing with layer count, and we saw a similar increase in the training accuracy. As the parameter count stays the same with increasing depth this means that for this problem deeper is better.
Also, note that the accuracy flattens out fairly quickly. This can be interpreted as to indicate that there is limited value in using too many layers for this type of image classification problems. It may be worthwhile to implement a limit on the number of layers in the methods developed in \cite{CRBD}.
Another observation is that the speed of convergence for the validation accuracy is close to $1/N$, albeit slightly faster. We believe that the convergence rate for the risk (or related metric) will in practice almost always be faster than our theoretical bound of $1/N$, see \cref{thm:modified_risk}.

\begin{figure}
	\includegraphics[width=0.5\textwidth]{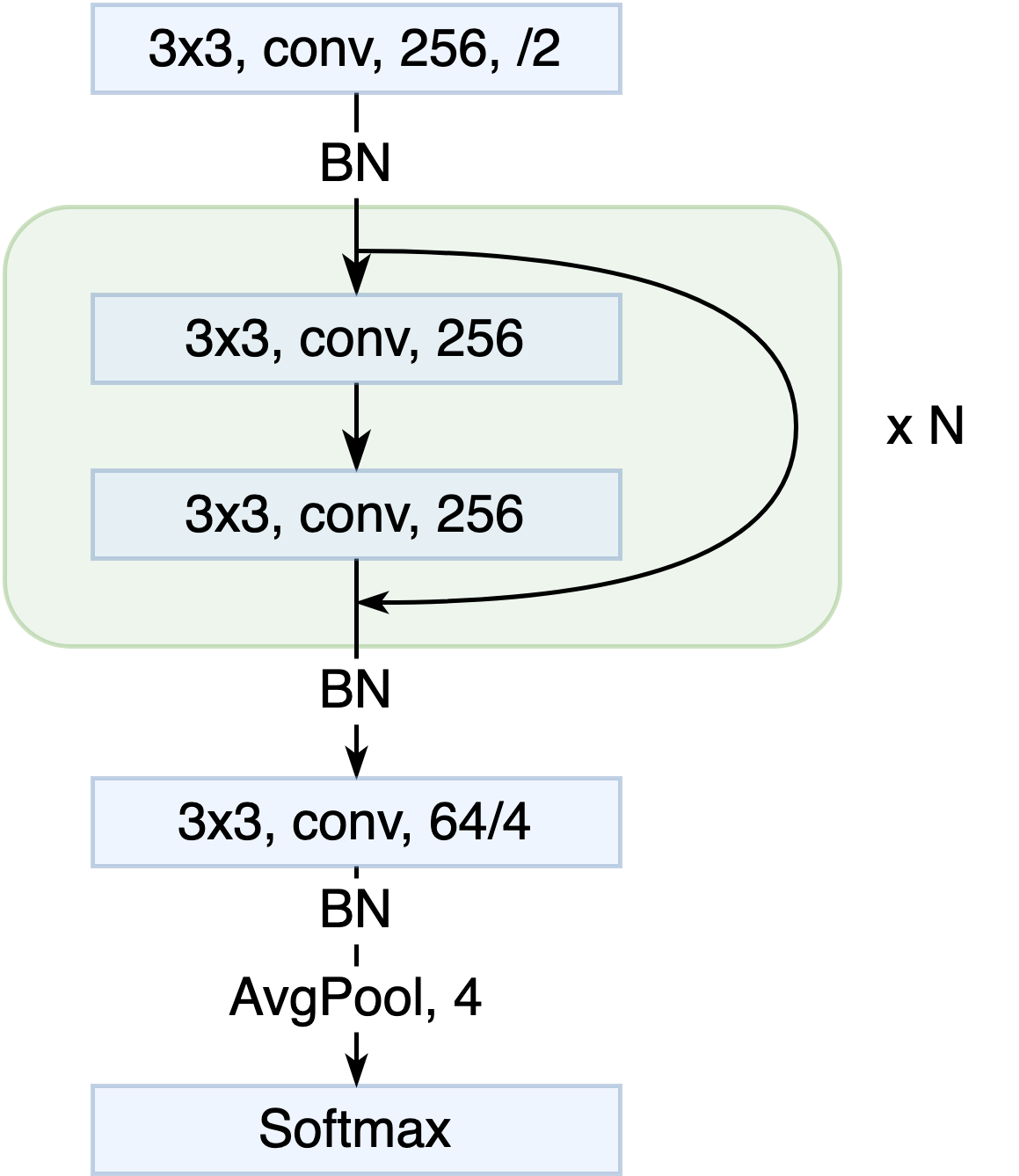}
	\caption{CNN Topology for Cifar10}
	\label{figCNN}
\end{figure}
\begin{figure}
	\includegraphics[width=\textwidth]{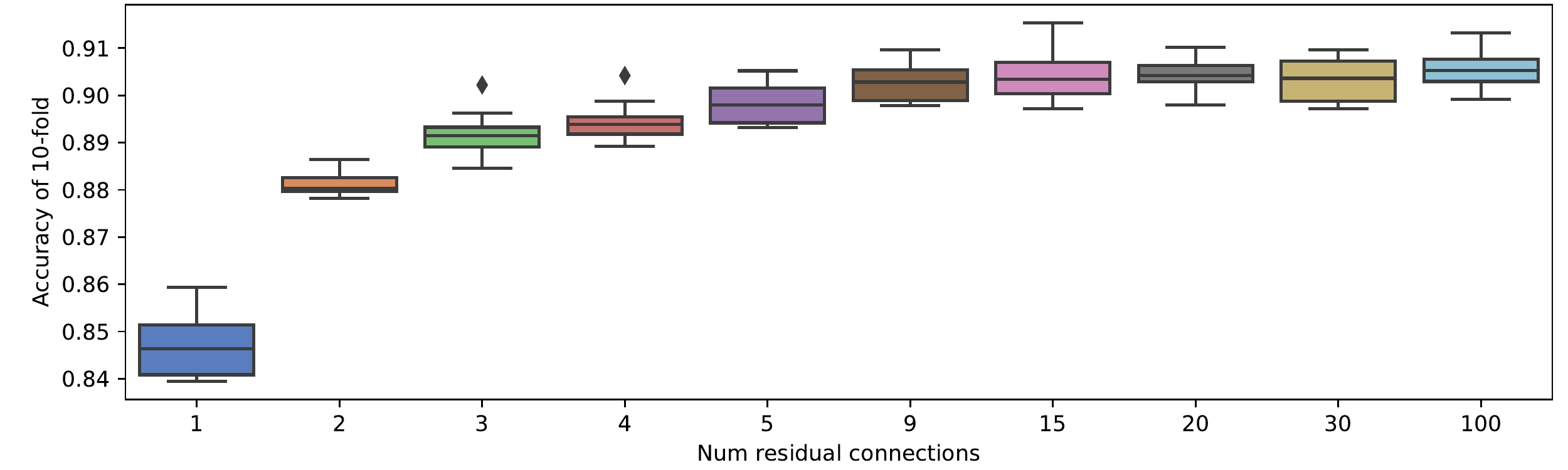}
	\caption{Standard Box-plot of 10 fold cross validation result on Cifar10 with different number of residual connections using the topology in \cref{figCNN}.}
	\label{figCNNResult}
\end{figure}

\subsection{Starlike Annuli dataset}
An interesting observation that was made in \cite{DDW}, is that certain simple functions cannot be represented as a neural ODE of the type defined in \cref{neuralODE--}. In \cite{DDW} the authors give the following example. Let $0 < r_1 < r_2 < r_3$ and let $g:\R^d \to \R$ be such that
\begin{align}
	\label{eq:neuralODEBad}
	\begin{cases}
		g(x) = -1, & \text{if $\|x\|\leq r_1$} \\
		g(x) = 1, & \text{if $r_2 \leq \|x\|\leq r_3$}.
	\end{cases}
\end{align}
To attempt to represent $g$ using a neural ODE $h_t$ as in \cref{neuralODE---} we compose the output of the neural ODE $h_1(x)$ with a linear map $\mathcal{L}:\R^d \to \R$. In \cite{DDW} they prove that $h_1(x)$ is a homeomorphism.
In order for $\mathcal{L}(h_1(x)) = g(x)$ to hold, the neural ODE $h_1(x)$, which is a homeomporhism, have to map the ball $B_{r_1}$ to one side of a hyper-plane and the outer annuli to the other side, which is impossible. For the numerical simulations in \cite{DDW} the authors consider $g$ as defining a binary classification problem. The data-points are sampled uniformly at random from the two classes defined by $g$ with a 50/50 class balance. When solving this binary classification problem using the neural ODE model, the authors in \cite{DDW} noted that the corresponding ODEs are numerically hard to solve and require many costly function evaluations. They propose a solution to this problem by simply embedding the dataset in a bigger space, for instance $\R^d \to \R^{d+1}$. This solves the problem of representability, in the sense above, and the corresponding classification problem becomes easy to solve using Neural ODEs. The process of embedding the dataset in a bigger space is called augmentation.

\subsubsection*{\textbf{Dataset}}
Our purpose is to compare the convergence rate of the risk for non-representable problems vs representable problems, in the sense above. As such, a comparison between \cref{eq:neuralODEBad} and its augmentation makes sense. However, the augmented version of \cref{eq:neuralODEBad} is very easy as it is essentially a conical solution, which is easy to represent with a neural network using ReLU activation functions. We therefore consider a related dataset that is harder than \cref{eq:neuralODEBad} even in the augmented case, we call this dataset the Starlike Annuli. To describe our construction, see \cref{figStar}, we let $r_1 < r_2 < r_3$ and define, using polar coordinates,
\begin{equation} \label{eq:starlikeAnnuli}
	g(r,\theta) =
	\begin{cases}
		-1, & \text{ if } r \leq r_1 (2+\cos(5\theta)) \\
		1, & \text{ if } r_2 (2+\cos(5\theta)) \leq r \leq r_3 (2+\cos(5\theta)). \\
	\end{cases}
\end{equation}
Here the difference $r_3-r_1$ denotes the class separation. The data is generated by sampling uniformly at random from each of the two class regions. The augmented version is the dataset embedded in 3 dimensions. For our experiments we chose $r_1 = 1$, $r_2 = 1.5$, and $r_3=3$.

\subsubsection*{\textbf{Network topology}}
The topology of our network is \cref{nnspec} and iterated as in \cref{resnet3}, i.e.\ 
\begin{align*}
	f_\theta(x) = K^{(1)} \sigma (K^{(2)} x + b^{(2)}) + b^{(1)}
\end{align*}
where $K^{(2)} \in \R^{m} \times \R^{2+n}$, $b^{(2)} \in \R^m$, $K^{(1)} \in \R^{2+n} \times \R^m$, $b^{(1)} \in \R^{2+n}$ and $\sigma$ is the ReLU activation function.
Following \cite{DDW},  $n \geq 0$ is the number of augmented dimensions, $m > 0$ is the size of the inner layer. In our experiments we chose $m=16$ and $n=0,1$. At the end of the network we have a softmax layer. We are using a quadratic penalization, as in the theoretical result \cref{thm:modified_risk}, with constant $\gamma = 0.001, \lambda=1$. 

\subsubsection*{\textbf{Experimental setup}}
The experimental setup is as follows. The Starlike Annuli dataset consists of 50k points drawn uniformly at random from \cref{eq:starlikeAnnuli} with a 50/50 class balance. As in the Cifar10 experiment, we estimate generalization by performing 10-fold cross validation on the training set itself, for $N = [1,2,5,10,20,100,200]$.
We use a weight decay of $0$ and a momentum of $0.9$, and we use the weight initialization in \cite{Glorot}. These models are trained with a batch size of $128$ on a single GPU (RTX 2080ti). We start with a learning rate of $0.1$, divide it by $10$ at $80$, $120$, $160$ and by $2$ at $180$ epochs where an epoch consists of 45k/batchsize. Training is terminated at $200$ epochs. We ran one experiment with zero augmented dimensions and one experiment with one augmented dimension.

\subsubsection*{\textbf{Results}}
The results are depicted in \cref{figAnnulus,figAnnulusAug}. We chose to display the Cross Entropy instead of accuracy for visual clarity. In the non-augmented case, results displayed in \cref{figAnnulus}, it is interesting to note a few things. First of all the Cross-Entropy (Risk) is increasing with the layer count, indicating that deeper is worse for this problem. This is reasonable to expect as \cref{eq:starlikeAnnuli} is of the same type as \cref{eq:neuralODEBad} for the limit problem (NODE). Thus, it is non-representable, in the sense stated previously, within the NODE function class, \cite{DDW}. Secondly the convergence rate of the risk is indeed very close to the theoretical bound $1/N$. 

On the other hand, the one augmented dimension problem, see \cref{figAnnulusAug}, tells a different story. For $N \geq 2$ we see no noticeable difference in performance, and we seem to gain nothing from the added depth. We interpret this as to indicate that the convergence rate is very fast and that we essentially reach the NODE performance using only a small value of $N$.

\subsection{Conclusion}

It is interesting to observe that for some problems the performance either increase or stay the same as depth increase, see \cref{figCNNResult,figAnnulusAug}, while for others the performance decrease significantly, see \cref{figAnnulus}.

What is the reason for this? Looking at this from the context of regularity, we note that the deeper the network is, the closer we are to the NODE, \cref{thm:modified_risk}. This forces the trajectories to become smoother and smoother, and we see that depth is a regularizer for the learning problem. On one hand, the idea of regularization is supported by the observations above, especially \cref{figAnnulus}. On the other hand, for the experiments regarding Cifar10 we saw a similar increase in training accuracy for increasing depth, as we saw in the test accuracy, \cref{figCNNResult}. This contradicts the idea of regularization. Putting these observations together we see that regularization alone cannot explain our numerical simulations. We believe that there is likely a complicated interplay between the effect that the depth has on the `loss landscape', \cite{LiLoss}, and it's regularizing effects.

\begin{figure}
	\includegraphics[width=0.7\textwidth]{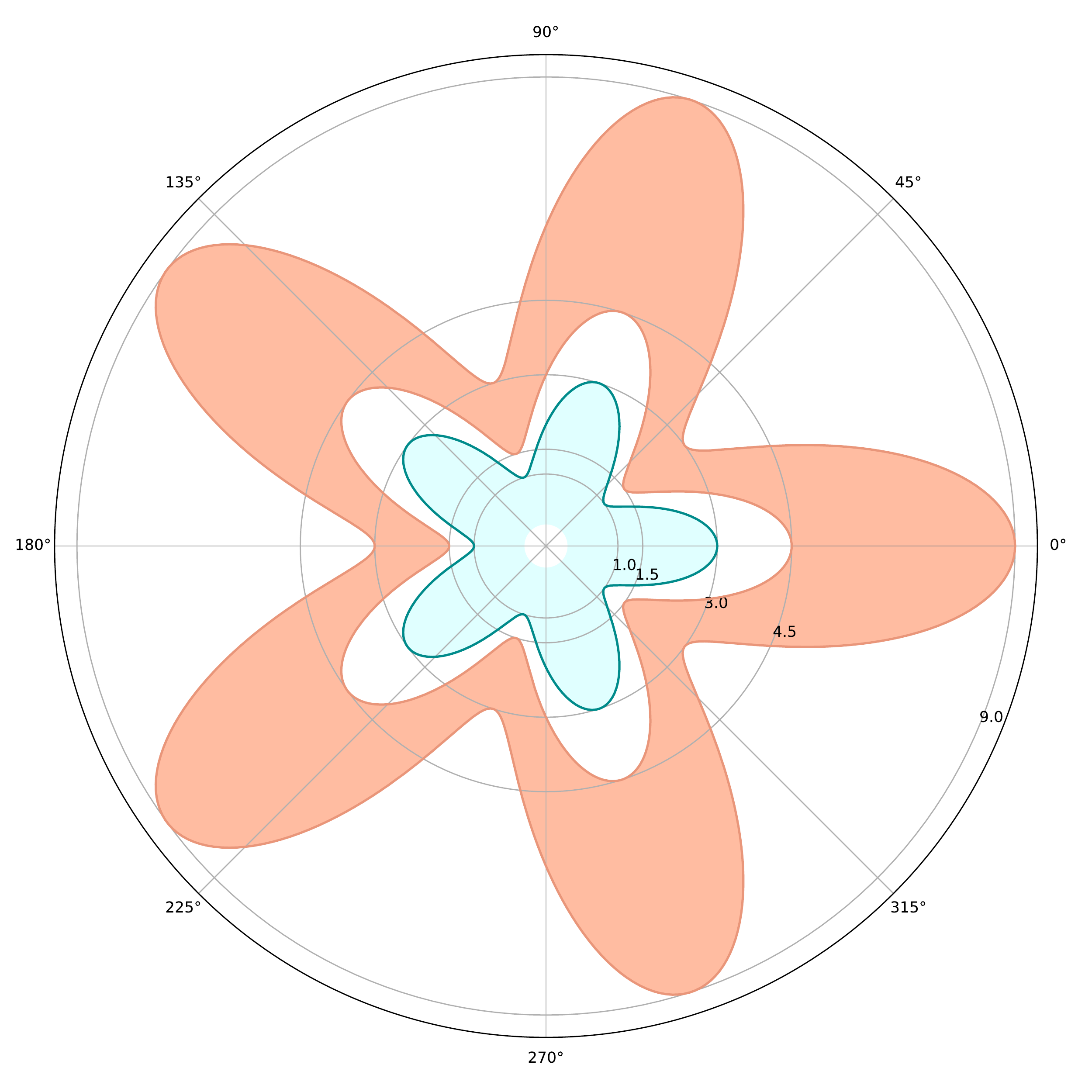}
	\caption{Annuli dataset}
	\label{figStar}
\end{figure}
\begin{figure}
	\includegraphics[width=\textwidth]{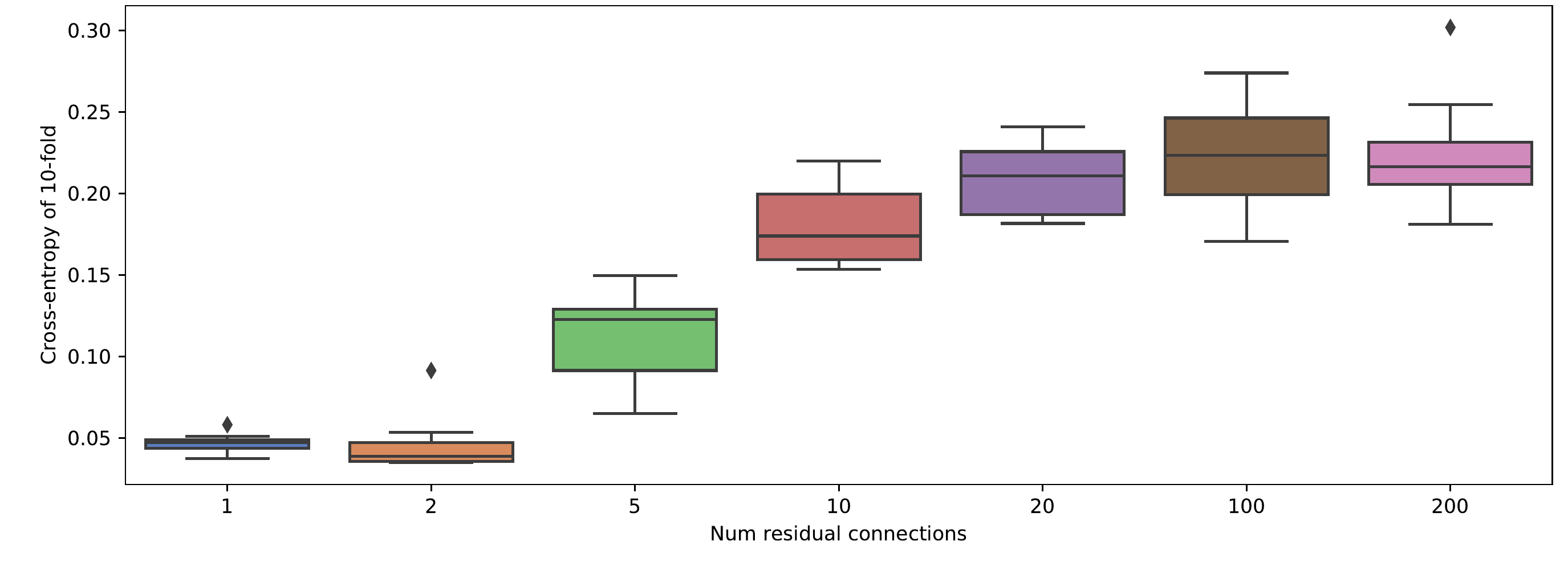}
	\caption{10 fold cross validation result on the starlike annulus dataset}
	\label{figAnnulus}
\end{figure}
\begin{figure}
	\includegraphics[width=\textwidth]{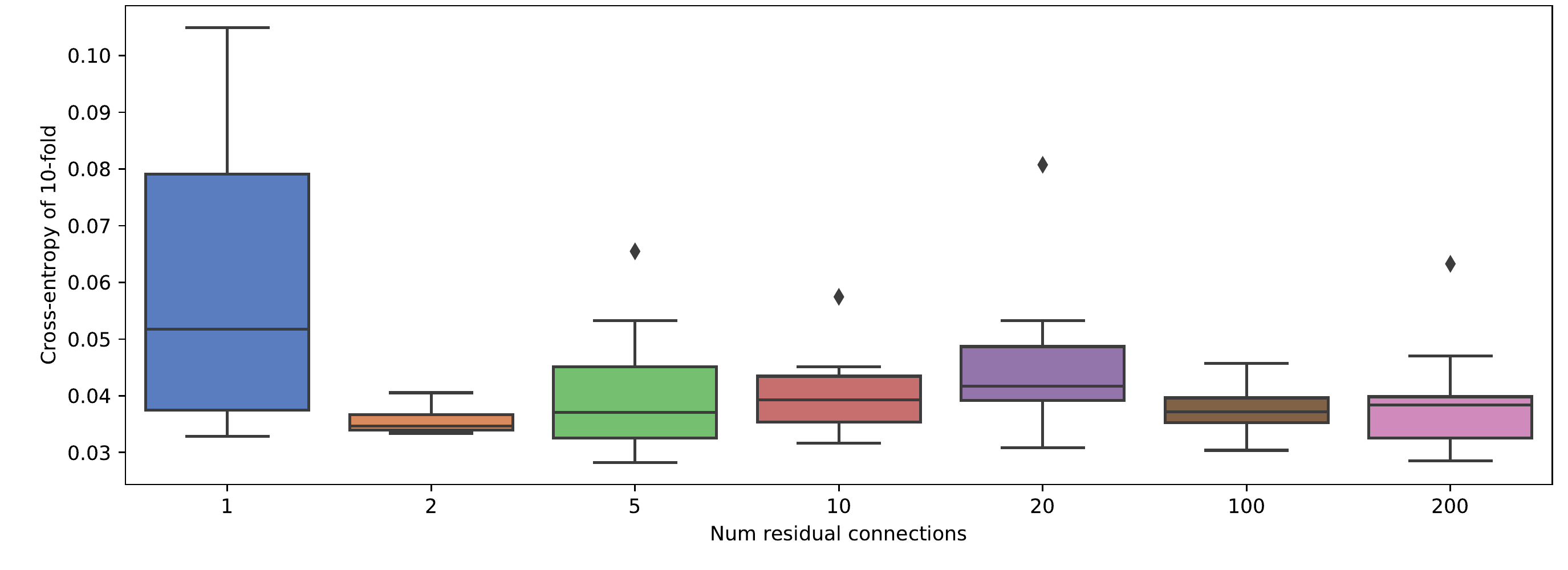}
	\caption{10 fold cross validation result on the annulus dataset with 1 augmented dimension}
	\label{figAnnulusAug}
\end{figure}

\section{Concluding remarks}\label{sec:conclusion}

In this paper we have established several convergence results, \cref{thm:penalization,thm:modified_risk,thm:modified_risk_msq}, which together give a theoretical foundation for considering Neural ODEs (NODEs) as the deep limit of
ResNets. The rate of convergence proved is essentially sharp as it is the order of the discretization error for the ODE. Our results can be seen as a complement to the result in \cite{TVG}, where the authors consider a related but different problem. However, in \cite{TVG} the authors only consider the convergence of
minimizers and not the convergence of the actual optimization procedure. They conjecture that minimizers also converge with the rate $1/N$, which is what we prove for the stochastic gradient descent. It should also be noted that we in this paper assume that the gradient of the NODE can be computed exactly. This is in contrast to \cite{CRBD} where they used an adjoint method to compute an
approximate gradient. On the other hand, \cite{CRBD} contains no theoretical justifications. The error produced when applying the adjoint method to compute the gradient is not known and it is hard to tell how it would affect the convergence considered in \cref{thm:modified_risk}.

\cref{sec:SDE_FP} is of independent interest, and while the results of the section may be `folklore' we have provided this section as it rigorously establishes the existence and uniqueness of stochastic gradient descent as well as the connection to the corresponding Fokker-Planck and linear Fokker-Planck equation. Using what is established in \cref{sec:SDE_FP} we can also make the observation that, instead of considering the rate of convergence to minima of $V$ for
\begin{align*}
	d \theta_t = -\grad V(\theta_t) dt + \sqrt{2} dW_t,
\end{align*}
we can consider the convergence of the density $p_t$ of $\theta_t$
\begin{align*}
	p_t \to p_\infty(\theta) = c e^{-V}, \quad t \to \infty.
\end{align*}
Using \cref{eq:appFPtilde} we see that the corresponding solution $\tilde p = e^V p$ satisfies
\begin{align*}
	\partial_t (\tilde p_t-1) = L(\tilde p_t - 1).
\end{align*}
According to the Poincaré inequality for measures that satisfy logarithmic Sobolev inequalities, see \cite{VBook2},
\begin{align*}
	\frac{1}{2} \partial_t \|\tilde p_t-1 \|^2_{L_\mu^2} &= \langle \partial_t (\tilde p_t-1), (\tilde p_t-1) \rangle_{L_\mu^2} \\
	&= -\langle L(\tilde p_t - 1), (\tilde p_t - 1) \rangle_{L_\mu^2} = -\|\grad (\tilde p_t - 1)\|^2_{L_\mu^2} \\
	&\leq - c \|\tilde p_t - 1\|^2_{L_\mu^2},
\end{align*}
using \cref{eq:self_adj}. Now, by Gr{\"o}nwall's lemma we see that
\begin{align*}
	\|\tilde p_t-1 \|^2_{L_\mu^2} \leq e^{-ct} \|\tilde p_0-1 \|^2_{L_\mu^2}.
\end{align*}
The conclusion is that $p_t \to p_\infty$ exponentially fast as $t \to \infty$.
So instead of minimizing the possibly non-convex function $V$ we see that stochastic gradient descent is actually minimizing a convex problem in the sense of distributions. This could explain why stochastic gradient descent performs so well in practice.

In all our proofs, and in fact even for the existence of strong solutions to the stochastic gradient descent, \cref{sgdsys}, we need a penalization term. In particular, this is the case for neural networks with more than one layer, in this case we cannot simply guarantee from the neural network that a condition such as \cref{A3} holds. Instead, we have to rely on bounds concerning the absolute value of the drift, which grows polynomially, the order of the polynomial being the number of layers. For the Neural ODE it is clear that for the problem in \cref{eq:neuralODEBad} one would need $\|\theta\| \to \infty$ in order to produce a zero risk, and hence the penalization is necessary. However, the practical implications of such a penalization is essentially non-existent. This is because we can choose a penalization that is zero on a ball of arbitrary size, for instance $B(0,1.8 \cdot 10^{308})$, which is the largest value for an IEEE \textbf{754} 64--bit double.

\vspace{0.5cm}

\noindent {\bf Acknowledgment} The authors were partially supported by a grant from the G{\"o}ran Gustafsson Foundation for Research in Natural Sciences and Medicine.

\bibliographystyle{acm}
\bibliography{references}

\end{document}